\def\ouralg{\textsc{Hyperband}\xspace}
\def\succhalv{\textsc{SuccessiveHalving}\xspace}
\LetLtxMacro{\originaleqref}{\eqref}
\renewcommand{\eqref}{Eq.~\originaleqref}
\newcommand{\X}{\mathcal{X}}
\newtheorem{assumption}{Assumption}
\def\E{\mathbb{E}}
\def\1{\mathbf{1}}
\def\P{\mathbb{P}}
\def\R{\mathbb{R}}
\newcommand{\Rmax}{R}
\begin{document}
\title{\ouralg: A Novel Bandit-Based Approach to Hyperparameter Optimization}

\author{\name Lisha Li  \email lishal@cs.cmu.edu \\
  \addr Carnegie Mellon University, Pittsburgh, PA 15213 \\
  \name{Kevin Jamieson} \email{jamieson@cs.washington.edu} \\
  \addr University of Washington, Seattle, WA 98195\\
  \name{Giulia DeSalvo} \email{giuliad@google.com} \\
  \addr Google Research, New York, NY 10011\\
  \name{Afshin Rostamizadeh} \email{rostami@google.com}  \\
  \addr Google Research, New York, NY 10011\\
  \name{Ameet Talwalkar} \email{talwalkar@cmu.edu}\\
  \addr Carnegie Mellon University, Pittsburgh, PA 15213\\
  \addr Determined AI 
}

\editor{Nando de Freitas}

\maketitle

\begin{abstract}%
Performance of machine learning algorithms depends critically on identifying a
good set of hyperparameters. While recent approaches use Bayesian optimization to adaptively select configurations, we focus on speeding up random search through adaptive resource allocation and early-stopping.  We
formulate hyperparameter optimization as a pure-exploration non-stochastic
infinite-armed bandit problem where a predefined resource like iterations, data samples, or features is allocated to randomly sampled configurations.  We introduce a novel algorithm, \ouralg , for this framework and analyze its theoretical properties, providing several desirable guarantees.  Furthermore, we compare \ouralg with
popular Bayesian optimization methods on a suite of hyperparameter optimization problems. 
We observe that \ouralg can provide over an order-of-magnitude speedup over our competitor set on a variety of deep-learning and kernel-based learning problems.  

\end{abstract}
\begin{keywords}
hyperparameter optimization, model selection, infinite-armed bandits, online optimization, deep learning
\end{keywords}

\section{Introduction}\label{sec:intro}

In recent years, machine learning models have exploded in complexity and expressibility at the price of staggering computational costs. 
Moreover, the growing number of tuning parameters associated with these models are difficult to set by standard optimization techniques.  
These ``hyperparameters'' are inputs to a machine learning algorithm that govern how the algorithm's performance generalizes to new, unseen data; examples of hyperparameters include those that impact model architecture, amount of regularization, and learning rates.
The quality of a predictive model critically depends on its hyperparameter configuration, but it is poorly understood how these hyperparameters interact with each other to affect the resulting model. Consequently, practitioners often default to brute-force methods like random search and grid search \citep{Bergstra2012}.

In an effort to develop more efficient search methods, the problem of hyperparameter optimization has recently been dominated by  {\em Bayesian optimization} methods ~\citep{Snoek2012, Hutter2011, Bergstra2011} that focus on optimizing hyperparameter \emph{configuration selection}.  These methods  aim to identify good
configurations more quickly than standard baselines like random search by selecting configurations in
an adaptive manner; see Figure~\ref{fig:config_selection}. 
Existing empirical evidence suggests that these methods
outperform random search~\citep{Thornton2013,Eggensperger2013,Snoek2015}. However, these methods tackle the fundamentally challenging problem of simultaneously fitting and optimizing a high-dimensional, non-convex function with unknown smoothness, and possibly noisy evaluations.  
\begin{figure}[t]
\centering
\subfigure[Configuration Selection]{%
\label{fig:config_selection}
\includegraphics[height=4cm,trim=10 10 10 30]{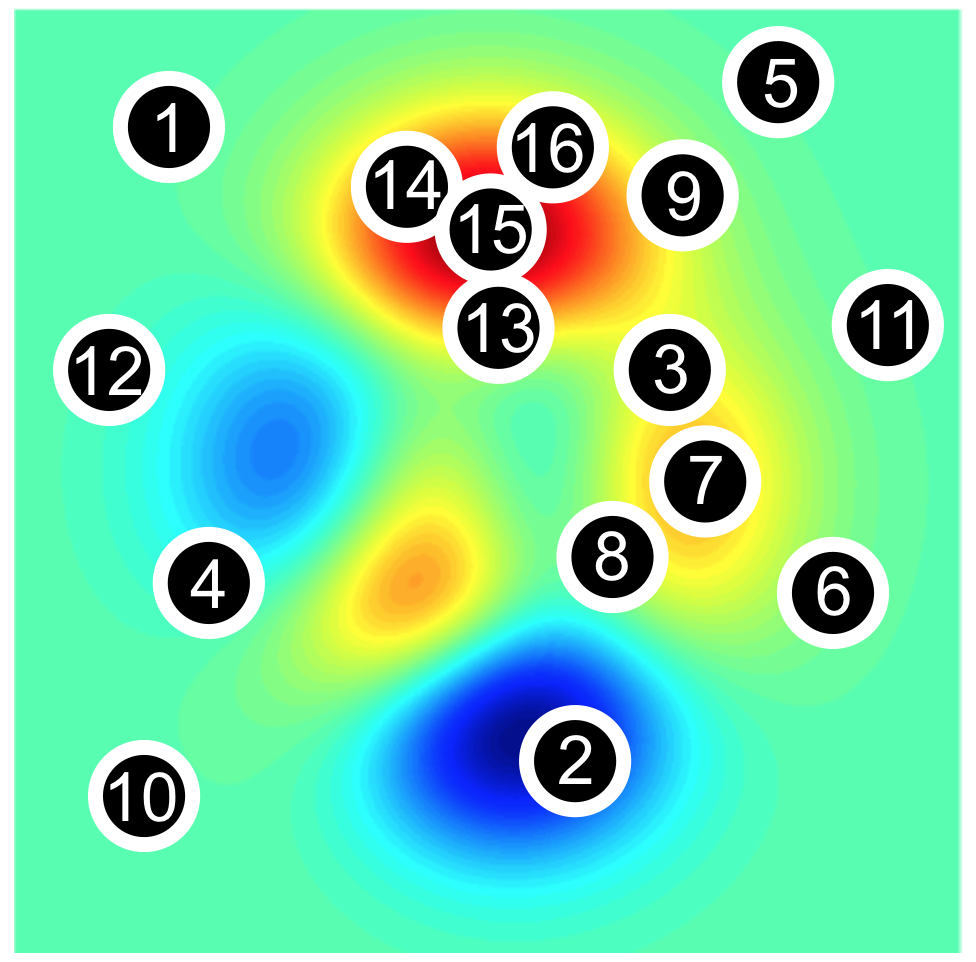}
}\qquad 
\subfigure[Configuration Evaluation]{%
\label{fig:config_eval}
\includegraphics[height=4cm,trim=10 10 10 30]{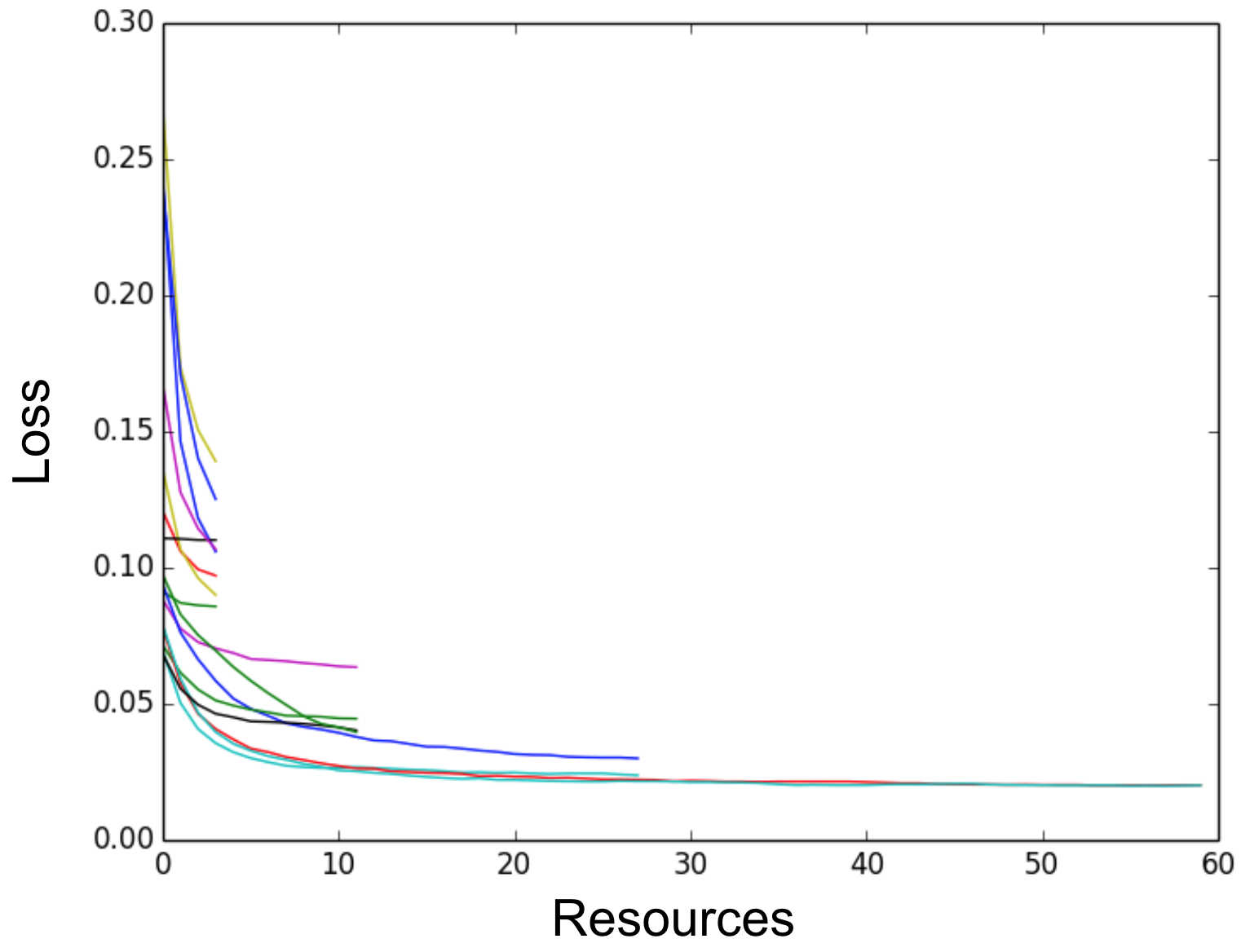}
}
\caption{(a) The heatmap shows the validation error over a two-dimensional search space with red corresponding to areas with lower validation error.  Configuration selection methods adaptively choose new configurations to train, proceeding in a sequential manner as indicated by the numbers.  (b) The plot shows the validation error as a function of the resources allocated to each configuration (i.e.\ each line in the plot).  Configuration evaluation methods allocate more resources to promising configurations.} \label{fig:adaptive_comparison}
\end{figure}

An orthogonal approach to hyperparameter optimization focuses on speeding up \emph{configuration evaluation}; see Figure~\ref{fig:config_eval}.
These approaches are adaptive in computation,
 allocating more resources to promising hyperparameter configurations while
quickly eliminating poor ones.  Resources can take various forms, including size of training set, number of features, or number of iterations for iterative algorithms.
By adaptively allocating resources, these approaches aim to examine orders-of-magnitude more hyperparameter
configurations than approaches that uniformly train all configurations to completion, thereby
quickly identifying good hyperparameters. 
While there are methods that combine Bayesian optimization with adaptive resource allocation \citep{Swersky2013Multi,swersky2014freeze,earlystopping2015,fabolas2017}, we focus on speeding up random search as it offers a simple 
and theoretically principled launching point \citep{Bergstra2012}.\footnote{Random search will asymptotically converge to the optimal configuration, regardless of the smoothness or structure of the function being optimized, by a simple covering argument.  While the rate of convergence for random search depends on the smoothness and is exponential in the number of dimensions in the search space, the same is true for Bayesian optimization methods without additional structural assumptions \citep{kandasamy2015}. } 

We develop a novel configuration evaluation approach by formulating hyperparameter optimization as a pure-exploration adaptive resource allocation problem addressing how to allocate resources among randomly sampled hyperparameter configurations.\footnote{A preliminary version of this work appeared in \citet{hyperband}.  We extend the previous paper with a thorough theoretical analysis of \ouralg; an infinite horizon version of the algorithm with application to stochastic infinite-armed bandits; additional intuition and discussion of \ouralg to facilitate its use in practice; and additional results on a collection of 117 multistage model selection tasks.}
Our procedure, \ouralg, relies on a principled early-stopping strategy to allocate resources, allowing it to evaluate orders-of-magnitude more configurations than black-box procedures like Bayesian optimization methods.  
\ouralg is a general-purpose technique that makes minimal assumptions unlike prior configuration
 evaluation approaches ~\citep{earlystopping2015,swersky2014freeze,GyorgyK11restart,agarwal2012oracle,Sparks2015,JamiesonTalwalkar2015}.

Our theoretical analysis demonstrates the ability of \ouralg to adapt to unknown convergence rates and to the behavior of  validation losses as a function of the hyperparameters.
In addition, \ouralg is $5\times$ to $30\times$  faster than popular Bayesian optimization algorithms on a variety of deep-learning and kernel-based learning problems. 
A theoretical contribution of this work is the introduction of the pure-exploration, infinite-armed bandit problem in the non-stochastic setting, for which \ouralg is one solution.
When \ouralg is applied to the special-case stochastic setting, we show that the algorithm comes within $\log$ factors of known lower bounds in both the infinite \citep{carpentier2015simple} and finite $K$-armed bandit settings \citep{kaufmann2015complexity}.

The paper is organized as follows.  Section~\ref{sec:related} summarizes related work in two areas: (1) hyperparameter optimization, and (2) pure-exploration bandit problems.  Section~\ref{sec:algorithm} describes \ouralg and provides intuition for the algorithm through a detailed example.  In Section~\ref{sec:experiments}, we present a wide range of empirical results comparing \ouralg with state-of-the-art competitors.  Section~\ref{sec:theory} frames the hyperparameter optimization problem as an infinite-armed bandit problem and summarizes the theoretical results for \ouralg.    Finally, Section~\ref{sec:extensions} discusses possible extensions of \ouralg.


\section{Related Work}\label{sec:related}
In Section~\ref{sec:intro}, we briefly discussed related work in the hyperparameter optimization literature. Here, we provide a more thorough coverage of the prior work, and also summarize significant related work on bandit problems. 
\subsection{Hyperparameter Optimization}\label{ssec:related_hyper}
Bayesian optimization techniques model the
conditional probability $p( y| \lambda)  $  of a configuration's performance on an evaluation metric $y$ (i.e., test accuracy), given a
set of hyperparameters $\lambda$.   Sequential Model-based Algorithm Configuration
(SMAC), Tree-structure Parzen Estimator (TPE), and Spearmint are three well-established methods \citep{Feurer2014}.  SMAC  uses random forests to model  $p( y| \lambda) $  as a Gaussian distribution  \citep{Hutter2011}.  TPE is a
non-standard Bayesian optimization algorithm based on tree-structured Parzen
density estimators \citep{Bergstra2011}.  Lastly, Spearmint uses Gaussian processes (GP)
to model $p( y | \lambda) $ and performs slice sampling over the GP’s
hyperparameters \citep{Snoek2012}.  

Previous work compared the relative performance of these Bayesian searchers
\citep{Thornton2013,Eggensperger2013,Bergstra2011,Snoek2012,Feurer2014,Feurer2015}. 
An extensive survey of these three methods by \citet{Eggensperger2013} introduced a benchmark library for hyperparameter optimization called HPOlib, which we use for our experiments.
\citet{Bergstra2011} and \citet{Thornton2013} showed Bayesian optimization methods empirically outperform
random search on a few benchmark tasks.  However, for high-dimensional problems, standard Bayesian optimization methods perform similarly to random search \citep{wang2013}.  Recent methods specifically designed for high-dimensional problems assume a lower effective dimension for the problem \citep{wang2013} or an additive decomposition for the target function \citep{kandasamy2015}.  However, as can be expected, the performance of these methods is sensitive to required inputs; i.e.\ the effective dimension \citep{wang2013} or the number of additive components \citep{kandasamy2015}.

Gaussian processes have also been studied in the bandit setting using confidence bound acquisition functions (GP-UCB), with associated sublinear regret bounds \citep{srinivas2010,grunewalder2010}.  \citet{wang2016} improved upon GP-UCB by removing the need to tune a parameter that controls exploration and exploitation.  \citet{contal2014} derived a tighter regret bound than that for GP-UCB by using a mutual information acquisition function.  However, \citet{vander2011} showed that the learning rate of GPs are sensitive to the definition of the prior through an example with a poor prior where the learning rate degraded from polynomial to logarithmic in the number of observations $n$.  Additionally, without structural assumptions on the covariance matrix of the GP, fitting the posterior is $O(n^3)$ \citep{wilson2015}.  Hence, \citet{dngo2015} and \cite{bohamiann2016} proposed using Bayesian neural networks, which scale linearly with $n$, to model the posterior.

Adaptive configuration evaluation is not a new idea.  \citet{Maron1997} and \citet{mnih2008} considered a setting where the training time is relatively inexpensive (e.g., $k$-nearest-neighbor classification) and evaluation on a large validation set is accelerated by evaluating on an increasing subset of the validation set, stopping early configurations that are performing poorly.  Since subsets of the validation set provide unbiased estimates of its expected performance, this is an instance of the \emph{stochastic} best-arm identification problem for multi-armed bandits \citep[see the work by][for a brief survey]{jamieson2014best}.

In contrast, we address a setting where the evaluation time is relatively inexpensive and the goal is to early-stop long-running training procedures by evaluating partially trained models on the full validation set.  Previous approaches in this setting either require strong assumptions or use heuristics to perform adaptive resource allocation. 
\citet{GyorgyK11restart} and \citet{agarwal2012oracle} made parametric assumptions on the convergence behavior of training
algorithms, providing theoretical performance guarantees under these assumptions.
Unfortunately, these assumptions are often hard to verify, and
empirical performance can drastically suffer when they are violated.
 \citet{Krueger2015}
proposed a heuristic based on sequential analysis to determine stopping
times for training configurations on increasing subsets of the data. However,
the theoretical correctness and empirical performance of this method are highly dependent on a user-defined
``safety zone.''

Several hybrid methods  combining adaptive configuration selection and evaluation  have also been introduced~\citep{Swersky2013Multi,swersky2014freeze,earlystopping2015,multifid2016,fabolas2017,vizier2017}.  The algorithm proposed by \citet{Swersky2013Multi} uses a GP to learn correlation between related tasks and requires the subtasks as input, but efficient subtasks with high informativeness for the target task are unknown without prior knowledge. Similar to the work by \citet{Swersky2013Multi}, \citet{fabolas2017} modeled the conditional validation error as a Gaussian process using a kernel that
captures the covariance with downsampling rate to allow for adaptive evaluation.
\citet{swersky2014freeze}, \citet{earlystopping2015}, and \cite{fabolas2017} made parametric assumptions
on the convergence of learning curves to perform early-stopping.  In contrast, \citet{vizier2017} devised an early-stopping rule based on predicted performance from a nonparametric GP model with a kernel designed to measure the similarity between performance curves.  Finally, \citet{multifid2016} extended GP-UCB to allow for adaptive configuration
evaluation by defining subtasks that monotonically improve with more resources.  

In another line of work, \citet{Sparks2015} proposed a halving style bandit 
algorithm that did not require explicit convergence behavior, and
\citet{JamiesonTalwalkar2015} analyzed a similar algorithm originally proposed by \cite{karnin2013almost} for a different setting, providing
theoretical guarantees and encouraging empirical
results.   Unfortunately, these halving style algorithms suffer from the ``$n$ versus $B/n$'' problem, which we will discuss in Section~\ref{ssec:succhalv_nvb}. \ouralg addresses this issue and provides a robust, theoretically principled early-stopping algorithm for hyperparameter optimization.  

We note that \ouralg can be combined with any hyperparameter sampling approach and does not depend on random sampling; the theoretical results only assume the validation losses of sampled hyperparameter configurations are drawn from some stationary distribution.  In fact, subsequent to our submission, \citet{klein2017} combined adaptive configuration selection with \ouralg by using a Bayesian neural network to model learning curves and only selecting configurations with high predicted performance to input into \ouralg.  

\subsection{Bandit Problems}
Pure exploration bandit problems aim to minimize the simple regret, defined as the distance from the optimal solution, as quickly as possible in any given setting.  
  The
  pure-exploration multi-armed bandit problem has a long history in the stochastic setting
\citep{even2006action,bubeck2009pure}, and was recently extended to the
non-stochastic setting by \citet{JamiesonTalwalkar2015}.  Relatedly, the stochastic pure-exploration infinite-armed bandit problem was studied by \citet{carpentier2015simple}, where a pull of each arm $i$ yields an i.i.d. sample
in $[0,1]$ with expectation $\nu_i$, where $\nu_i$ is a loss drawn from a distribution with cumulative distribution function, $F$. Of course, the value of $\nu_i$ is unknown
to the player, so the only way to infer its value is to pull arm $i$ many times.
\citet{carpentier2015simple} proposed an anytime algorithm, and derived
a tight (up to polylog factors) upper bound on its error assuming what we will refer to as the $\beta$-parameterization of $F$ described in Section~\ref{sec:param}.  
However, their algorithm was derived specifically for the $\beta$-parameterization of $F$, and furthermore, they must estimate $\beta$ before running the
algorithm, limiting the algorithm's practical applicability.
Also, the algorithm assumes stochastic losses from the arms and thus the convergence behavior is known; consequently, it does not apply in our hyperparameter optimization
setting.\footnote{See the work by \citet{JamiesonTalwalkar2015} for detailed discussion motivating the
non-stochastic setting for hyperparameter optimization.} 
Two related lines of work that both make use of an underlying metric space are 
Gaussian process optimization \citep{srinivas2010} and $X$-armed bandits
\citep{bubeck2011x}, or bandits defined over a metric space. However,
these works either assume stochastic rewards or need to know something about the underlying function (e.g. an
appropriate kernel or level of smoothness).

In contrast, \ouralg 
is devised
for the non-stochastic setting and automatically adapts to unknown $F$ without
  making any parametric assumptions. Hence, we believe our work to be a generally
   applicable pure exploration algorithm for infinite-armed
  bandits.  To the best of our knowledge, this is also the first work to test out such an algorithm on a real application. 


\section{\ouralg Algorithm} \label{sec:algorithm}
In this section, we present the \ouralg algorithm.  We provide intuition for the algorithm, highlight the main ideas via a simple example that uses iterations as the adaptively allocated resource, 
and present a few guidelines on how to deploy \ouralg in practice.  

\subsection{Successive Halving}\label{ssec:succhalv_nvb}
\ouralg extends the \succhalv algorithm proposed for hyperparameter optimization by \citet{JamiesonTalwalkar2015} and calls it as a subroutine.  
The idea behind the original \succhalv algorithm follows directly from its name: uniformly allocate a budget to a set of hyperparameter configurations, evaluate the performance of all configurations, throw out the worst half, and repeat until one configuration remains.  
The algorithm allocates exponentially more resources to more promising configurations. 
Unfortunately, \succhalv requires the number of configurations $n$ as an input to the algorithm.  
Given some finite budget $B$ (e.g., an hour of training time to choose a
hyperparameter configuration), $B/n$ resources are allocated on average across the configurations.
However, for a fixed $B$, it is not clear a priori whether we should  
(a) consider many configurations (large $n$)
with a small average training time; or (b) consider a small
number of configurations (small $n$) with longer average training times.

\begin{figure}[t]
\centering
\includegraphics[height=4cm,trim=10 10 10 0]{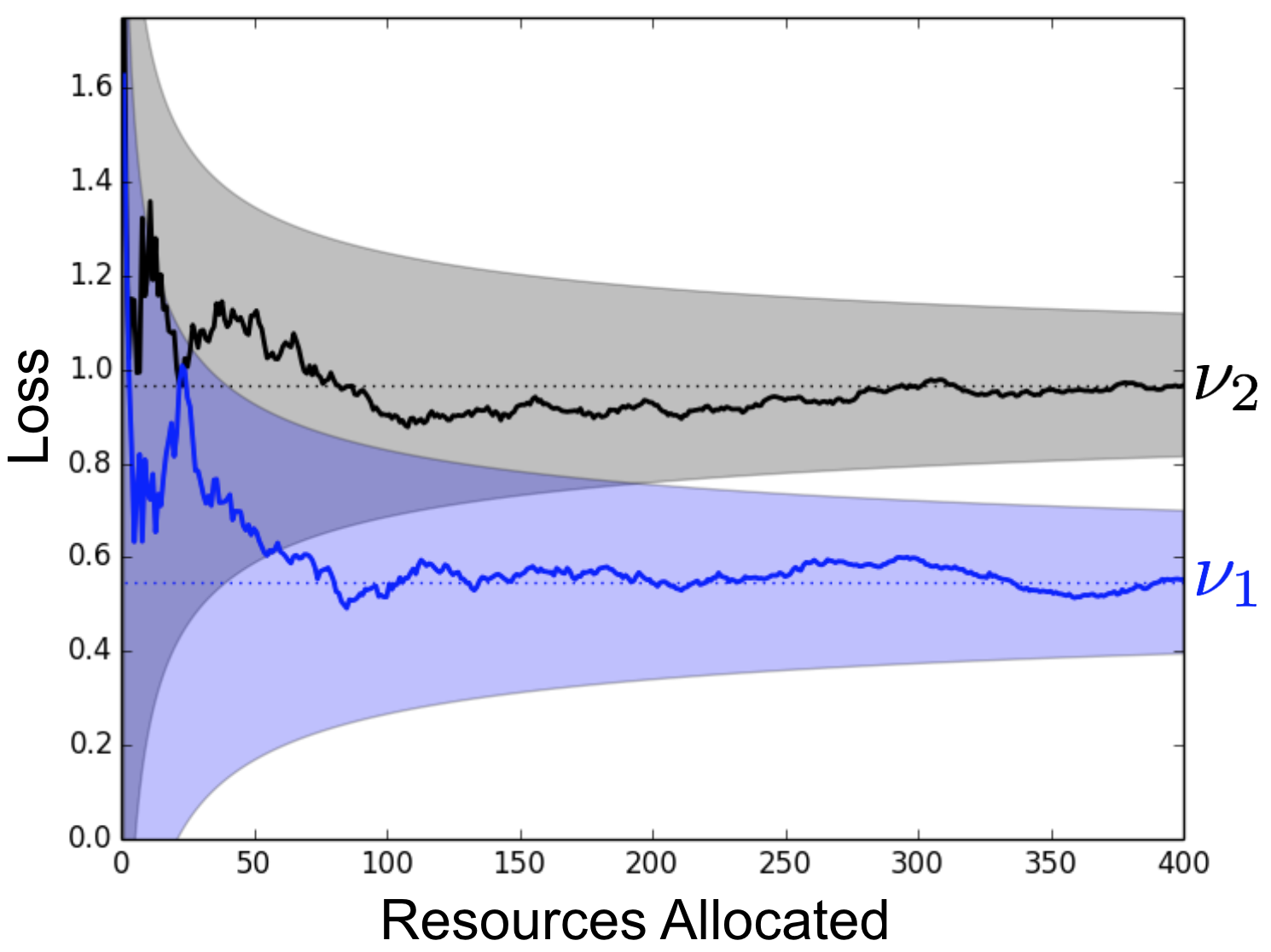}
\caption{The validation loss as a function of total resources allocated for two configurations is shown.  $\nu_1$ and $\nu_2$ represent the terminal validation losses at convergence.  The shaded areas bound the maximum distance of the intermediate losses from the terminal validation loss and monotonically decrease with the resource.} \label{fig:envelope}
\end{figure}

We use a simple example to better understand this tradeoff.  Figure~\ref{fig:envelope} shows the validation loss as a function of total resources allocated for two configurations with terminal validation losses $\nu_1$ and $\nu_2$.  The shaded areas bound the maximum deviation of the intermediate losses from the terminal validation loss and will be referred to as ``envelope'' functions.\footnote{These envelope functions are guaranteed to exist; see discussion in Section~\ref{sec:theory_assumptions} where we formally define these envelope (or $\gamma$) functions.}  It is possible to distinguish between the two configurations when the envelopes no longer overlap.  Simple arithmetic shows that this happens when the width of the envelopes is less than $\nu_2-\nu_1$, i.e., when the intermediate losses are guaranteed to be less than $\frac{\nu_2-\nu_1}{2}$ away from the terminal losses.  There are two takeaways from this observation: more resources are needed to differentiate between the two configurations when either (1) the envelope functions are wider or (2) the terminal losses are closer together.

However, in practice, the optimal allocation strategy is unknown because we do not have knowledge of the envelope functions nor the distribution of terminal losses.  Hence, if more resources are required
before configurations can differentiate themselves in terms of quality (e.g., if an iterative training method converges very slowly for a given data set or if randomly selected hyperparameter configurations perform similarly well),
then it would be reasonable to work with a small number of configurations. In contrast,
if the quality of a configuration is typically revealed after a small number of resources (e.g., if iterative training methods converge very
  quickly for a given data set or if randomly selected hyperparameter
  configurations are of low-quality with high probability), then $n$ is the
  bottleneck and we should choose $n$ to be large. 

Certainly, if meta-data or previous experience suggests that a certain tradeoff is likely to work well in practice, one should exploit that information and allocate the majority of resources to that tradeoff.  However, without this supplementary information,  practitioners are forced to make this tradeoff, severely hindering the applicability of existing configuration evaluation methods. 
  
\subsection{\ouralg}\label{ssec:ouralg}
\ouralg, shown in Algorithm \ref{alg:hyperband}, addresses this ``$n$ versus $B/n$'' problem by considering several possible values of $n$ for a fixed $B$, in essence performing a grid search over feasible value of $n$.
Associated with each value of $n$ is a minimum resource $r$ that is allocated to all configurations before some are discarded; a larger value of $n$ corresponds to a smaller $r$ and hence more aggressive early-stopping. 
  There are two components to \ouralg; (1) the inner loop invokes \succhalv for fixed values of $n$ and $r$ (lines 3--9) and (2) the outer loop iterates over different values of $n$ and $r$ (lines 1--2).
We will refer to each such run of \succhalv within \ouralg as a ``bracket.''
Each bracket is designed to use approximately $B$ total resources and corresponds to a different tradeoff between $n$ and $B/n$.
Hence, a single execution of \ouralg takes a finite budget of $(s_{max}+1) B$; we recommend repeating it indefinitely.    

\ouralg requires two inputs (1) $\Rmax$, the maximum amount of resource that can be allocated to a single configuration, and (2) $\eta$, an input that controls the proportion of configurations discarded in each round of \succhalv.  
The two inputs dictate how many different brackets are considered; specifically, $s_{\max}+1$ different values for $n$ are considered with  $s_{\max}=\lfloor\log_\eta(\Rmax)\rfloor$.    \ouralg begins with the most aggressive bracket $s=s_{\max}$, which sets $n$ to maximize exploration, subject to the constraint that at least one configuration is allocated $\Rmax$ resources. Each subsequent bracket reduces $n$ by a factor of approximately $\eta$ until the final bracket, $s=0$, in which every configuration is allocated $\Rmax$ resources (this bracket simply performs classical random search).  Hence, \ouralg performs a geometric search in the average budget per
configuration and removes the need to select $n$ for a fixed budget at the cost of approximately $s_{\max}+1$ times more work than running \succhalv for a single value of $n$.  By doing so, \ouralg is able to exploit situations in which adaptive allocation works well, while protecting itself in situations where more conservative allocations are required.

\begin{algorithm}[t]
	\SetKwInOut{Input}{input}
	\SetKwInOut{Init}{initialization}
	\Input{$\Rmax$, $\eta$ (default $\eta = 3$)} 
	\Init{$s_{\max} = \lfloor \log_\eta(\Rmax)\rfloor$, $B=(s_{\max}+1)\Rmax$}
	\For{$s \in \{s_{\max}, s_{\max} - 1, \ldots, 0\}$}{
		$n = \lceil \frac{B}{\Rmax} \frac{\eta^s}{(s+1)}  \rceil , \quad \quad r=\Rmax\eta^{-s}$\\
		\tcp{begin \succhalv with $(n,r)$ inner loop}
		$T=$\texttt{get\_hyperparameter\_configuration}$(n)$\\
		\For{$i\in\{0,\ldots,s\}$}{
			$n_i=\lfloor n\eta^{-i} \rfloor$\\
			$r_i=r\eta^{i}$\\
			$L=\{$\texttt{run\_then\_return\_val\_loss}$(t,r_i):t\in T\}$\\
			$T=$\texttt{top\_k}$(T,L, \lfloor n_i/\eta\rfloor )$
		}
	}
	\Return{Configuration with the smallest intermediate loss seen so far.}
	\\
	\caption{\ouralg algorithm for hyperparameter optimization.}
	\label{alg:hyperband}
\end{algorithm}

\ouralg requires the following methods to be defined for any given learning problem: 
\begin{itemize}
\item \texttt{get\_hyperparameter\_configuration($n$)} -- a function that returns a set of $n$ i.i.d. samples from some distribution defined over the hyperparameter configuration space.  In this work, we assume uniformly sampling of hyperparameters from a predefined space (i.e., hypercube with min and max bounds for each hyperparameter), which immediately yields consistency guarantees.  However, the more aligned the distribution is towards high quality hyperparameters (i.e., a useful prior), the better \ouralg will perform (see Section~\ref{sec:extensions} for further discussion).
\item \texttt{run\_then\_return\_val\_loss($t$, $r$)} -- a function that takes a hyperparameter configuration $t$ and resource allocation $r$ as input and returns the validation loss after training the configuration for the allocated resources.  
\item \texttt{top\_k(configs, losses, $k$)} -- a function that takes a set of configurations as well as their associated losses and returns the top $k$ performing configurations.
\end{itemize}

\subsection{Example Application with Iterations as a Resource: LeNet}\label{ssec:lenet}
We next present a concrete example to provide further intuition about \ouralg.  We work with the MNIST data set and optimize hyperparameters for the LeNet convolutional neural network trained using mini-batch stochastic gradient descent (SGD).\footnote{Code and description of algorithm used is available at \url{http://deeplearning.net/tutorial/lenet.html}.}  Our
 search space includes learning rate, batch size, and number of kernels for the two layers of the network as hyperparameters (details are shown in Table~\ref{tab:lenet} in Appendix~\ref{add_results}).

We define the resource allocated to each configuration to be number of iterations of SGD, with 
 one unit of resource corresponding to one epoch, i.e., a full pass over the data set.  We set $\Rmax$ to 81 and
use the default value of $\eta=3$,  resulting in $s_{\max}=4$ and thus 5 brackets of \succhalv  with different tradeoffs between $n$ and $B/n$. The resources allocated within each bracket are displayed in Table~\ref{table:brackets}.

\begin{table}[t]
\begin{center}
\begin{tabular}{|l|ll|ll|ll|ll|ll|}
	\hline
 & \multicolumn{2}{l|}{$s=4$} & \multicolumn{2}{l|}{$s=3$} & \multicolumn{2}{l|}{$s=2$} & \multicolumn{2}{l|}{$s=1$} & \multicolumn{2}{l|}{$s=0$} \\
$i$ & $n_i$ &  $r_i$ &  $n_i$ &  $r_i$ &  $n_i$ &  $r_i$ &  $n_i$ &  $r_i$ &  $n_i$ &  $r_i$  \\
\hline
0 & 81 & 1 & 27 & 3 & 9 & 9 & 6 & 27 & 5 & 81 \\         
1 & 27 & 3 & 9 &  9 & 3 & 27 & 2 & 81 & & \\        
2 & 9 & 9 & 3 & 27 & 1 & 81 & & & &\\       
3 & 3 & 27 & 1 & 81 & & & & & & \\
4 & 1  & 81 & & & & & & & & \\ \hline
\end{tabular}
\caption{The values of $n_i$ and $r_i$ for the brackets of \ouralg corresponding to various values of $s$, when $\Rmax = 81$ and $\eta = 3$.}
\label{table:brackets}
\end{center}
\end{table}

\begin{figure}[h!]
\centering
\includegraphics[trim=0 20 0 20, height=5cm]{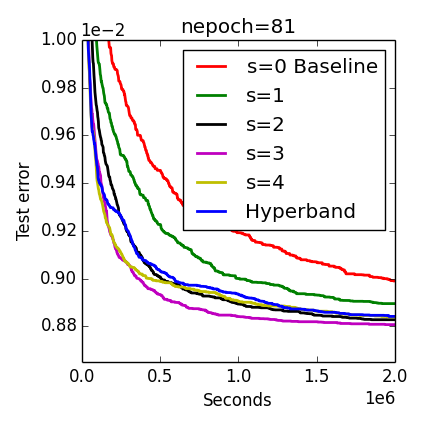}
\captionof{figure}{Performance of individual brackets $s$ and \ouralg.}
\label{fig:s_comparison}
\end{figure}

Figure~\ref{fig:s_comparison} shows an empirical comparison of the average test error across 70 trials of the individual brackets of \ouralg run separately as well as standard \ouralg.
In practice, we do not know a priori which bracket $s \in \{0,\dots,4\}$ will be most effective in identifying good hyperparameters, and in this case neither the most ($s=4$) nor least aggressive ($s=0$) setting is optimal.  
However, we note that \ouralg does nearly as well as the optimal bracket ($s=3$) and outperforms the baseline uniform allocation (i.e., random search), which is equivalent to bracket $s=0$.

\subsection{Different Types of Resources}
While the previous example focused on iterations as the resource, \ouralg naturally generalizes to various types of resources:
\begin{itemize}
\item {\bf Time} -- Early-stopping in terms of time can be preferred when various hyperparameter configurations differ in training time and the practitioner's chief goal is to find a good hyperparameter setting in a fixed wall-clock time.  
For instance, training time could be used as a resource to quickly terminate straggler jobs in distributed computation environments.

\item {\bf Data Set Subsampling} -- Here we consider the setting of a black-box batch training algorithm that takes a data set as input and outputs a model. In this setting, we treat the resource as the size of a random subset of the data set with $\Rmax$ corresponding to the full data set size. 
Subsampling data set sizes using \ouralg, especially for problems with super-linear training times like kernel methods, can provide substantial speedups.
 
\item {\bf Feature Subsampling} -- Random features or Nystr\"om-like methods are popular methods for approximating kernels for machine learning applications \citep{randomfeatures2007}. In image processing, especially deep-learning applications, filters are usually sampled randomly, with the number of filters having an impact on the performance. Downsampling the number of features is a common tool used when hand-tuning hyperparameters; \ouralg can formalize this heuristic.
\end{itemize}

\subsection{Setting $\Rmax$}
The resource $\Rmax$ and $\eta$ (which we address next) are the only required inputs to \ouralg. As mentioned in Section~\ref{ssec:ouralg}, $\Rmax$ represents the maximum amount of resources that can be allocated to any given configuration.  In most cases, there is a natural upper bound on the maximum budget per configuration that is often dictated by the resource type (e.g., training set size for data set downsampling; limitations based on memory constraint for feature downsampling; rule of thumb regarding number of epochs when iteratively training neural networks). 
If there is a range of possible values for $\Rmax$, a smaller $\Rmax$ will give a result faster (since the budget $B$ for each bracket is a multiple of $\Rmax$), but a larger $\Rmax$ will give a better guarantee of successfully differentiating between the configurations.  

Moreover, for settings in which either $\Rmax$ is unknown or not desired, we provide an infinite horizon version of \ouralg in Section~\ref{sec:theory}.  
This version of the algorithm doubles the budget over time, $B \in \{2, 4, 8, 16, \ldots\}$, and for each $B$, tries all possible values of $n \in \big\{ 2^k : k \in \{1,\ldots,\log_2(B)\} \big\}$. 
For each combination of $B$ and $n$, the algorithm runs an instance of the (infinite horizon) \succhalv algorithm, which implicitly sets $\Rmax=\frac{B}{2\log_2(n)}$, thereby growing $\Rmax$ as $B$ increases.
The main difference between the infinite horizon algorithm and Algorithm~\ref{alg:hyperband} is that the number of unique brackets grows over time instead of staying constant with each outer loop.
We will analyze this version of \ouralg in more detail in Section~\ref{sec:theory} and use it as the launching point for the theoretical analysis of standard (finite horizon) \ouralg.

Note that $\Rmax$ is also the number of configurations evaluated in the bracket that performs the most exploration, i.e $s=s_{\max}$.  In practice one may want $n \leq n_{\max}$ to limit overhead associated with training many configurations on a small budget, i.e., costs associated with initialization, loading a model, and validation.  In this case, set $s_{\max} = \lfloor \log_\eta(n_{\max})\rfloor$.  Alternatively, one can redefine one unit of resource so that $\Rmax$ is artificially smaller (i.e., if the desired maximum iteration is 100k, defining one unit of resource to be 100 iterations will give $\Rmax=1,000$, whereas defining one unit to be 1k iterations will give $\Rmax=100$).  Thus, one unit of resource can be interpreted as the minimum desired resource and $\Rmax$ as the ratio between maximum resource and minimum resource.  

\subsection{Setting $\eta$}
\label{sec:eta} 
 The value of $\eta$ is a knob that can be tuned based on {\em practical} user constraints.  Larger values of $\eta$ correspond to more aggressive elimination schedules and thus fewer rounds of elimination; specifically, each round retains $1/\eta$ configurations for a total of $\lfloor\log_\eta(n)\rfloor+1$ rounds of elimination with $n$ configurations.  
If one wishes to receive a result faster at the cost of a sub-optimal asymptotic constant, one can increase $\eta$ to reduce the budget per bracket $B=(\lfloor\log_\eta(\Rmax)\rfloor+1)\Rmax$.  We stress that results are not very sensitive to the choice of $\eta$. If our theoretical bounds are optimized (see Section~\ref{sec:theory}), they suggest choosing $\eta = e \approx 2.718$, but in practice we suggest taking $\eta$ to be equal to 3 or 4.  

Tuning $\eta$ will also change the number of brackets and consequently the number of different tradeoffs that \ouralg tries.  
 Usually, the possible range of brackets is fairly constrained, since the number of brackets is logarithmic in $\Rmax$; namely, there are $(\lfloor\log_\eta(\Rmax)\rfloor+1)=s_{\max}+1$ brackets.  For our experiments in Section~\ref{sec:experiments}, we chose $\eta$ to provide 5 brackets for the specified $R$; for most problems, 5 is a reasonable number of $n$ versus $B/n$ tradeoffs to explore.  However, for large $\Rmax$, using $\eta=3$ or 4 can give more brackets than desired.  The number of brackets can be controlled in a few ways.  First, as mentioned in the previous section, if $\Rmax$ is too large and overhead is an issue, then one may want to control the overhead by limiting the maximum number of configurations to $n_{\max}$, thereby also limiting $s_{\max}$. If overhead is not a concern and aggressive exploration is desired, one can (1) increase $\eta$ to reduce the number of brackets while maintaining $\Rmax$ as the maximum number of configurations in the most exploratory bracket, or (2) still use $\eta=3$ or 4 but only try brackets that do a baseline level of exploration, i.e., set $n_{\min}$ and only try brackets from $s_{\max}$ to $s=\lfloor\log_\eta(n_{\min})\rfloor$.  For computationally intensive problems that have long training times and high-dimensional search spaces, we recommend the latter.  Intuitively, if the number of configurations that can be trained to completion (i.e., trained using $\Rmax$ resources) in a reasonable amount of time is on the order of the dimension of the search space and not exponential in the dimension, then it will be impossible to find a good configuration without using an aggressive exploratory tradeoff between $n$ and $B/n$.

\subsection{Overview of Theoretical Results}
\label{sec:theory_overview} 
The theoretical properties of \ouralg are best demonstrated through an example.  Suppose there are $n$ configurations, each with a given terminal validation error $\nu_i$ for $i=1,\dots,n$.  
Without loss of generality, index the configurations by performance so that $\nu_1$ corresponds to the best performing configuration, $\nu_2$ to the second best, and so on.  
Now consider the task of identifying the best configuration.  The optimal strategy would allocate to each configuration $i$ the minimum resource required to distinguish it from $\nu_1$, i.e., enough so that the envelope functions (see Figure~\ref{fig:envelope})  bound the intermediate loss to be less than $\frac{\nu_i-\nu_1}{2}$ away from the terminal value.  
In contrast, the naive uniform allocation strategy, which allocates $B/n$ to each configuration, has to allocate to every configuration the maximum resource required to distinguish any arm $\nu_i$ from $\nu_1$.  
Remarkably, the budget required by \succhalv is only a small factor of the optimal because it capitalizes on configurations that are easy to distinguish from $\nu_1$.  

The relative size of the budget required for uniform allocation and \succhalv depends on the envelope functions bounding deviation from terminal losses  as well as the distribution from which $\nu_i$'s are drawn.  The budget required for \succhalv is smaller when the optimal $n$ versus $B/n$ tradeoff discussed in Section~\ref{ssec:succhalv_nvb} requires fewer resources per configuration.  Hence, if the envelope functions tighten quickly as a function of resource allocated, or the average distances between terminal losses is large, then \succhalv can be substantially faster than uniform allocation.
These intuitions are formalized in Section~\ref{sec:theory} and associated theorems/corollaries are provided that take into account the envelope functions and the distribution from which $\nu_i$'s are drawn. 

In practice, we do not have knowledge of either the envelope functions or the distribution of $\nu_i$'s, both of which are integral in characterizing \succhalv's required budget.  With \ouralg we address this shortcoming by hedging our aggressiveness.  We show in Section~\ref{sec:theory:guarantees_infinite} that \ouralg, despite having no knowledge of the envelope functions nor the distribution of $\nu_i$'s, requires a budget that is only log factors larger than that of \succhalv.


\section{Hyperparameter Optimization Experiments}\label{sec:experiments}
In this section, we evaluate the empirical behavior of \ouralg with three different resource types: iterations, data set subsamples, and feature samples.  For all experiments, we compare \ouralg with three well known Bayesian optimization algorithms---SMAC, TPE, and Spearmint---using their default settings.  We exclude Spearmint from the comparison set when there are conditional hyperparameters in the search space because it does not natively support them \citep{Eggensperger2013}.    
We also show results for \succhalv corresponding to repeating the most exploratory bracket of \ouralg to provide a baseline for aggressive early-stopping.\footnote{This is not done for the experiments in Section~\ref{sssec:117data}, since the most aggressive bracket varies from  dataset to dataset with the number of training points.} Additionally, as standard baselines against which to measure all speedups, we consider random search and ``random 2$\times$,'' a variant of
 random search with twice the budget of other methods.  
Of the hybrid methods described in Section~\ref{sec:related}, we compare to a variant of SMAC using the early termination criterion proposed by \citet{earlystopping2015} in the deep learning experiments described in Section~\ref{ssec:cnn}.  We think a comparison of \ouralg to more sophisticated hybrid methods introduced recently by \citet{fabolas2017} and \citet{kandasamy2017} is a fruitful direction for future work.

In the experiments below, we followed these loose guidelines when determining how to configuration \ouralg:
\begin{enumerate}
\item The maximum resource $\Rmax$ should be reasonable given the problem, but ideally large enough so that early-stopping is beneficial. 
\item $\eta$ should depend on $\Rmax$ and be selected to yield $\approx 5$ brackets with a minimum of 3 brackets.  This is to guarantee that \ouralg will use a baseline degree of early-stopping and prevent too coarse of a grid of $n$ vs $B$ tradeoffs.  
\end{enumerate}

\subsection{Early-Stopping Iterative Algorithms for Deep Learning}\label{ssec:cnn}
For this benchmark, we tuned a convolutional neural network\footnote{The model specification is available at \url{http://code.google.com/p/cuda-convnet/}.} 
with the same architecture as that used in \citet{Snoek2012} and \citet{earlystopping2015}.  The search spaces used in the two previous works differ, and we used a search space similar to that of \citet{Snoek2012} with 6 hyperparameters for stochastic gradient decent and 2 hyperparameters for the
response normalization layers (see Appendix \ref{add_results} for details).  In line with the two previous works, we used a batch size of 100 for all experiments.

\textbf{Data sets:}  We considered three image
classification data sets: CIFAR-10 \citep{cifar10data}, rotated MNIST with background images (MRBI) \citep{mrbidata}, and Street View House Numbers (SVHN) \citep{svhndata}. CIFAR-10 and SVHN contain $32\times 32$ RGB images while MRBI contains $28\times 28$ grayscale images.  Each data set was split into a training, validation, and test set: (1) CIFAR-10 has 40k, 10k, and 10k instances; (2) MRBI has 10k, 2k, and 50k instances; and (3) SVHN has close to 600k, 6k, and 26k instances for training, validation, and test respectively.  For all data sets, the only preprocessing performed on the raw images was demeaning. 

\textbf{\ouralg Configuration:} 
For these experiments, one unit of resource corresponds to 100 mini-batch iterations (10k examples with a batch size of 100).  
For CIFAR-10 and MRBI, $\Rmax$ was set to 300 (or 30k total iterations).
For SVHN, $\Rmax$ was set to 600 (or 60k total iterations) to accommodate the larger training set.  
Given $\Rmax$ for these experiments, we set $\eta=4$ to yield five \succhalv brackets for \ouralg.

\begin{figure}[h]
\centering
\subfigure[CIFAR-10]{\includegraphics[width=7cm,page=1,trim=20 0 20 20]{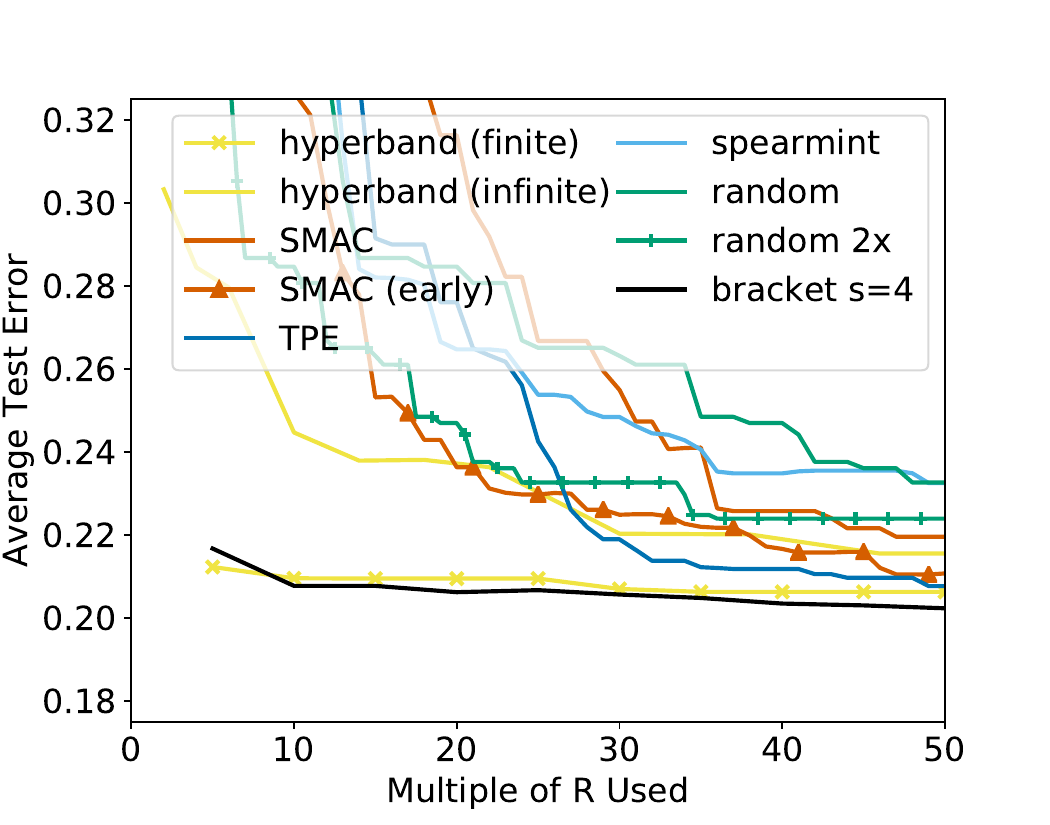}}
\subfigure[MRBI]{\includegraphics[width=7cm,page=2,trim=20 0 20 20]{error_avg_inf.pdf}}
\subfigure[SVHN]{\includegraphics[width=7cm,page=3,trim=20 0 20 20]{error_avg_inf.pdf}}
\caption{Average test error across 10 trials.  Label ``SMAC (early)'' corresponds to SMAC with the early-stopping criterion proposed in \citet{earlystopping2015} and label ``bracket $s=4$'' corresponds to repeating the most exploratory bracket of \ouralg.}
\label{fig:cnn_data}
\end{figure}

\textbf{Results:}  Each searcher was given a total budget of $50\Rmax$ per trial to return the best possible hyperparameter configuration. For \ouralg, the budget is sufficient to run the outer loop twice (for a total of 10 \succhalv brackets).  For SMAC, TPE, and random search, the budget corresponds to training 50 different configurations to completion.  Ten independent trials were performed for each searcher.  The experiments took the equivalent of over 1 year of GPU hours on NVIDIA GRID K520 cards available on Amazon EC2 \texttt{g2.8xlarge} instances. We set a total budget constraint in terms of iterations instead of compute time to make comparisons hardware independent.\footnote{Most trials were run on Amazon EC2 g2.8xlarge instances but a few trials were run on different machines due to the large computational demand of these experiments.}  Comparing progress by iterations instead of time ignores overhead costs, e.g. the cost of configuration selection for Bayesian methods and model initialization and validation costs for \ouralg.  
While overhead is hardware dependent, the overhead for \ouralg is below 5\% on EC2 \texttt{g2.8xlarge} machines, so comparing progress by time passed would not change results significantly.  

For CIFAR-10, the results in Figure \ref{fig:cnn_data}(a) show that \ouralg
is over an order-of-magnitude faster than its competitiors.  For MRBI, \ouralg is over an order-of-magnitude faster than standard configuration selection approaches and 5$\times$ faster than SMAC (early).  For SVHN, while \ouralg finds a good configuration faster, Bayesian optimization methods are competitive and SMAC (early) outperforms \ouralg.  The performance of SMAC (early) demonstrates there is merit to combining early-stopping and adaptive configuration selection.

Across the three data sets, \ouralg and SMAC (early) are the only two methods that consistently outperform random $2\times$.  On these data sets, \ouralg is over 20$\times$ faster than random search while SMAC (early) is $\leq7\times$ faster than random search within the evaluation window.
In fact, the first result returned by \ouralg after using a budget of 5$\Rmax$ is often competitive with results returned by other searchers
after using 50$\Rmax$. Additionally, \ouralg is less variable than
other searchers across trials, which is
highly desirable in practice (see Appendix \ref{add_results} for plots with error bars).

As discussed in Section~\ref{sec:eta}, for computationally expensive problems in high-dimensional search spaces, it may make sense to just repeat the most exploratory brackets.   Similarly, if meta-data is available about a problem or it is known that the quality of a configuration is evident after allocating a small amount of resource, then one should just repeat the most exploratory bracket.  Indeed, for these experiments, bracket $s=4$ vastly outperforms all other methods on CIFAR-10 and MRBI and is nearly tied with SMAC (early) for first on SVHN.  

While we set $\Rmax$ for these experiments to facilitate comparison to Bayesian methods and random search, it is also reasonable to use infinite horizon \ouralg to grow the maximum resource until a desired level of performance is reached.  We evaluate infinite horizon \ouralg on CIFAR-10 using $\eta=4$ and a starting budget of $B= 2\Rmax$.
Figure~\ref{fig:cnn_data}(a) shows that infinite horizon \ouralg is competitive with other methods but does not perform as well as finite horizon \ouralg within the 50$R$ budget limit.  The infinite horizon algorithm underperforms initially because it has to tune the maximum resource $R$ as well and starts with a less aggressive early-stopping rate.  This demonstrates that in scenarios where a max resource is known, it is better to use the finite horizon algorithm.  Hence, we focus on the finite horizon version of \ouralg for the remainder of our empirical studies.

Finally, CIFAR-10 is a very popular data set and state-of-the-art models achieve much lower error rates than what is shown in Figure~\ref{fig:cnn_data}.  The difference in performance is mainly attributable to higher model complexities and data manipulation (i.e.\ using reflection or random cropping to artificially increase the data set size).  If we limit the comparison to published results that use the same architecture and exclude data manipulation,  the best human expert result for the data set is 18\% error and the best hyperparameter optimized results are 15.0\% for \citet{Snoek2012}\footnote{We were unable to reproduce this result even after receiving the optimal hyperparameters from the authors through a personal communication.} and 17.2\% for  \citet{earlystopping2015}.  These results exceed ours on CIFAR-10 because they train on 25\% more data, by including the validation set, and also train for more epochs.  When we train the best model found by \ouralg on the combined training and validation data for 300 epochs, the model achieved a test error of 17.0\%.

\subsection{Data Set Subsampling}\label{ssec:subsampling}
We studied two different hyperparameter search optimization problems for which \ouralg uses data set subsamples as the resource.  The first adopts an extensive framework presented in \citet{Feurer2015} that attempts to automate preprocessing and model selection.  Due to certain limitations of the framework that fundamentally limited the impact of data set downsampling, we conducted a second experiment using a kernel classification task.
\subsubsection{117 Data Sets}\label{sssec:117data}

We used the framework introduced by \citet{Feurer2015}, which explored
a structured hyperparameter
 search space comprised of
 15 classifiers, 14 feature preprocessing methods, and 4 data
 preprocessing methods for a total of 110 hyperparameters.  We excluded the meta-learning component introduced in \citet{Feurer2015} used to warmstart Bayesian methods with promising configurations, in order to 
 perform a fair comparison with random search and \ouralg.  Similar to~\citet{Feurer2015}, we imposed a 3GB memory limit, a 6-minute timeout for each hyperparameter configuration and a one-hour time window to evaluate each searcher on each data set.  Twenty trials of each searcher were performed per data set and all trials in aggregate took over a year of CPU time on \texttt{n1-standard-1} instances from Google Cloud Compute.  Additional details about our experimental framework are available in Appendix~\ref{add_results}.

\textbf{Data sets}:
 \citet{Feurer2015} used 140 binary and multiclass classification data sets
from OpenML, but 23 of them are incompatible with the latest version of
the OpenML plugin~\citep{Feurer2015b}, 
so we worked with the remaining 117 data sets.
Due to the limitations of the experimental setup (discussed in Appendix~\ref{add_results}), we also separately considered 21 of these data sets, which demonstrated at least modest (though still
sublinear) training speedups due to subsampling.  Specifically, each of these
21 data sets showed on average at least a 3$\times$ speedup due to 8$\times$
downsampling on 100 randomly selected hyperparameter configurations. 

\textbf{\ouralg Configuration}:
Due to the wide range of dataset sizes, with some datasets having fewer than 10k training points, we ran \ouralg  with $\eta=3$ to allow for at least 3 brackets without being overly aggressive in downsampling on small datasets.  
$\Rmax$ was set to the full training set size for each data set and the maximum number of configurations for any bracket of \succhalv was limited to $n_{\max}=\max\{9,\Rmax/1000\}$.  This ensured that the most exploratory bracket of \ouralg will downsample at least twice.  As mentioned in Section~\ref{sec:eta}, when $n_{\max}$ is specified, the only difference when running the algorithm is $s_{\max}=\lfloor\log_\eta(n_{\max})\rfloor$ instead of $\lfloor\log_\eta(\Rmax)\rfloor$.

\begin{figure}[t]
\centering
\subfigure[Validation Error on 117 Data Sets]
{\includegraphics[width=7cm,page=3,trim=10 0 0 0]{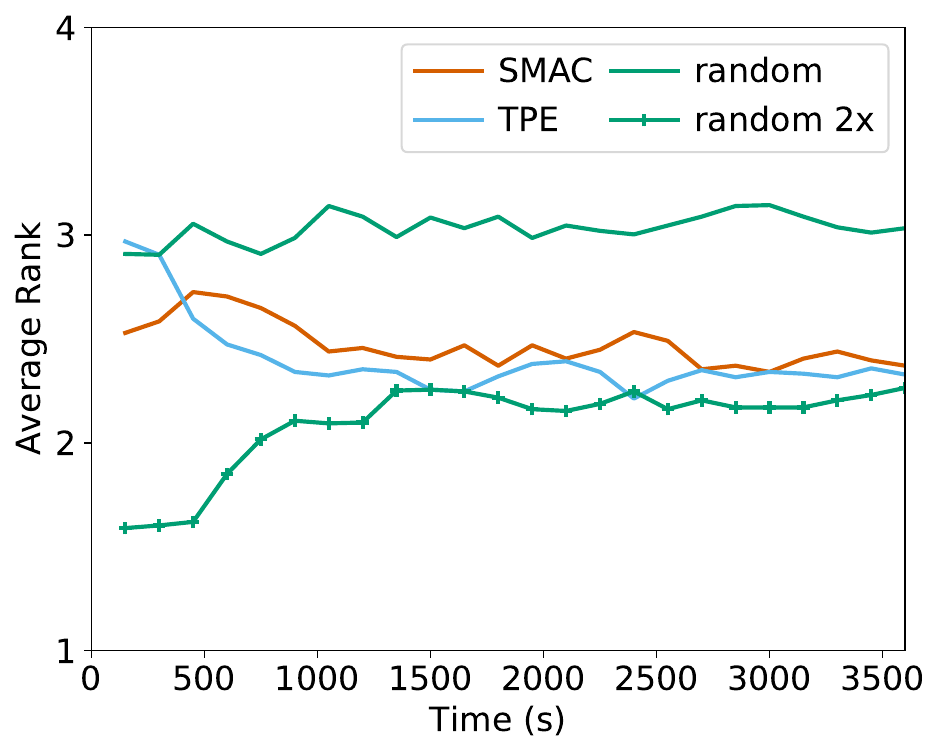}}
\subfigure[Test Error on 117 Data Sets]
{\includegraphics[width=7cm,page=4,trim=10 0 0 0]{charts_117_datasets.pdf}}
\subfigure[Test Error on 21 Data Sets]
{\includegraphics[width=7cm,page=6,trim=10 0 0 0]{charts_117_datasets.pdf}}
\caption{Average rank across all data sets for each searcher.  For each data set,
the searchers are ranked according to the average validation/test error across
20 trials.} \label{fig:all_data sets}
\end{figure}
 
\textbf{Results:}   The results on all 117 data sets in
Figure~\ref{fig:all_data sets}(a,b) show that \ouralg outperforms random search
in test error rank despite performing worse in validation error rank. Bayesian methods outperform \ouralg and random search
in test error performance but also exhibit signs of overfitting to the validation set, as they outperform \ouralg by a larger margin 
on the validation error rank.  Notably, random 2$\times$ outperforms all other
methods.  
However, for the subset of 21 data sets,
Figure \ref{fig:all_data sets}(c) shows that \ouralg outperforms all other searchers on
test error rank, including random 2$\times$ by a very small margin.  While these results are more promising,
the effectiveness of \ouralg was restricted
in this experimental framework; for smaller data sets, the startup
overhead was high relative to total training time, while for larger data sets,
only a handful of configurations could be trained within the hour window.  

\begin{figure}[h]
\centering
\includegraphics[width=8cm,page=1,trim=10 0 10 0]{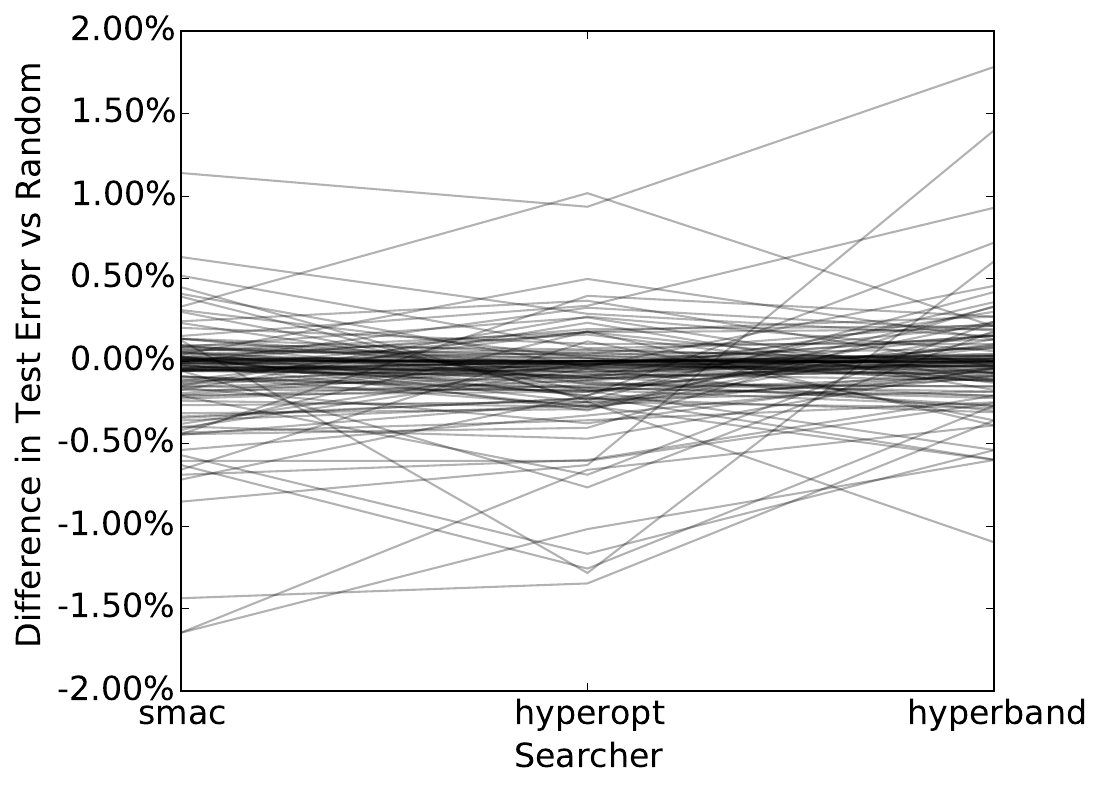}
\caption{Each line plots, for a single data set, the difference in test error versus random search for each search, where lower is better.  Nearly all the lines fall within the -0.5\% and 0.5\% band and, with the exception of a few outliers, the lines are mostly flat.  } \label{fig:all_data sets_dist}
\end{figure}

We note that while average rank plots like those in Figure~\ref{fig:all_data sets} are an effective way to aggregate information across many searchers and data sets, they provide no indication about the {\em magnitude} of the differences between the performance of the methods. Figure \ref{fig:all_data sets_dist}, which charts the difference between the test error for each searcher and that of random search across all 117 datasets, 
 highlights the small difference in the magnitude of the test errors across searchers.

These results are not surprising;
as mentioned in Section~\ref{ssec:related_hyper}, vanilla Bayesian optimization methods perform similarly to random search in high-dimensional search spaces.  \citet{Feurer2015} showed that using meta-learning to warmstart Bayesian optimization methods 
improved performance in this high-dimensional setting.  Using meta-learning to identify a promising
distribution from which to sample configurations as input into \ouralg is a direction for future work.

\subsubsection{Kernel Regularized Least Squares Classification}\label{ssec:kernel_lsqr}
For this benchmark, we tuned the hyperparameters of a kernel-based classifier on CIFAR-10.
We used the multi-class regularized least squares classification model, which is known to have comparable performance to SVMs \citep{rifkin2004defense,agarwal2014least} but can be trained significantly faster.\footnote{The default SVM method in Scikit-learn is single core and takes hours to train on CIFAR-10, whereas a block coordinate descent least squares solver takes less than 10 minutes on an 8 core machine.}  The hyperparameters considered in the search space include preprocessing method, regularization, kernel type, kernel length scale, and other kernel specific hyperparameters (see Appendix~\ref{add_results} for more details).  For \ouralg, we set $\Rmax=400$, with each unit of resource representing 100 datapoints, and $\eta=4$ to yield a total of 5 brackets.   Each hyperparameter optimization algorithm was run for ten trials on Amazon EC2 \texttt{m4.2xlarge}  instances; for a given trial, \ouralg was allowed to run for two outer loops, bracket $s=4$ was repeated 10 times, and all other searchers were run for 12 hours.
\begin{figure}
\centering
\begin{minipage}[b]{.47\textwidth}
  \centering
  \includegraphics[width=7cm,page=1,trim=10 10 10 10]{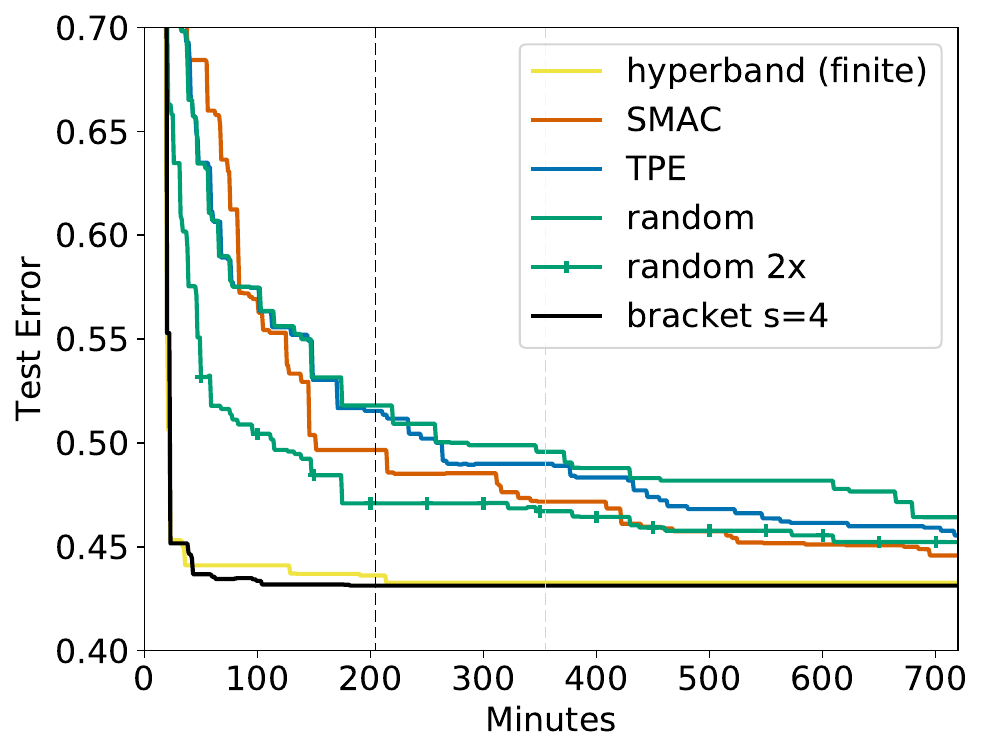}
\captionof{figure}{Average test error of the best kernel regularized least square classification model found by each searcher on CIFAR-10.  The color coded dashed lines indicate when the last trial of a given searcher finished.  } \label{fig:kernel_lsqr}
\end{minipage}%
\hspace{0.5cm}
\begin{minipage}[b]{.47\textwidth}
  \centering
\includegraphics[width=7cm,page=1,trim=10 10 10 10]{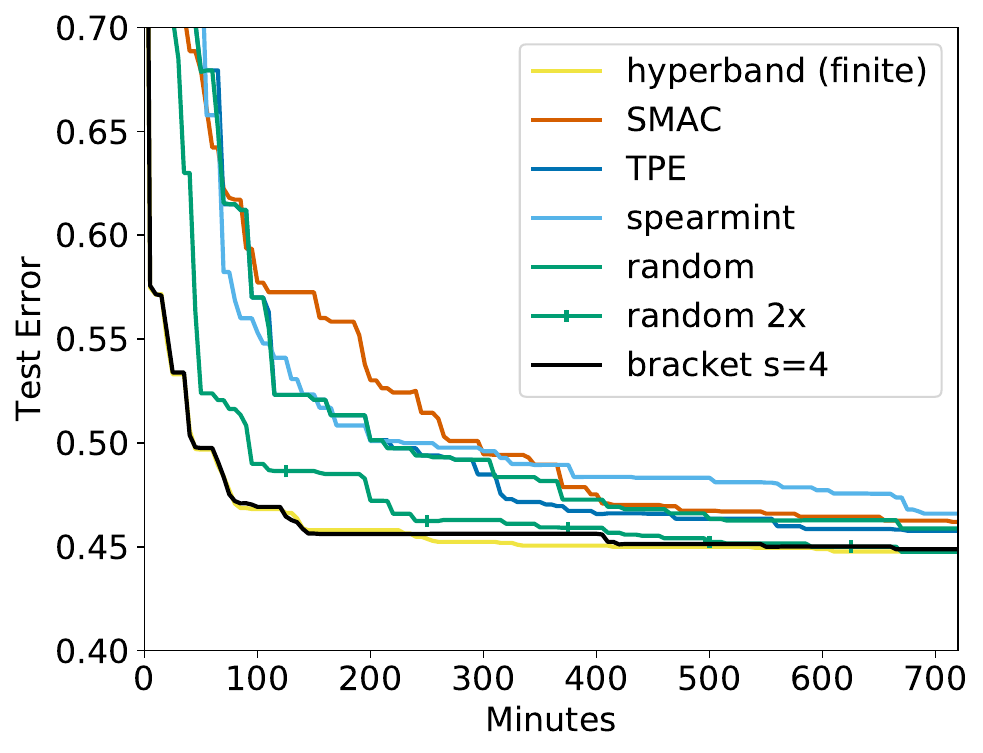}
\captionof{figure}{Average test error of the best random features model found by each searcher on CIFAR-10.  The test error for \ouralg and bracket $s=4$ are calculated in every evaluation instead of at the end of a bracket.} \label{fig:kernel_approx}
\end{minipage}
\end{figure}

Figure~\ref{fig:kernel_lsqr} shows that \ouralg returned a good configuration after completing the first \succhalv bracket in approximately 20 minutes; other searchers failed to reach this error rate on average even after the entire 12 hours. Notably, \ouralg was able to evaluate over 250 configurations in this first bracket of \succhalv, while competitors were able to evaluate only three configurations in the same amount of time.   Consequently, \ouralg is over 30$\times$ faster than Bayesian optimization methods and 70$\times$ faster than random search.   Bracket $s=4$ sightly outperforms \ouralg but the terminal performance for the two algorithms are the same.  Random 2$\times$ is competitive with SMAC and TPE. 

\subsection{Feature Subsampling to Speed Up Approximate Kernel Classification}\label{ssec:random_features}

Next, we examine the performance of \ouralg when using features as a resource on a random feature kernel approximations task.  Features were randomly generated using the method described in \citet{randomfeatures2007} to approximate the RBF kernel, and these random features were then used as inputs to a ridge regression classifier. The hyperparameter search space included the preprocessing method, kernel length scale, and $L_2$ penalty.  While it may seem natural to use infinite horizon \ouralg, since the fidelity of the approximation improves with more random features, in practice, the amount of available machine memory imposes a natural  upper bound on the number of features.  Thus, we used finite horizion \ouralg with a maximum resource of 100k random features, which  comfortably fit into a machine with 60GB of memory.  Additionally, we set one unit of resource to be 100 features, so $\Rmax=1000$.  Again, we set  $\eta=4$ to yield 5 brackets of \succhalv.  We ran 10 trials of each searcher, with each trial lasting 12 hours on a \texttt{n1-standard-16} machine from Google Cloud Compute.  The results in Figure~\ref{fig:kernel_approx} show that \ouralg is around 6$\times$ faster than Bayesian methods and random search.  \ouralg performs similarly to bracket $s=4$.  Random 2$\times$ outperforms Bayesian optimization algorithms. 

\subsection{Experimental Discussion}
While our experimental results show \ouralg is a promising algorithm for hyperparameter optimization, a few questions 
naturally arise:
\begin{enumerate}
\item What impacts the speedups provided by \ouralg?
\item Why does \succhalv seem to outperform \ouralg?
\item What about hyperparameters that should depend on the resource?
\end{enumerate}
We next address each of these questions in turn.

\subsubsection{Factors Impacting the Performance of \ouralg}
For a given $\Rmax$, the most exploratory \succhalv round performed by \ouralg evaluates $\Rmax$ configurations using a budget of $(\lfloor\log_\eta(\Rmax)\rfloor+1)\Rmax$, which gives an upper bound on the potential speedup over random search.  If training time scales linearly with the resource, the maximum speedup offered by \ouralg compared to random search is $\frac{\Rmax}{(\lfloor\log_\eta(\Rmax)\rfloor+1)}$.  For the values of $\eta$ and $\Rmax$ used in our experiments, the maximum speedup over random search is approximately $50\times$ given linear training time.   However, we observe a range of speedups from $6\times$ to $70\times$ faster than random search.  The differences in realized speedup can be explained by three factors:
\begin{enumerate} 
\item \emph{How training time scales with the given resource}. In cases where training time is superlinear as a function of the resource,  \ouralg can offer higher speedups.  For instance, if training scales like a polynomial of degree $p>1$, the maximum speedup for \ouralg over random search is approximately $\frac{\eta^{p}-1}{\eta^{p-1}}\Rmax$.  In the kernel least square classifier experiment discussed in Section~\ref{ssec:kernel_lsqr}, the training time scaled quadratically as a function of the resource, 
 which explains why the realized speedup of $70\times$ is higher than the maximum expected speedup given linear scaling. 
 \item \emph{Overhead costs associated with training}. Total evaluation time also depends on fixed overhead costs associated with evaluating each hyperparameter configuration, e.g., initializing a model, resuming previously trained models, and  calculating validation error. 
 For example, in the downsampling experiments on 117 data sets presented in Section~\ref{sssec:117data}, \ouralg did not provide significant speedup because many data sets could be trained in a matter of a few seconds and the initialization cost was high relative to training time.
\item \emph{The difficulty of finding a good configuration}. Hyperparameter optimization problems can vary in difficulty.  For instance, an `easy' problem is one where a randomly sampled configuration is likely to result in a high-quality model, and thus we only need to evaluate a small number of configurations to find a good setting.  In contrast, a `hard' problem is one where an arbitrary configuration is likely to be bad, in which case many configurations must be considered.  \ouralg leverages downsampling to boost the number of configurations that are evaluated, and thus is better suited for `hard' problems where
more evaluations are actually necessary to find a good setting.  
 Generally, the difficulty of a problem scales with the dimensionality of the search space. For low-dimensional problems, the number of configurations evaluated by random search and Bayesian methods is exponential in the number of dimensions so good coverage can be achieved.  For instance, the low-dimensional ($d=3$) search space in our feature subsampling experiment in Section~\ref{ssec:random_features} helps explain why 
\ouralg is only $6\times$ faster than random search.  
In contrast, for the neural network experiments in Section~\ref{ssec:cnn}, we hypothesize that faster speedups are observed for \ouralg because the dimension of the search space is higher.
 \end{enumerate}

\subsubsection{Comparison to \succhalv}
With the exception of the LeNet experiment (Section~\ref{ssec:lenet}) and the 117 Datasets experiment (Section~\ref{sssec:117data}), the most aggressive bracket of \succhalv outperformed \ouralg in all of our experiments.  In hindsight, we should have just run bracket $s=4$, since aggressive early-stopping 
provides massive speedups on many of these benchmarking tasks.  However, as previously mentioned, it was unknown a priori that bracket $s=4$ would perform the best and that is why we have to cycle through all possible brackets with \ouralg.  Another question is what happens when one increases $s$ even further, i.e.\ instead of 4 rounds of elimination, why not 5 or even more with the same maximum resource $R$?  In our case, $s=4$ was the most aggressive bracket we could run given the minimum resource per configuration limits imposed for the previous experiments. However, for larger data sets, it is possible to extend the range of possible values for $s$, in which case, \ouralg may either provide even faster speedups if more aggressive early-stopping helps or be slower by a small factor if the most aggressive brackets are essentially throwaways.  

We believe prior knowledge about a task can be particularly useful for limiting the range of brackets explored by \ouralg.  In our experience, aggressive early-stopping is generally safe for neural network tasks and even more aggressive early-stopping may be reasonable for larger data sets and longer training horizons.  However, when pushing the degree of early-stopping by increasing $s$, one has to consider the additional overhead cost associated with examining more models.  Hence, one way to leverage meta-learning would be to use  learning curve convergence rate, difficulty of different search spaces, and overhead costs of related tasks to determine the brackets considered by \ouralg. 
\subsubsection{Resource Dependent Hyperparameters}
In certain cases, the setting for a given hyperparameter should depend on the allocated resource.  For example, the maximum tree depth regularization hyperparameter for random forests should be higher with more data and more features.  However, the optimal tradeoff between maximum tree depth and the resource is unknown and can be data set specific.  In these situations, the rate of convergence to the true loss is usually slow because the performance on a smaller resource is not indicative of that on a larger resource.  Hence, these problems are particularly difficult for \ouralg, since the benefit of early-stopping can be muted.  Again, while \ouralg will only be a small factor slower than that of \succhalv with the optimal early-stopping rate, 
we recommend removing the dependence of the hyperparameter on the resource if possible.   For the random forest example, an alternative regularization hyperparameter is minimum samples per leaf, which is less dependent on the training set size.  Additionally, the dependence can oftentimes be removed with simple normalization.
For example, the regularization term for our kernel least squares experiments were normalized by the training set size to maintain a constant tradeoff between the mean-squared error and the regularization term.


\section{Theory}
\label{sec:theory}

In this section, we introduce the pure-exploration non-stochastic infinite-armed bandit (NIAB) problem, a very general setting which encompasses our hyperparameter optimization problem of interest. 
As we will show, \ouralg is in fact applicable to problems far beyond just hyperparameter optimization. 
We begin by formalizing the hyperparameter optimization problem and then reducing it to the pure-exploration NIAB problem.  We subsequently present a detailed analysis of \ouralg in both the infinite and finite horizon settings.

\subsection{Hyperparameter Optimization Problem Statement}

Let $\X$ denote the space of valid hyperparameter configurations, which could include continuous, discrete, or categorical variables that can be constrained with respect to each other in arbitrary ways (i.e.\ $\X$ need not be limited to a subset of $[0,1]^d$). 
For $k=1,2,\dots$ let $\ell_k \colon \X \rightarrow [0,1]$ be a sequence of loss functions defined over $\X$. 
For any hyperparameter configuration $x \in \X$, $\ell_k(x)$ represents the validation error of the model trained using $x$ with $k$ units of resources (e.g. iterations).
In addition, for some $\Rmax \in \mathbb{N} \cup \{ \infty \}$, define $\ell_* = \lim_{k \rightarrow R} \ell_k$
and $\nu_* =  \inf_{x \in \X} \ell_*(x)$. 
Note that $\ell_k(\cdot)$ for all $k\in \mathbb{N}$, $\ell_*(\cdot)$, and $\nu_*$ are all unknown to the algorithm a priori.
In particular, it is uncertain how quickly $\ell_k(x)$ varies as a function of $x$ for any fixed $k$, and how quickly $\ell_k(x)\rightarrow \ell_*(x)$ as a function of $k$ for any fixed $x \in \X$.

We assume hyperparameter configurations are sampled randomly from a known probability distribution 
$p(x)\colon\X\rightarrow [0,\infty)$, with support on $\X$.  In our experiments, $p(x)$ is simply the uniform distribution, but the algorithm can be used with any sampling method.
If $X \in \X$ is a random sample from this probability distribution, then $\ell_*(X)$ is a random variable whose distribution is unknown since $\ell_*(\cdot)$ is unknown.
Additionally, since it is unknown how $\ell_k(x)$ varies as a function of $x$ or $k$, one cannot necessarily infer anything about $\ell_k(x)$ given knowledge of $\ell_j(y)$ for any $j \in \mathbb{N}$, $y \in \X$. 
As a consequence, we reduce the hyperparmeter optimization problem down to a much simpler problem that ignores all underlying structure of the hyperparameters: we only interact with some $x \in \X$ through its loss sequence $\ell_k(x)$ for $k=1,2,\dots$. 
With this reduction, the particular value of $x \in \X$ does nothing more than index or uniquely identify the loss sequence.

Without knowledge of how fast $\ell_k(\cdot) \rightarrow \ell_*(\cdot)$ or how $\ell_*(X)$ is distributed, the goal of \ouralg is to identify a hyperparameter configuration $x \in \X$ that minimizes $\ell_*(x) - \nu_*$ by drawing as many random configurations as desired while using as few total resources as possible. 

\subsection{The Pure-Exploration Non-stochastic Infinite-Armed Bandit Problem}\label{sec:theory_assumptions}

We now formally define the bandit problem of interest, and relate it to the problem of hyperparameter optimization.
Each ``arm'' in the NIAB game is associated with a sequence that is drawn randomly from a distribution over sequences. 
If we ``pull'' the $i$th drawn arm exactly $k$ times, we observe a loss $\ell_{i,k}$. 
At each time, the player can either draw a new arm (sequence) or pull a previously drawn arm an additional time. 
There is no limit on the number of arms that can be drawn.
We assume the arms are identifiable only by their index $i$ (i.e.\ we have no side-knowledge or feature representation of an arm), and we also make the following two additional assumptions:
\begin{assumption}\label{eqn:NIAB_assumption_limit}
For each $i \in \mathbb{N}$ the limit $\lim_{k \rightarrow \infty} \ell_{i,k}$ exists and is equal to $\nu_i$.\footnote{We can always define $\ell_{i,k}$ so that convergence is guaranteed, i.e.\ taking the infimum of a sequence.}
\end{assumption}
\begin{assumption}\label{eqn:NIAB_assumption_F}
Each $\nu_i$ is a bounded  i.i.d. random variable with cumulative distribution function~$F$.
\end{assumption}
The objective of the NIAB problem is to identify an arm $\hat{\imath}$ with small $\nu_{\hat{\imath}}$ using as few total pulls as possible.
We are interested in characterizing $\nu_{\hat{\imath}}$ as a function of the total number of pulls from all the arms.
Clearly, the  hyperparameter optimization problem described above is an instance of the  NIAB problem. In this case, arm $i$ correspondes to a configuration $x_i\in \X$, with $\ell_{i,k} = \ell_k(x_i)$; Assumption~\ref{eqn:NIAB_assumption_limit} is equivalent to requiring that $\nu_i = \ell_*(x_i)$ exists; and Assumption~\ref{eqn:NIAB_assumption_F} follows from the fact that the arms are drawn i.i.d. from $\X$ according to distribution function $p(x)$.  $F$ is simply the cumulative distribution function of $\ell_*(X)$, where $X$ is a random variable drawn from the distribution $p(x)$ over $\X$.  Note that since the arm draws are independent, the $\nu_i$'s are also independent.  Again, this is not to say that the validation losses do not depend on the settings of the hyperparameters; the validation loss could well be correlated with certain hyperparameters, but this is not used in the algorithm and no assumptions are made regarding the correlation structure.  

In order to analyze the behavior of \ouralg in the NIAB setting, we must define a few additional objects.  Let $\nu_*=\inf\{ m : \P\left(\nu \leq m \right) > 0 \} > -\infty$, since the domain of the distribution $F$ is bounded.
Hence, the cumulative distribution function $F$ satisfies
\begin{align} \label{dist_assumption}
\P\left( \nu_i - \nu_* \leq \epsilon \right) = F(\nu_*+\epsilon)
\end{align}
and let $F^{-1}(y) = \inf_x \{ x : F(x)\leq y \}$. 
Define $\gamma \colon \mathbb{N} \rightarrow \R$ as the pointwise smallest, monotonically decreasing function satisfying
\begin{align}
\sup_{i} | \ell_{i,j} - \ell_{i,*} | \leq \gamma(j) \ ,  \ \ \forall j \in \mathbb{N}.
\end{align}
The function $\gamma$ is guaranteed to exist by Assumption~\ref{eqn:NIAB_assumption_limit} and bounds the deviation from the limit value as the sequence of iterates $j$ increases. 
For hyperparameter optimization, this follows from the fact that $\ell_k$ uniformly converges to $\ell_*$ for all $x\in\X$.  In addition, $\gamma$ can be interpretted as the deviation of the validation error of a configuration trained on a subset of resources versus the maximum number of allocatable resources.  
Finally, define $\Rmax$ as the first index such that $\gamma(\Rmax)=0$ if it exists, otherwise set $\Rmax=\infty$.  
For $y \geq 0$ let $\gamma^{-1}(y) = \min \{j \in \mathbb{N} \colon \gamma(j) \leq y \}$, using the convention that $\gamma^{-1}(0):= R$ which we recall can be infinite.

As previously discussed, there are many real-world scenarios in which $\Rmax$ is finite and known. 
For instance, if increasing subsets of the full data set is used as a resource, then the maximum number of resources cannot exceed the full data set size, and thus $\gamma(k)=0$ for all $k \geq R$ where $\Rmax$ is the (known) full size of the data set.
In other cases such as iterative training problems, one might not want to or know how to bound $\Rmax$. We separate these two settings into the \emph{finite horizon} setting where $\Rmax$ is finite and known, and the \emph{infinite horizon} setting where no bound on $\Rmax$ is known and it is assumed to be infinite.  While our empirical results suggest that the finite horizon may be more practically relevant for the problem of hyperparameter optimization, 
the infinite horizon case has natural connections to the literature,  and we begin by analyzing this setting.

\subsection{Infinite Horizon Setting ($\Rmax=\infty$)}
\label{ssec:theory_infinite}

\begin{figure}[t]
\centerline{
\fbox{\parbox[b]{.75\textwidth}{{\underline{\succhalv} (Infinite horizon) } \\[2pt] \small
{\bf Input}: Budget $B$, $n$ arms where $\ell_{i,k}$ denotes the $k$th loss
from the $i$th arm \\[3pt]
{\bf Initialize}: $S_0 = [n]$. \\[3pt] 
\textbf{For} $k=0,1,\dots,\lceil \log_2(n) \rceil -1$ \\ [3pt]
\indent \hspace{.4cm} Pull each arm in $S_k$ for $r_k = \lfloor \frac{ B
}{|S_k| \lceil \log_2(n) \rceil } \rfloor$ times.\\[3pt]
\indent \hspace{.4cm} Keep the best $ \lfloor |S_k| /2 \rfloor$ arms in terms of the $r_k$th observed loss as $S_{k+1}$.\\[3pt]
{\bf Output} : $\hat{\imath}$, $\ell_{\hat{\imath},\lfloor \tfrac{ B/2 }{ \lceil \log_2(n) \rceil} \rfloor}$ where $\hat{\imath} = S_{\lceil \log_2(n) \rceil}$
}}
}
\centering
\fbox{\parbox[b]{.75\textwidth}{{\underline{\ouralg} (Infinite horizon)}  \\[2pt]  \small
\textbf{Input:} None \\[3pt]
\textbf{For} $k=1,2,\dots$\\[3pt]
\indent \hspace{.4cm} \textbf{For} $s\in\mathbb{N}$ s.t. $k-s\geq\log_2(s)$ \\[3pt]
\indent \hspace{.8cm} $B_{k,s}=2^k$, $n_{k,s} = 2^{s}$ \\[3pt]
\indent \hspace{.8cm} $\hat{\imath}_{k,s}, \ell_{\hat{\imath}_{k,s}, \lfloor \tfrac{ 2^{k-1} }{ s } \rfloor} \leftarrow$ \succhalv($B_{k,s}$,$n_{k,s}$) 
}}
\caption{(Top) The \succhalv algorithm proposed and analyzed in \cite{JamiesonTalwalkar2015} for the non-stochastic setting. Note this algorithm was originally proposed for the stochastic setting in \cite{karnin2013almost}.  (Bottom) The \ouralg algorithm for the infinite horizon setting. \ouralg calls \succhalv as a subroutine.}
\label{alg:hyperband_infinite}
\end{figure} 

Consider the \ouralg algorithm of Figure~\ref{alg:hyperband_infinite}. The algorithm uses \succhalv (Figure~\ref{alg:hyperband_infinite}) as a subroutine that takes a finite set of arms as input and outputs an estimate of the best performing arm in the set. 
We first analyze \succhalv  (SH) for a given set of limits $\nu_i$ and then consider the performance of SH when $\nu_i$ are drawn randomly according to $F$. We then analyze the \ouralg algorithm. We note that the algorithm of Figure~\ref{alg:hyperband_infinite} was originally proposed by \cite{karnin2013almost} for the stochastic setting. However,  \citet{JamiesonTalwalkar2015} analyzed it in the non-stochastic setting and also found it to work well in practice. Extending the result of \cite{JamiesonTalwalkar2015} we have the following theorem:

\begin{theorem}
\label{thm:succ_halving}
Fix $n$ arms. Let $\nu_i = \displaystyle\lim_{\tau \rightarrow \infty} \ell_{i,\tau}$ and assume $\nu_1 \leq \dots \leq \nu_n$. For any $\epsilon >0$ let 
\begin{align*}
z_{SH} &=  2 \lceil \log_2(n) \rceil   \, \max_{i=2,\dots,n}  i \, ( 1 + \gamma^{-1} \left(    \max\left\{ \tfrac{\epsilon}{4} , \tfrac{\nu_{i}- \nu_{1}}{2} \right\} \right) )\\  &
\leq 2 \lceil \log_2(n) \rceil  \big( n +  \sum_{i=1,\dots,n}  \gamma^{-1} \left(   \max\left\{ \tfrac{\epsilon}{4} , \tfrac{\nu_{i}- \nu_{1}}{2} \right\} \right) \big)
\end{align*} 
If the \succhalv algorithm of Figure~\ref{alg:hyperband_infinite} is run with any budget $B > z_{SH}$ then an arm $\hat{\imath}$ is returned that satisfies $\nu_{\hat{\imath}} - \nu_1 \leq \epsilon/2$. Moreover, $|\ell_{\hat{\imath},\lfloor \tfrac{ B/2 }{ \lceil \log_2(n) \rceil} \rfloor} - \nu_1| \leq \epsilon$.
\end{theorem}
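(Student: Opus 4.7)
The plan is to argue by induction on the $K := \lceil \log_2(n) \rceil$ rounds of \succhalv that a good arm always survives in $S_{k+1}$, so that the final single-element set $S_K = \{\widehat i\}$ satisfies $\nu_{\widehat i} - \nu_1 \leq \epsilon$. The analysis rests on two ingredients: the envelope bound $|\ell_{j,r_k} - \nu_j| \leq \gamma(r_k)$ (valid for every arm $j$) and a per-round pull lower bound derived from the hypothesis $B > z_{SH}$.

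For the pull bound, at round $k$ each of the $m_k := |S_k|$ surviving arms is pulled $r_k = \lfloor B/(m_k K) \rfloor$ times. Instantiating the hypothesis at the witness index $i_k := \lfloor m_k/2 \rfloor + 1 \in \{2,\dots,n\}$ (valid when $m_k \geq 2$, and satisfying $m_k < 2 i_k$), and writing $\alpha_i := \max\{\epsilon/4, (\nu_i-\nu_1)/2\}$, one obtains
\[
\frac{B}{m_k K} \;\geq\; \frac{2 i_k}{m_k}\bigl(1 + \gamma^{-1}(\alpha_{i_k})\bigr) \;\geq\; 1 + \gamma^{-1}(\alpha_{i_k}),
\]
whence $r_k \geq \gamma^{-1}(\alpha_{i_k})$ and $\gamma(r_k) \leq \alpha_{i_k}$ by monotonicity of $\gamma$.

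The induction then proceeds by case analysis. In \emph{Case A}, where $(\nu_{i_k}-\nu_1)/2 \geq \epsilon/4$, we have $2\gamma(r_k) \leq \nu_{i_k}-\nu_1$, so any arm $j$ with $\nu_j \geq \nu_{i_k}$ satisfies $\ell_{j,r_k} \geq \nu_1 + \gamma(r_k) \geq \ell_{1,r_k}$ and cannot strictly beat arm $1$; since at most $i_k - 2 = \lfloor m_k/2 \rfloor - 1$ arms can beat arm $1$, arm $1$ is retained in $S_{k+1}$. In \emph{Case B}, where $(\nu_{i_k}-\nu_1)/2 < \epsilon/4$, we have $\gamma(r_k) \leq \epsilon/4$; moreover $m_{k+1} \leq m_k/2$ implies $i_{k+1} \leq i_k$, so Case B persists once entered. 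Letting $k^*$ denote the first Case B round (set $k^* = K$ if none occurs), arm $1$ is in $S_{k^*}$ by the Case A chain. A pigeonhole argument among the $\lfloor m_{k^*}/2 \rfloor + 1$ smallest-$\nu$ arms in $S_{k^*}$ then exhibits some $j^{**}$ outside $S_{k^*+1}$ with $\nu_{j^{**}} \leq \nu_1 + \epsilon/2$, which forces every $j \in S_{k^*+1}$ to satisfy $\ell_{j,r_{k^*}} \leq \ell_{j^{**},r_{k^*}}$ and hence $\nu_j \leq \nu_{j^{**}} + 2\gamma(r_{k^*}) \leq \nu_1 + \epsilon$. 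Since $S_K \subseteq S_{k^*+1}$, we conclude $\nu_{\widehat i} - \nu_1 \leq \epsilon$. The second claim $|\ell_{\widehat i, r_{K-1}} - \nu_1| \leq \epsilon/4$ then follows from the envelope bound at the last round together with $\gamma(r_{K-1}) \leq \epsilon/4$ (which holds in the Case B regime at round $K-1$) and the fact that $\ell_{\widehat i, r_{K-1}} \leq \ell_{1, r_{K-1}} \leq \nu_1 + \epsilon/4$ whenever arm $1 \in S_{K-1}$.

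The main obstacle is the Case B pigeonhole step: one must show that the Case A rounds preserve enough small-$\nu$ arms so that the $(\lfloor m_{k^*}/2 \rfloor + 1)$-th smallest $\nu$-value in $S_{k^*}$ is bounded by $\nu_1 + \epsilon/2$ rather than by a much larger quantile. Equivalently, one needs a sharpening of the Case A conclusion that tracks not just arm $1$ but the entire prefix $\{j : \nu_j - \nu_1 < 2\gamma(r_k)\}$ surviving each round. Once this refined invariant is in place, the nested inclusion $S_{k+1} \subseteq S_k$ prevents any further drift of $\min_{j\in S_k} \nu_j$ across the remaining Case B rounds, so no $\log$-factor accumulation of the $\epsilon/2$ slack occurs.
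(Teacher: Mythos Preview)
Your forward induction with a Case A/B split is a different route from the paper, which runs a single contradiction argument with no case distinction. The paper defines the set $T=\{j:\nu_j\leq\nu_1+\epsilon/2\}$ of acceptable arms, assumes $S_K\cap T=\emptyset$, and looks at the last round $k$ with $S_k\cap T\neq\emptyset$. Every $T$-arm $j\in S_k$ is then beaten by at least $\lfloor|S_k|/2\rfloor$ arms $i$, each forced by the envelope to satisfy $\nu_i<\nu_j+2\gamma(r_k)$; replacing $S_k$ by the worst-case set $\{1,\dots,|S_k|\}$ converts this count into $2\gamma(r_k)>\max\{\tfrac{\epsilon}{2},\nu_{\lfloor|S_k|/2\rfloor+1}-\nu_1\}$, and substituting $r_k=\lfloor B/(|S_k|K)\rfloor$ yields $B\leq z_{SH}$. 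Your witness index $i_k$ plays exactly the role of the paper's $\lfloor|S_k|/2\rfloor+1$, but because the paper tracks all of $T$ at once it never has to branch on whether $\nu_{i_k}-\nu_1\gtrless\epsilon/2$.

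The obstacle you flag in Case~B is real, and your proposed repair---strengthening Case~A to retain an entire prefix of small-$\nu$ arms---is both unnecessary and hard to execute: a Case~A round only guarantees that arm~$1$ survives and may discard every other arm $j$ with $\nu_j<\nu_{i_k}$, so by round $k^*$ the $(\lfloor m_{k^*}/2\rfloor+1)$-th smallest $\nu$ inside $S_{k^*}$ can exceed $\nu_1+\epsilon/2$, and your pigeonhole conclusion $\nu_{j^{**}}\leq\nu_1+\epsilon/2$ fails. A much lighter fix closes the gap: carry the invariant ``either arm~$1\in S_k$, or every arm in $S_k$ satisfies $\nu_j\leq\nu_1+\epsilon/2$.'' Case~A preserves the first alternative by your argument. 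In Case~B, either arm~$1$ survives (first alternative again), or every survivor $i\in S_{k+1}$ has $\ell_{i,r_k}\leq\ell_{1,r_k}$ and hence $\nu_i\leq\nu_1+2\gamma(r_k)\leq\nu_1+\epsilon/2$; once the second alternative holds it persists since $S_{k'}\subseteq S_{k+1}$ for all $k'>k$. Either way $\nu_{\widehat i}-\nu_1\leq\epsilon/2$, and no prefix-tracking or pigeonhole is needed. Separately, your sketch for the second claim presumes both that round $K-1$ is Case~B and that arm~$1\in S_{K-1}$; neither is implied by the (repaired) invariant, so that conclusion needs independent justification.
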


The next technical lemma will be used to characterize the problem dependent term $ \sum_{i=1,\dots,n}  \gamma^{-1} \left(   \max\left\{ \tfrac{\epsilon}{4} , \tfrac{\nu_{i}- \nu_{1}}{2} \right\} \right)$ when the sequences are drawn from a probability distribution. 

\begin{lemma} \label{lem:gamma_sum_bound_rand_arms}
Fix $\delta \in (0,1)$. Let $p_n = \frac{\log(2/\delta)}{n}$. For any $\epsilon \geq 4(F^{-1}(p_n) - \nu_*)$ define 
\begin{align*}
\mathbf{H}(F,\gamma,n,\delta,\epsilon) := 2n  \int_{ \nu_*+\epsilon/4 }^\infty \gamma^{-1}(\tfrac{t - \nu_{*}}{4}) dF(t)  + \left(\tfrac{4}{3} \log(2/\delta) + 2n F(\nu_*+\epsilon/4) \right) \gamma^{-1}\left( \tfrac{\epsilon}{16} \right) 
\end{align*}
and $\mathbf{H}(F,\gamma,n,\delta) := \mathbf{H}(F,\gamma,n,\delta,4(F^{-1}(p_n) - \nu_*))$ so that 
\begin{align*}
\mathbf{H}(F,\gamma,n,\delta)  = 2n  \int_{ p_n }^1 \gamma^{-1}(\tfrac{F^{-1}(t) - \nu_{*}}{4}) dt  + \tfrac{10}{3} \log(2/\delta)  \gamma^{-1}\left(\tfrac{ F^{-1}(p_n) - \nu_*}{4}\right) .
\end{align*}
For $n$ arms with limits $\nu_1 \leq \dots \leq \nu_n$ drawn from $F$, then 
\begin{align*}
\nu_1 \leq F^{-1}(p_n) \quad \text{ and } \quad \sum_{i=1}^n &\gamma^{-1} \left( \max\left\{ \tfrac{\epsilon}{4} , \tfrac{\nu_{i}- \nu_{1}}{2} \right\} \right)  \leq \mathbf{H}(F,\gamma,n,\delta,\epsilon)
\end{align*}
for any $\epsilon \geq 4(F^{-1}(p_n) - \nu_*)$ with probability at least $1-\delta$.
\end{lemma}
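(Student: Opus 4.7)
The plan is to establish the two conclusions simultaneously via a union bound over two events, each of probability at least $1-\delta/2$. The first is the easy half: $\nu_1 \leq F^{-1}(p_n)$. By the definition of the generalized inverse, $\P(\nu_i > F^{-1}(p_n)) \leq 1 - p_n$, so independence gives $\P(\nu_1 > F^{-1}(p_n)) \leq (1-p_n)^n \leq \exp(-np_n) = \delta/2$.

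The second event is the concentration of a bounded i.i.d. sum that dominates the quantity of interest. Specifically, assuming the first event holds and $\epsilon \geq 4(F^{-1}(p_n)-\nu_*)$, I would show by a short case analysis on the value of $\nu_i - \nu_*$ that, for every $i$,
\begin{align*}
\gamma^{-1}\!\left(\max\!\left\{\tfrac{\epsilon}{4},\tfrac{\nu_i-\nu_1}{2}\right\}\right) \leq Z_i := \gamma^{-1}\!\left(\tfrac{\nu_i-\nu_*}{4}\right)\mathbf{1}\{\nu_i > \nu_*+\tfrac{\epsilon}{4}\} + \gamma^{-1}\!\left(\tfrac{\epsilon}{16}\right)\mathbf{1}\{\nu_i \leq \nu_*+\tfrac{\epsilon}{4}\}.
\end{align*}
The cases are: when $\nu_i \leq \nu_*+\epsilon/4$ the max equals $\epsilon/4$ (since $\nu_i-\nu_1 \leq \epsilon/4$) and monotonicity of $\gamma^{-1}$ gives $\gamma^{-1}(\epsilon/4) \leq \gamma^{-1}(\epsilon/16)$; when $\nu_*+\epsilon/4 < \nu_i \leq \nu_*+\epsilon$ the max is at least $\epsilon/4 \geq (\nu_i-\nu_*)/4$; and when $\nu_i > \nu_*+\epsilon$ the first event yields $\nu_i - \nu_1 \geq \nu_i - \nu_* - \epsilon/4 \geq (\nu_i-\nu_*)/2$, so $(\nu_i-\nu_1)/2 \geq (\nu_i-\nu_*)/4$.

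It then remains to concentrate $\sum_{i=1}^n Z_i$, whose mean is exactly the deterministic quantity appearing in $\Hn(F,\gamma,n,\delta,\epsilon)$: namely $\int_{\nu_*+\epsilon/4}^\infty \gamma^{-1}((t-\nu_*)/4)\,dF(t) + F(\nu_*+\epsilon/4)\,\gamma^{-1}(\epsilon/16)$. Each $Z_i$ is bounded in $[0,M]$ with $M := \gamma^{-1}(\epsilon/16)$ and has variance at most $M\,\E[Z_i]$, so Bernstein's inequality yields with probability $\geq 1-\delta/2$ a bound of the form $\sum Z_i \leq 2n\E[Z_i] + \tfrac{4}{3}M\log(2/\delta)$, matching $\Hn(F,\gamma,n,\delta,\epsilon)$. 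The specialization $\Hn(F,\gamma,n,\delta) = \Hn(F,\gamma,n,\delta,4(F^{-1}(p_n)-\nu_*))$ is immediate by substituting, performing the change of variables $u = F(t)$ (which converts $\int_{\nu_*+\epsilon/4}^\infty$ into $\int_{p_n}^1$), and using $2nF(F^{-1}(p_n)) = 2np_n = 2\log(2/\delta)$ to combine with the $\tfrac{4}{3}\log(2/\delta)$ term into the final $\tfrac{10}{3}\log(2/\delta)$ coefficient. The main delicate point will be producing the Bernstein bound with the exact constant $\tfrac{4}{3}$: the naive AM-GM route through $\sqrt{2nM\mu\log(2/\delta)} \leq n\mu + \tfrac{M}{2}\log(2/\delta)$ combined with the standard $\tfrac{2}{3}M\log(2/\delta)$ linear term only lands at $\tfrac{7}{6}$, so I expect one will need to invoke a multiplicative Chernoff variant for bounded nonnegative summands to nail the stated constant.
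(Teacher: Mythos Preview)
Your proposal is correct and follows essentially the same route as the paper: a union bound over the event $\{\nu_1 \leq F^{-1}(p_n)\}$ and a Bernstein concentration event for the i.i.d.\ sum $\sum_i \min\{M,\gamma^{-1}((\nu_i-\nu_*)/4)\}$ with $M=\gamma^{-1}(\epsilon/16)$, preceded by the case analysis showing $\max\{\epsilon/4,(\nu_i-\nu_1)/2\}\geq \max\{\epsilon/4,(\nu_i-\nu_*)/4\}$ on the first event. Your three-case split is a minor cosmetic variant of the paper's two-case version. One small confusion worth flagging: your ``naive AM--GM'' already suffices, since it yields $2n\mu+\tfrac{7}{6}M\log(2/\delta)\leq 2n\mu+\tfrac{4}{3}M\log(2/\delta)=\mathbf{H}$; landing at $7/6$ is not a defect but a slightly sharper bound than needed, so no Chernoff variant is required. (The paper itself uses the cruder bound $(\sqrt{n\mu}+\sqrt{\tfrac{2}{3}M\log(2/\delta)})^2\leq 2n\mu+\tfrac{4}{3}M\log(2/\delta)$.)
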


Setting $\epsilon = 4(F^{-1}(p_n)-\nu_*)$ in Theorem~\ref{thm:succ_halving} and using the result of Lemma~\ref{lem:gamma_sum_bound_rand_arms} that $\nu_* \leq \nu_1 \leq \nu_* + (F^{-1}(p_n)-\nu_*)$, we immediately obtain the following corollary.
\begin{corollary}
\label{cor:succ_halv_rand_arms_infinite}
Fix $\delta \in (0,1)$ and $\epsilon \geq 4(F^{-1}(\tfrac{\log(2/\delta)}{n}) - \nu_*)$. Let $B = 4 \lceil \log_2(n) \rceil  \mathbf{H}(F,\gamma,n,\delta,\epsilon)$
where $\mathbf{H}(F,\gamma,n,\delta,\epsilon)$ is defined in Lemma~\ref{lem:gamma_sum_bound_rand_arms}. 
If the \succhalv algorithm of Figure~\ref{alg:hyperband_infinite} is run with the
specified $B$ and $n$ arm configurations drawn randomly according to $F$, then
an arm $\hat{\imath} \in [n]$ is returned such that with probability at least
$1-\delta$ we have $\nu_{\hat{\imath}} - \nu_* \leq \big(F^{-1}(\frac{\log(2/\delta)}{n}) - \nu_*\big) + \epsilon/2$.
In particular, if $B = 4 \lceil \log_2(n) \rceil  \mathbf{H}(F,\gamma,n,\delta)$ and $\epsilon = 4(F^{-1}(\tfrac{\log(2/\delta)}{n}) - \nu_*)$ then $\nu_{\hat{\imath}} - \nu_* \leq 3\big(F^{-1}(\frac{\log(2/\delta)}{n}) - \nu_*\big)$ with probability at least $1-\delta$.
\end{corollary}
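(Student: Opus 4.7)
The plan is to combine Lemma~\ref{lem:gamma_sum_bound_rand_arms} with Theorem~\ref{thm:succ_halving}: the lemma controls the two random quantities $\nu_1$ and $\sum_{i=2}^n \gamma^{-1}(\max\{\epsilon/4,(\nu_i-\nu_1)/2\})$ that depend on the draws of the arms, and the theorem converts any budget exceeding $z_{SH}$ into an $\epsilon$-optimal return. Since both ingredients are already in place, the proof is largely a bookkeeping exercise that plugs one into the other.

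First I would invoke Lemma~\ref{lem:gamma_sum_bound_rand_arms} with the stated $\epsilon$, which is valid because the hypothesis $\epsilon \geq 4(F^{-1}(p_n) - \nu_*)$ with $p_n = \log(2/\delta)/n$ is exactly the lemma's requirement. On an event $\mathcal{E}$ of probability at least $1-\delta$ this yields both $\nu_1 \leq F^{-1}(p_n)$ and $\sum_{i=2}^n \gamma^{-1}(\max\{\epsilon/4,(\nu_i-\nu_1)/2\}) \leq \Hn(F,\gamma,n,\delta,\epsilon)$.

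Next, still on $\mathcal{E}$, I would check that $B = 4\lceil \log_2(n) \rceil \Hn(F,\gamma,n,\delta,\epsilon)$ exceeds the quantity $z_{SH}$ of Theorem~\ref{thm:succ_halving}. Using the second (summation) form of $z_{SH}$ together with the bound from Step 1 gives $z_{SH} \leq 2\lceil\log_2(n)\rceil(n + \Hn)$, so the required inequality $B > z_{SH}$ reduces to $\Hn \geq n$. This is the one step where a nontrivial estimate is needed, and I expect it to be the main obstacle. My approach is to observe that $\gamma^{-1}(\epsilon/16) \geq 1$ whenever $\gamma(0) > \epsilon/16$, which holds here because losses live in $[0,1]$ and $\epsilon \leq 1$, and then to split into cases on whether $F(\nu_*+\epsilon/4) \geq 1/2$. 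In the first case the term $2n F(\nu_*+\epsilon/4)\gamma^{-1}(\epsilon/16)$ inside $\Hn$ already dominates $n$; in the second case the mass $1-F(\nu_*+\epsilon/4) \geq 1/2$ sits in the tail of the first integral, where the integrand is at least $1$ since $(t-\nu_*)/4 < \gamma(0)$ for $t$ in the effective support $[0,1]$. If this dichotomy proves uncomfortably loose, a clean alternative is to revert to the tighter $\max_i$ form $z_{SH} = 2\lceil\log_2(n)\rceil \max_i i\,(1+\gamma^{-1}(\cdot))$ and bound $1+\gamma^{-1}(\cdot) \leq 2\gamma^{-1}(\cdot)$ uniformly, eliminating the additive $n$ in favor of a harmless factor of $2$.

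Finally, Theorem~\ref{thm:succ_halving} yields $\nu_{\widehat{i}} - \nu_1 \leq \epsilon$, and the triangle inequality together with $\nu_1 - \nu_* \leq F^{-1}(p_n) - \nu_*$ from Step 1 delivers the first conclusion $\nu_{\widehat{i}} - \nu_* \leq \epsilon + (F^{-1}(p_n) - \nu_*)$. The ``in particular'' statement is immediate: substituting $\epsilon = 4(F^{-1}(p_n) - \nu_*)$, which satisfies the initial hypothesis with equality, produces the advertised factor of $5$, and by construction $B$ in this case equals $4\lceil\log_2(n)\rceil \Hn(F,\gamma,n,\delta)$ as required.
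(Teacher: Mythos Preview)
Your proposal is correct and mirrors the paper's approach exactly: the paper's entire argument is the single sentence preceding the corollary, which says to combine Theorem~\ref{thm:succ_halving} with Lemma~\ref{lem:gamma_sum_bound_rand_arms} and use $\nu_* \leq \nu_1 \leq F^{-1}(p_n)$. You in fact go further than the paper by explicitly verifying $B > z_{SH}$, a detail the paper omits; your case split there is unnecessary, since $\gamma^{-1}$ maps into $\mathbb{N}$ and hence is always $\geq 1$, giving directly $\mathbf{H} \geq 2n\big(1-F(\nu_*+\epsilon/4)\big) + 2n\,F(\nu_*+\epsilon/4) = 2n > n$.
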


Note that for any fixed $n \in \mathbb{N}$ we have for any $\Delta >0$
\begin{align*}
\P( \min_{i=1,\dots,n} \nu_i -\nu_* \geq \Delta  ) = (1-F(\nu_*+\Delta))^n \approx e^{-n F(\nu_*+\Delta)}
\end{align*}
which implies $\E[ \min_{i=1,\dots,n} \nu_i - \nu_*] \approx F^{-1}(\tfrac{1}{n})-\nu_*$. 
That is, $n$ needs to be sufficiently large so that it is probable that a good limit is sampled. 
On the other hand, for any fixed $n$, Corollary~\ref{cor:succ_halv_rand_arms_infinite} suggests that the total resource budget $B$ needs to be large enough in order to overcome the rates of convergence of the sequences described by $\gamma$. 
Next, we relate  SH to a naive approach that uniformly allocates resources to a fixed set of $n$ arms.

\subsubsection{Non-Adaptive Uniform Allocation}
The non-adaptive uniform allocation strategy takes as inputs a budget $B$ and $n$ arms,  allocates $B/n$ to each of the arms, and picks the arm with the lowest loss. The following results  allow us to compare with \succhalv.

\begin{proposition} \label{prop:uniform_sampling_bound}
Suppose we draw $n$ random configurations from $F$, train each with
$j=\min\{B/n,R\}$ iterations, and let $\hat{\imath} = \arg\min_{i
=1,\dots,n} \ell_{j}(X_i)$. Without loss
of generality assume $\nu_1 \leq \ldots \leq \nu_n$. If
\begin{align}
B \geq n \gamma^{-1}\left( \tfrac{1}{2} (F^{-1}(\tfrac{\log(1/\delta)}{n})  - \nu_*) \right) 
\label{eq:unif_budget}
\end{align}
then with probability at least $1-\delta$ we have $\nu_{\hat{\imath}} - \nu_* \leq 2 \left( F^{-1}\left( \tfrac{ \log(1/\delta)}{n} \right) - \nu_* \right)$. In contrast, there exists a sequence of functions $\ell_j$ that satisfy $F$ and $\gamma$ such that if 
\begin{align*}
B \leq  n \gamma^{-1}\left( 2 (F^{-1}(\tfrac{\log(c/\delta)}{n+\log(c/\delta)})  - \nu_*) \right)
\end{align*}
then with probability at least $\delta$,   we have $\nu_{\hat{\imath}} - \nu_* \geq 2(F^{-1}(\tfrac{\log(c/\delta)}{n+\log(c/\delta)}) - \nu_*)$, where $c$ is a constant that depends on the regularity of $F$.
\end{proposition}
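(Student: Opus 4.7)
The statement contains two separate claims, so I would argue them independently. Let $j := \min\{B/n, R\}$ denote the pulls allocated to each arm, and abbreviate $\Delta_u := F^{-1}(\log(1/\delta)/n) - \nu_*$ for the upper bound and $\Delta_\ell := F^{-1}(\log(c/\delta)/(n+\log(c/\delta))) - \nu_*$ for the lower bound. The upper bound will be a short two-step argument combining an order-statistic concentration step for the $\nu_i$'s with the $\gamma$-envelope uniformly controlling $|\ell_{i,j}-\nu_i|$. The lower bound requires exhibiting a single $(F,\gamma)$-compatible loss family that causes uniform allocation to fail with probability at least $\delta$, and I expect the probability calibration (which is where the constant $c$ enters) to be the main obstacle.

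\textbf{Upper bound.} Since the $\nu_i$ are i.i.d.\ from $F$ and $F(\nu_*+\Delta_u) \geq \log(1/\delta)/n$ by the definition of $F^{-1}$, the probability that \emph{every} drawn limit exceeds $\nu_* + \Delta_u$ is $(1-F(\nu_*+\Delta_u))^n \leq (1 - \log(1/\delta)/n)^n \leq e^{-\log(1/\delta)} = \delta$. On the complementary $1-\delta$ event, there exists an index $i^\star$ with $\nu_{i^\star} \leq \nu_* + \Delta_u$. The budget hypothesis $B \geq n\gamma^{-1}(\Delta_u/2)$ gives $j \geq \gamma^{-1}(\Delta_u/2)$, hence $\gamma(j) \leq \Delta_u/2$ and $|\ell_{i,j} - \nu_i| \leq \Delta_u/2$ uniformly in $i$. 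Chaining $\ell_{\widehat{i},j} \leq \ell_{i^\star,j} \leq \nu_{i^\star} + \Delta_u/2 \leq \nu_* + \tfrac{3}{2}\Delta_u$ with $\nu_{\widehat{i}} \leq \ell_{\widehat{i},j} + \Delta_u/2$ yields $\nu_{\widehat{i}} - \nu_* \leq 2\Delta_u$, completing the upper bound.

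\textbf{Lower bound and main obstacle.} The key structural observation is that $B \leq n\gamma^{-1}(2\Delta_\ell)$ forces $\gamma(j) \geq 2\Delta_\ell$, so the envelope at time $j$ is wider than the gap we must resolve: an adversary is free to place each $\ell_{i,j}$ anywhere in $[\nu_i - \gamma(j), \nu_i + \gamma(j)]$. I would then introduce the event $\mathcal{E}$ that all $n$ drawn limits fall in the right tail $\{\nu \geq \nu_* + 2\Delta_\ell\}$; on $\mathcal{E}$ the conclusion $\nu_{\widehat{i}} - \nu_* \geq 2\Delta_\ell$ holds automatically for any selection rule $\widehat{i}$, so any $(F,\gamma)$-compatible $\ell$ (for instance the trivial choice $\ell_{i,j} \equiv \nu_i$) produces the failure. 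The technical crux is lower-bounding $\Pr(\mathcal{E})$ by $\delta$. Using the standard inequality $(1-p)^n \geq e^{-np/(1-p)}$ with $p = F(\nu_*+\Delta_\ell) = \log(c/\delta)/(n+\log(c/\delta))$ gives $np/(1-p) = \log(c/\delta)$, so $(1-p)^n \geq \delta/c$; this handles the probability that all $\nu_i$ exceed $\nu_* + \Delta_\ell$. A regularity assumption on $F$ bounding $F(\nu_*+2\Delta_\ell)$ in terms of $F(\nu_*+\Delta_\ell)$ lets me pass from the $\Delta_\ell$ cutoff to the $2\Delta_\ell$ cutoff while losing only a multiplicative factor, which is exactly the ``constant $c$ depending on the regularity of $F$'' absorbed into the statement. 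If this regularity is unavailable, the envelope slack $\gamma(j) \geq 2\Delta_\ell$ is large enough to instead exhibit an explicit adversarial $\ell_{i,j}$ that reverses the ordering induced by $\{\nu_i\}$ within the noise window, forcing $\widehat{i}$ onto an arm with $\nu_{\widehat{i}} \geq \nu_* + 2\Delta_\ell$ on a $\delta$-probability event defined via order statistics of $\{\nu_i\}$ rather than via the all-bad event.
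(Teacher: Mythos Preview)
Your upper bound is correct and matches the paper's argument line for line.

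For the lower bound, your primary route---the event $\mathcal{E}=\{\nu_i\geq\nu_*+2\Delta_\ell\ \forall i\}$---has a real gap. Two issues:

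\emph{(i)} The argument never touches the budget hypothesis. If $\Pr(\mathcal{E})\geq\delta$ held, you would have shown that uniform allocation fails with probability $\geq\delta$ for \emph{every} $B$, so the clause ``if $B\leq n\gamma^{-1}(2\Delta_\ell)$'' would be superfluous. That is not what the proposition is after: it is meant to exhibit the \emph{budget} threshold, not a pure sampling failure.

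\emph{(ii)} The regularity you invoke to pass from the $\Delta_\ell$ cutoff to the $2\Delta_\ell$ cutoff is a doubling-type bound $F(\nu_*+2\Delta_\ell)\leq C\,F(\nu_*+\Delta_\ell)$. The constant $c$ in the statement, however, is the paper's regularity constant
\[
c \;\leq\; \inf_{x,\Delta}\ \frac{F(x+\Delta)-F(x+\Delta/2)}{F(x+\Delta)-F(x)},
\]
which is a \emph{lower} bound on the mass in the upper half of any interval. This inequality goes the wrong way for your purpose: it yields $F(\nu_*+2\Delta_\ell)\geq \tfrac{1}{1-c}F(\nu_*+\Delta_\ell)$, not an upper bound, so $\Pr(\mathcal{E})\geq\delta$ does not follow.

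What the paper actually does is closer to your fallback, and it uses both ingredients together. Set $\widehat{\nu}=\nu_*+\Delta_\ell$ and partition $[\widehat{\nu},\widehat{\nu}+\gamma(j)]$ into a lower half $I_1$ and an upper half $I_2$. The event $\{N_0=0\}=\{\nu_i\geq\widehat{\nu}\ \forall i\}$ (cutoff at $\Delta_\ell$, not $2\Delta_\ell$) has probability $\geq\delta/c$ by your own calculation. The adversarial sequence is an explicit reflection: for $x$ with $\ell_*(x)$ within $\gamma(j)/2$ of $\widehat{\nu}+\gamma(j)/2$, set $\ell_j(x)=2\widehat{\nu}+\gamma(j)-\ell_*(x)$, so arms in $I_2$ \emph{appear} to lie in $I_1$ at time $j$ and vice versa. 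Conditionally on at least one arm landing in $I_1\cup I_2$, the regularity constant gives $\Pr(N_2>0\mid N_1+N_2>0)\geq c$; combined with $\Pr(N_0=0)\geq\delta/c$ this produces a probability-$\delta$ event on which $\widehat{i}$ necessarily falls in $I_2$, hence $\nu_{\widehat{i}}\geq\widehat{\nu}+\gamma(j)/2$. The budget hypothesis now enters for the first and only time: $B\leq n\gamma^{-1}(2\Delta_\ell)$ forces $\gamma(j)\geq 2\Delta_\ell$, so $\nu_{\widehat{i}}-\nu_*\geq\Delta_\ell+\Delta_\ell=2\Delta_\ell$. Your sketch mentions ``reversing the ordering within the noise window'' but does not supply this reflection construction or the conditional-probability step, and that is where the constant $c$ is actually spent.
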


For any fixed $n$ and sufficiently large $B$, Corollary~\ref{cor:succ_halv_rand_arms_infinite} shows that \succhalv outputs an $\hat{\imath} \in [n]$ that satisfies
$\nu_{\hat{\imath}}-\nu_* \lesssim F^{-1}(\frac{\log(2/\delta)}{n}) - \nu_*$ with probability at least $1-\delta$. 
This guarantee is similar to the result in Proposition~\ref{prop:uniform_sampling_bound}. However, \succhalv achieves its guarantee as long as\footnote{We say $f \simeq g$ if there exist constants $c,c'$ such that $c g(x) \leq f(x) \leq c' g(x)$.}
\begin{align} \label{eq:succ_halv_loose}
B \simeq & \log_2(n) \left[ \log(1/\delta) \gamma^{-1}\left(  F^{-1}(\tfrac{\log(1/\delta)}{n}) -\nu_* \right) +  n \int_{\frac{\log(1/\delta)}{n} }^1 \gamma^{-1}(F^{-1}(t) - \nu_{*}) dt \right]\,,
\end{align}
and this sample complexity may be substantially smaller than the budget required by uniform allocation shown in  \eqref{eq:unif_budget} of Proposition~\ref{prop:uniform_sampling_bound}.
Essentially, the first term in \eqref{eq:succ_halv_loose} represents the budget allocated to the constant number of arms with limits $\nu_i \approx F^{-1}(\frac{\log(1/\delta)}{n})$ while the second term describes the number of times the sub-optimal arms are sampled before discarded.
The next section uses a particular parameterization for $F$ and $\gamma$ to help better
illustrate the difference between the sample complexity of uniform allocation (Equation \ref{eq:unif_budget}) versus that of \succhalv (Equation \ref{eq:succ_halv_loose}).

\subsubsection{A Parameterization of $F$ and $\gamma$ for Interpretability}
\label{sec:param}
To gain some intuition and relate the results back to the existing literature, we make explicit parametric
assumptions on $F$ and $\gamma$. We stress that all of our results
hold for general $F$ and $\gamma$ as previously stated, and this
parameterization is simply a tool to provide intuition. 
First assume that there exists a constant $\alpha>0$ such that
\begin{equation} \label{eq:gamma_assumption}
\gamma(j) \simeq \left( \frac{1}{j} \right)^{1/\alpha} .
\end{equation} 
Note that a large value of $\alpha$ implies that the convergence of $\ell_{i,k} \rightarrow \nu_i$ is very slow. 

We will consider two possible parameterizations of $F$.  First, assume there exists positive constants $\beta$ such that
\begin{equation} \label{eq:F_beta_assumption}
F(x) \simeq  \begin{cases}
 (x-\nu_*)^\beta  & \text{ if } x\geq\nu_* \\[6pt]
 0 & \text{ if } x < \nu_* 
 \end{cases}\\.
\end{equation} 
Here, a large value of $\beta$ implies that it is very rare to draw a limit close to the optimal value $\nu_*$. The same model was studied in \cite{carpentier2015simple}.
Fix some $\Delta > 0$. 
As discussed in the preceding section, if $n = \frac{\log(1/\delta)}{F(\nu_* + \Delta)} \simeq \Delta^{-\beta} \log(1/\delta)$ arms are drawn from $F$ then with probability at least $1-\delta$ we have $\min_{i=1,\dots,n} \nu_i \leq \nu_* + \Delta$. 
Predictably, both uniform allocation and \succhalv output a $\nu_{\hat{\imath}}$ that satisfies 
$\nu_{\hat{\imath}}- \nu_* \lesssim \left( \frac{  \log(1/\delta) }{n} \right)^{1/\beta}$
with probability at least $1-\delta$ provided their measurement budgets are large enough.
Thus, if $n \simeq \Delta^{-\beta} \log(1/\delta)$ and the measurement budgets of the uniform allocation (Equation \ref{eq:unif_budget}) and \succhalv (Equation \ref{eq:succ_halv_loose}) satisfy
\begin{align*}
\text{Uniform allocation}  \quad\quad B &\simeq \Delta^{-(\alpha+\beta)}\log(1/\delta) \\
\text{\succhalv}  \quad\quad B
&\simeq \log_2(\Delta^{-\beta} \log(1/\delta)) \left[ \Delta^{-\alpha} \log(1/\delta) +  \frac{ \Delta^{-\beta} - \Delta^{- \alpha}}{1-\alpha/\beta} \log(1/\delta) \right] \\
&\simeq \log(\Delta^{-1} \log(1/\delta)) \log(\Delta^{-1})  \ \Delta^{-\max\{\beta,\alpha\}} \log(1/\delta) 
\end{align*} 
then both also satisfy $\nu_{\hat{\imath}}- \nu_* \lesssim \Delta$ with probability at least $1-\delta.$\footnote{These quantities are intermediate results in the proofs of the theorems of Section~\ref{sec:theory:guarantees_infinite}.}
\succhalv's budget scales like $\Delta^{-\max\{\alpha,\beta\}}$, which can be significantly smaller than the uniform allocation's budget of $ \Delta^{-(\alpha+\beta)}$.
However, because $\alpha$ and $\beta$ are unknown in practice, neither method knows how to choose the optimal $n$ or $B$ to achieve this $\Delta$ accuracy.
In Section~\ref{sec:theory:guarantees_infinite}, we show how $\ouralg$ addresses this issue.

The second parameterization of $F$ is the following discrete distribution:
\begin{equation} \label{eq:F_discrete_assumption}
F(x) = \frac{1}{K} \sum_{j=1}^K \1\{ x \leq \mu_j \} \quad \quad \text{ with } \quad\Delta_j := \mu_j-\mu_1
\end{equation}
for some set of unique scalars $\mu_1 < \mu_2 < \dots < \mu_K$.
Note that by letting $K \rightarrow \infty$ this discrete CDF can approximate any piecewise-continuous CDF to arbitrary accuracy.
In particular, this model can have multiple means take the same value so that $\alpha$ mass is on $\mu_1$ and $1-\alpha$ mass is on $\mu_2 > \mu_1$, capturing the stochastic infinite-armed bandit model of \cite{jamieson2016power}. 
In this setting, both uniform allocation and \succhalv output a $\nu_{\hat{\imath}}$ that is within the top $\frac{  \log(1/\delta) }{n}$ fraction of the $K$ arms with probability at least $1-\delta$ if their budgets are sufficiently large.
Thus,  let $q>0$ be such that $n \simeq q^{-1} \log(1/\delta)$.  Then, if the measurement budgets of the uniform allocation (Equation \ref{eq:unif_budget}) and \succhalv (Equation \ref{eq:succ_halv_loose}) satisfy
\begin{align*}
\text{Uniform allocation}  \quad B &\simeq \log(1/\delta) \begin{cases}
 K \displaystyle\max_{j=2,\dots,K} \Delta_j^{-\alpha}  & \text{ if } q = 1/K \\[6pt]
  q^{-1} \Delta_{ \lceil qK  \rceil}^{-\alpha} & \text{ if } q > 1/K 
 \end{cases}\\
\text{\succhalv}  \quad B
&\simeq  \log(q^{-1}\log(1/\delta))  \log(1/\delta) \begin{cases}
  \Delta_2^{-\alpha} + \displaystyle\sum_{j=2}^K \Delta_j^{-\alpha} & \text{ if } q = 1/K \\
  \Delta_{ \lceil qK  \rceil}^{-\alpha} + \tfrac{1}{qK} \displaystyle\sum_{j= \lceil qK  \rceil}^K \Delta_j^{-\alpha} & \text{ if } q > 1/K,
  \end{cases}
\end{align*}
an arm that is in the best $q$-fraction of arms is returned, i.e.\ $\hat{\imath}/K \approx  q $ and $\nu_{\hat{\imath}} - \nu_* \lesssim \Delta_{\lceil \max\{2,qK\} \rceil}$, with probability at least $1-\delta$.
This shows that the average resource per arm for uniform allocation is that required to distinguish the top $q$-fraction from the best, while that for \succhalv is a small multiple of the average resource required to distinguish an arm from the best; the difference between the max and the average can be very large in practice.   
We remark that the value of $\epsilon$ in Corollary~\ref{cor:succ_halv_rand_arms_infinite} is carefully chosen to make the \succhalv budget and guarantee work out.
Also note that one would never take $q < 1/K$ because $q=1/K$ is sufficient to return the best arm.

\subsubsection{\ouralg Guarantees}\label{sec:theory:guarantees_infinite}
The \ouralg algorithm of Figure~\ref{alg:hyperband_infinite} addresses the tradeoff between the number of arms $n$ versus the average number of times each one is pulled $B/n$ by performing a two-dimensional version of the so-called ``doubling trick.'' 
For each fixed $B$, we non-adaptively search a predetermined grid of values of $n$ spaced geometrically apart so that the incurred loss of identifying the ``best'' setting takes a budget no more than $\log(B)$ times the budget necessary if the best setting of $n$ were known ahead of time. 
Then, we successively double $B$ so that the cumulative number of measurements needed to arrive at the necessary $B$ is no more than $2B$.
The idea is that even though we do not know the optimal setting for $B,n$ to achieve some desired error rate, the hope is that by trying different values in a particular order, we will not waste too much effort.

Fix $\delta \in (0,1)$.
For all $(k,s)$ pairs defined in the \ouralg algorithm of Figure~\ref{alg:hyperband_infinite}, let $\delta_{k,s} = \frac{\delta}{2k^3}$. 
For all $(k,s)$ define 
\begin{align*}
\mathcal{E}_{k,s} := \{ B_{k,s} > 4 \lceil \log_2(n_{k,s}) \rceil \mathbf{H}(F,\gamma,n_{k,s},\delta_{k,s})\} = \{2^k > 4 s \mathbf{H}(F,\gamma,2^s,2k^3/\delta)\}
\end{align*}
Then by Corollary~\ref{cor:succ_halv_rand_arms_infinite} we have 
\begin{align*}
\P\left( \bigcup_{k=1}^\infty \bigcup_{s=1}^k \{ \nu_{\hat{\imath}_{k,s}} - \nu_* > 3 \big(F^{-1}(\tfrac{\log(4k^3/\delta)}{2^s}) - \nu_*\big) \} \cap \mathcal{E}_{k,s} \right) \leq \sum_{k=1}^\infty \sum_{s=1}^k \delta_{k,s} = \sum_{k=1}^\infty \frac{\delta}{2k^2} \leq \delta.
\end{align*}
For sufficiently large $k$ we will have $\bigcup_{s =1}^k \mathcal{E}_{k,s} \neq \emptyset$, so assume $B= 2^k$ is sufficiently large. 
Let $\hat{\imath}_B$ be the empirically best-performing arm output from  \succhalv of round $k_B=\lfloor\log_2(B)\rfloor$ of \ouralg of Figure~\ref{alg:hyperband_infinite} and let $s_B \leq k_B$ be the largest value such that $\mathcal{E}_{k_B,s_B}$ holds.
Then 
\begin{align*}
\nu_{\hat{\imath}_{B}} - \nu_* &\leq 3\big(F^{-1}(\frac{\log(4 \lfloor \log_2(B) \rfloor^3 /\delta )}{2^{s_B}}) - \nu_*\big) + \gamma( \lfloor \tfrac{ 2^{\lfloor \log_2(B) \rfloor-1} }{\lfloor \log_2(B) \rfloor} \rfloor ) .
\end{align*}
Also note that on stage $k$ at most $\sum_{i=1}^k i B_{i,1} \leq k \sum_{i=1}^k B_{i,1} \leq 2 k B_{k,s} = 2 \log_2(B_{k,s}) B_{k,s}$ total samples have been taken.
While this guarantee holds for general $F,\gamma$, the value of $s_B$, and consequently the resulting bound, is difficult to interpret. The following corollary considers the $\beta,\alpha$ parameterizations of $F$ and $\gamma$, respectively, of Section~\ref{sec:param} for better interpretation.

\begin{theorem}\label{thm:hyperband_infinite_main}
Assume that Assumptions 1 and 2 of Section~\ref{sec:theory_assumptions} hold and that the sampled loss sequences obey the parametric assumptions of Equations~\ref{eq:gamma_assumption} and \ref{eq:F_beta_assumption}. 
Fix $\delta \in (0,1)$.
For any $T \in \mathbb{N}$, let $\hat{\imath}_T$ be the empirically best-performing arm output from  \succhalv from the last round $k$ of \ouralg of Figure~\ref{alg:hyperband_infinite} after exhausting a total budget of $T$ from all rounds, then
\begin{align*}
\nu_{\hat{\imath}_T} - \nu_* \leq c \left( \frac{ \overline\log(T)^3 \overline\log(\log(T)/\delta) }{T} \right)^{1/\max\{\alpha,\beta\}}
\end{align*} 
for some constant $c = \exp(O(\max\{\alpha,\beta\}))$ where $\overline\log(x) = \log(x )\log\log(x)$.
\end{theorem}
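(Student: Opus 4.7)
The plan is to condition on the intersection $\mathcal{E} := \bigcap_{k\geq 1} \bigcap_{l \leq k} \mathcal{E}_{k,l}$ of the favorable events introduced immediately above the theorem, which by Corollary~\ref{cor:succ_halv_rand_arms_infinite} and the choice $\delta_{k,l} = \delta/(2k^3)$ holds with probability at least $1-\delta$. On this event, for each sample budget $T$ I will exhibit an index pair $(k_T, l_T)$ that \ouralg has already completed and whose associated accuracy matches the claimed bound.

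\textbf{Budget accounting.} The cumulative number of samples drawn through stage $k$ is $\sum_{i=1}^{k} i\, B_{i,1} \leq 2k \cdot 2^k$. Hence after $T$ total samples, every stage $k$ with $2k \cdot 2^k \leq T$ has finished; taking $k_T$ to be the largest such index yields $2^{k_T} \simeq T/\log_2 T$ and $k_T = \Theta(\log_2 T)$.

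\textbf{Evaluating $\mathbf{H}$ under the parameterization.} Plugging $\gamma^{-1}(z) \simeq z^{-\alpha}$ and $F^{-1}(y) - \nu_* \simeq y^{1/\beta}$ into the definition of $\mathbf{H}(F,\gamma,n,\delta,\epsilon)$ in Lemma~\ref{lem:gamma_sum_bound_rand_arms} reduces the integral to a closed form already computed in Section~\ref{sec:param}. Concretely, for any target accuracy $\Delta > 0$, choosing $n_{k_T,l_T} = 2^{l_T}$ with $2^{l_T} \simeq \Delta^{-\beta} \log(k_T^3/\delta)$ forces $F^{-1}(\log(2/\delta_{k_T,l_T})/n_{k_T,l_T}) - \nu_* \lesssim \Delta$, and yields
\begin{equation*}
\mathbf{H}(F,\gamma,2^{l_T},\delta_{k_T,l_T},\epsilon_{k_T,l_T}) \;\lesssim\; \log(\Delta^{-1})\,\Delta^{-\max\{\alpha,\beta\}}\, \log(k_T^3/\delta).
\end{equation*}

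\textbf{Matching budget to accuracy.} The defining inequality $2^{k_T} > 4 l_T\, \mathbf{H}(\cdots)$ of $\mathcal{E}_{k_T,l_T}$, after substituting the estimates above and using $l_T \simeq \beta\log(\Delta^{-1}) + \log\log(k_T^3/\delta)$, becomes the sufficient condition
\begin{equation*}
\frac{T}{\log_2 T} \;\gtrsim\; \log^2(\Delta^{-1}) \, \Delta^{-\max\{\alpha,\beta\}}\, \log\!\left( \frac{\log^3(T)}{\delta} \right)\!.
\end{equation*}
Inverting for $\Delta$ and using $\log(\Delta^{-1}) = \Theta(\log(T)/\max\{\alpha,\beta\})$ gives
\begin{equation*}
\Delta \;\leq\; c\left( \frac{\overline\log(T)^3 \, \overline\log(\log(T)/\delta)}{T} \right)^{1/\max\{\alpha,\beta\}}\!.
\end{equation*}
The algorithm's output loss differs from $\nu_{\widehat{i}_{k_T,l_T}}$ by at most $\gamma(\lfloor 2^{k_T-1}/k_T\rfloor) \simeq (\log T / T)^{1/\alpha}$, which is dominated by $\Delta$ since $\max\{\alpha,\beta\} \geq \alpha$, completing the argument.

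\textbf{Main obstacle.} The principal difficulty is the bookkeeping of polylogarithmic factors: the optimal $l_T$ itself depends on $\Delta$, so the budget inequality in the third step is implicit in $\Delta$ and must be inverted by iterating the bound once to absorb the $\log(\Delta^{-1})$ factors into the $\overline\log(T)$ form stated in the theorem. A routine side check is that the chosen pair $(k_T, l_T)$ actually satisfies $k_T - l_T \geq \log_2 l_T$ so that \ouralg does try it, which follows from $l_T = O(\log T)$ and the definition of $k_T$.
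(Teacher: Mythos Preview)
The proposal is correct and follows essentially the same approach as the paper's proof: condition on the union of events $\mathcal{E}_{k,l}$ defined just before the theorem, simplify $\mathbf{H}(F,\gamma,n,\delta)$ under the $\alpha,\beta$ parameterization, parameterize $n_{k,l}\simeq\Delta^{-\beta}\log(k^3/\delta)$ for a target accuracy $\Delta$, translate the budget condition into a relation between $T$ and $\Delta$ via the doubling-trick accounting $T\leq 2k\,2^k$, and then invert to express $\Delta$ in terms of $T$, finally checking that the empirical-output error $\gamma(2^{k-1}/l)$ is dominated. Your explicit verification that the chosen $(k_T,l_T)$ satisfies the feasibility constraint $k_T-l_T\geq\log_2 l_T$ is a detail the paper's proof leaves implicit.
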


By a straightforward modification of the proof, one can show that if uniform allocation is used in place of \succhalv in \ouralg, the uniform allocation version achieves $\nu_{\hat{\imath}_T} - \nu_* \leq c \left( \frac{ \log(T) \overline\log(\log(T)/\delta) }{T} \right)^{1/(\alpha+\beta)}$. 
We apply the above theorem to the stochastic infinite-armed bandit setting in the following corollary.

\begin{corollary}\label{cor:hyperband_stochastic_F}[Stochastic Infinite-armed Bandits]
For any step $k,s$ in the infinite horizon \ouralg algorithm with $n_{k,s}$ arms drawn, consider the setting where the $j$th pull of the $i$th arm results in a stochastic loss $Y_{i,j} \in [0,1]$ such that $\E[Y_{i,j}] = \nu_i$ and $\P( \nu_i - \nu_* \leq \epsilon ) = c_1^{-1} \epsilon^\beta$. If $\ell_j(i) = \frac{1}{j} \sum_{s=1}^j Y_{i,s}$ then with probability at least $1-\delta/2$ we have $\forall k \geq 1, 0 \leq s \leq k, 1\leq i \leq n_{k,s},1 \leq j \leq B_k$, 
\begin{align*}
 |\nu_i - \ell_{i,j}| \leq \sqrt{\tfrac{\log(B_k n_{k,s} / \delta_{k,s})}{2j}} \leq \sqrt{ \log(\tfrac{16 B_k}{\delta}) } \left( \tfrac{2}{j} \right)^{1/2}.
\end{align*} 
Consequently, if after $B$ total pulls we define $\widehat{\nu}_B$ as the mean of the empirically best arm output from the last fully completed round $k$, then with probability at least $1-\delta$
\begin{align*}
\widehat{\nu}_B - \nu_* \leq \mathrm{polylog}(B/\delta) \max\{ B^{-1/2},  B^{-1/\beta} \}.
\end{align*} 
\end{corollary}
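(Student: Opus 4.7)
The plan is to reduce this corollary to Theorem~\ref{thm:hyperband_infinite_main} in two stages: first establish a uniform concentration inequality that plays the role of the deterministic envelope $\gamma$ from the non-stochastic analysis, and then substitute $\alpha = 2$ into the parametric rate of that theorem.

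For the first displayed bound I would fix $k, l, i, j$ in the algorithm's range and apply Hoeffding's inequality to the i.i.d.\ bounded variables $Y_{i,1}, \ldots, Y_{i,j}$ to obtain $\P(|\nu_i - \ell_{i,j}| > \epsilon) \leq 2 e^{-2 j \epsilon^2}$. Inverting at confidence $\delta_{k,l}/(B_k n_{k,l})$ and union-bounding over the at most $B_k n_{k,l}$ pairs $(i,j)$ gives exactly the first displayed inequality; combining with $n_{k,l} \leq B_k$, $\delta_{k,l} = \delta/(2k^3)$, and absorbing everything polynomial in $k$ into the logarithm yields the second displayed inequality. Finally, I would union-bound over all $(k,l)$ pairs: the aggregate failure probability is at most $\sum_{k \geq 1} \sum_{l \leq k} \delta_{k,l} = \sum_{k \geq 1}(k+1)\delta/(2 k^3)$, a convergent series bounded by $\delta/2$ up to a universal constant absorbed into the definition of $\delta_{k,l}$. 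This is precisely why the inner confidence level is cubic in $k$.

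For the rate, I would condition on the concentration event above. On that event the empirical loss $\ell_{i,j}$ satisfies $|\ell_{i,j} - \nu_i| \lesssim \sqrt{\log(B_k/\delta)/j}$ uniformly over $(i,j)$ appearing in the algorithm's execution, so the effective envelope takes the form $\gamma(j) \simeq j^{-1/2}$ up to a $\mathrm{polylog}(B/\delta)$ multiplicative factor. This matches \eqref{eq:gamma_assumption} with $\alpha = 2$, and the hypothesis on $\P(\nu_i - \nu_* \leq \epsilon)$ matches \eqref{eq:F_beta_assumption} with the stated $\beta$ exactly. Applying Theorem~\ref{thm:hyperband_infinite_main} then gives $\widehat{\nu}_B - \nu_* \leq \mathrm{polylog}(B/\delta) \cdot B^{-1/\max\{2,\beta\}} = \mathrm{polylog}(B/\delta)\max\{B^{-1/2}, B^{-1/\beta}\}$, which is the second claim.

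The main obstacle is that Theorem~\ref{thm:hyperband_infinite_main} is stated with a deterministic envelope $\gamma$, whereas here $\gamma$ holds only on the high-probability event above and itself carries a logarithmic factor that grows with the round index $k$. The careful step is to verify that this logarithmic inflation can be absorbed into the final $\mathrm{polylog}(B/\delta)$ without disrupting the summability that underlies the theorem's proof. Since that proof invokes $\gamma$ only through monotone sums and integrals over the rounds of \succhalv, substituting the data-dependent envelope and retracing the bound changes only these polylog factors and requires no structural modification of the argument.
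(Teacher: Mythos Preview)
Your proposal is correct and matches the paper's intended derivation: the paper does not spell out a separate proof for this corollary but presents it as a direct application of Theorem~\ref{thm:hyperband_infinite_main}, with the Hoeffding-plus-union-bound step supplying the envelope $\gamma(j)\simeq j^{-1/2}$ (hence $\alpha=2$) and the stated $F$ supplying the $\beta$ parameterization. Your treatment of the two subtleties---the $(k,l)$-indexed union bound via $\delta_{k,l}=\delta/(2k^3)$ and the absorption of the $k$-dependent log factor in $\gamma$ into the final $\mathrm{polylog}(B/\delta)$---is exactly what the paper's framework requires and how the corollary is meant to be read.
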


The result of this corollary matches the anytime result of Section~4.3 of \citet{carpentier2015simple} whose algorithm was built specifically for the case of stochastic arms and the $\beta$ parameterization of $F$ defined in \eqref{eq:F_beta_assumption}. Notably, this result also matches the {\em lower bounds} shown in that work up to poly-logarithmic factors, revealing that \ouralg is nearly tight for this important special case.
However, we note that this earlier work has a more careful analysis for the fixed budget setting.

\begin{theorem}\label{thm:hyperband_infinite_alt}
Assume that Assumptions 1 and 2 of Section~\ref{sec:theory_assumptions} hold and that the sampled loss sequences obey the parametric assumptions of Equations~\ref{eq:gamma_assumption} and \ref{eq:F_discrete_assumption}. 
For any $T \in \mathbb{N}$, let $\hat{\imath}_T$ be the empirically best-performing arm output from  \succhalv from the last round $k$ of \ouralg of Figure~\ref{alg:hyperband_infinite} after exhausting a total budget of $T$ from all rounds.
Fix $\delta \in (0,1)$ and $q \in (1/K,1)$ and let
	$z_q = \log( q^{-1} ) (\Delta_{ \lceil \max\{2,q K\} \rceil }^{-\alpha}  + \frac{ 1}{q K} \sum_{i=\lceil \max\{2,q K\} \rceil}^K \Delta_i^{-\alpha} )$.
	Once $T = \widetilde{\Omega}\left( z_q \log( z_q ) \log( 1/\delta) \right)$ total pulls have been made by \ouralg we have $\widehat{\nu}_T - \nu_* \leq \Delta_{\lceil\max\{2,qK\}\rceil}$ with probability at least $1-\delta$ where $\widetilde{\Omega}(\cdot)$ hides $\log\log(\cdot)$ factors.
\end{theorem}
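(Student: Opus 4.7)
The plan is to instantiate the general \ouralg analysis template laid out before Theorem~\ref{thm:hyperband_infinite_main} for the discrete-CDF parameterization of Equation~\ref{eq:F_discrete_assumption}. Namely, I would pinpoint a target pair $(k^\star,l^\star)$ in the outer schedule at which the corresponding \succhalv call is guaranteed, by Corollary~\ref{cor:succ_halv_rand_arms_infinite}, to output an arm with gap at most $\Delta_{\lceil qK\rceil}$, and then bound the cumulative number of pulls used by all $(k,l)$ calls up to and including $(k^\star,l^\star)$.

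First I would choose $l^\star$ so that $n_{k^\star,l^\star}=2^{l^\star}$ is the smallest power of two exceeding $q^{-1}\log(2/\delta_{k^\star,l^\star})$. A binomial-tail argument (equivalently, the first conclusion of Lemma~\ref{lem:gamma_sum_bound_rand_arms} specialized to the discrete $F$) shows that, on an event of probability at least $1-\delta_{k^\star,l^\star}$, at least one of the sampled limits lies in the top $\lceil qK\rceil$ atoms, so $\nu_1-\nu_*\leq \Delta_{\lceil qK\rceil}$. Next I would compute $\mathbf{H}$ for this configuration. Under the discrete $F$, the integral defining $\mathbf{H}$ collapses to a sum over atoms of mass $1/K$; combining with $\gamma^{-1}(x)\simeq x^{-\alpha}$ and the natural choice $\epsilon\simeq \Delta_{\lceil qK\rceil}$, one obtains $\mathbf{H}(F,\gamma,n,\delta,\epsilon)\;\simeq\; \tfrac{n}{K}\sum_{j\geq \lceil qK\rceil}\Delta_j^{-\alpha} \;+\; \log(1/\delta)\,\Delta_{\lceil qK\rceil}^{-\alpha}.$ Substituting $n\simeq q^{-1}\log(1/\delta_{k^\star,l^\star})$ and multiplying by the extra $\lceil\log_2 n\rceil$ factor from Corollary~\ref{cor:succ_halv_rand_arms_infinite}, the required \succhalv budget becomes $B_{k^\star,l^\star} = \widetilde{O}\!\bigl(z_q\log(1/\delta)\bigr)$.

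Finally I would set $k^\star$ to be the smallest integer with $2^{k^\star}$ exceeding this budget, so $k^\star\simeq\log_2\!\bigl(z_q\log(1/\delta)\bigr)$, and verify the schedule constraint $k^\star-l^\star\geq\log_2(l^\star)$, which is immediate because $l^\star$ is only doubly logarithmic in $z_q$ and $1/\delta$. The total samples expended by \ouralg through stage $k^\star$ is at most $\sum_{k\leq k^\star}\sum_{l\leq k}B_{k,l}\leq 2k^\star B_{k^\star,l^\star}$, which matches the claimed $T=\widetilde{\Omega}\!\bigl(z_q\log(z_q)\log(1/\delta)\bigr)$. A union bound $\sum_{k\geq 1}\sum_{l\leq k}\delta_{k,l}\leq\delta$ over schedule entries then upgrades the per-round probability to the global guarantee, and monotonicity of the ``best-so-far'' output ensures the conclusion holds for every $T$ past this threshold.

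The main obstacle will be the bookkeeping in the middle step: one must verify that, with the specific $\epsilon$ matching the gap $\Delta_{\lceil qK\rceil}$ delivered by the arm-count bound, the residual term $\gamma^{-1}(\epsilon/16)\bigl(\log(1/\delta)+nF(\nu_*+\epsilon/4)\bigr)$ in $\mathbf{H}$ combines with the atom sum to collapse exactly to $\Delta_{\lceil qK\rceil}^{-\alpha}+\tfrac{1}{qK}\sum_{j\geq \lceil qK\rceil}\Delta_j^{-\alpha}$, and that the constants remain controlled when $qK$ is close to $1$ or when the gaps $\Delta_j$ span many orders of magnitude. Once this identification is carried out the theorem follows directly from the infinite-horizon \ouralg analysis already established.
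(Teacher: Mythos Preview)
Your proposal is correct and follows essentially the same route as the paper: parameterize $n_{k,l}$ so that $p_{n_{k,l}}\simeq q$, evaluate $\mathbf{H}$ under the discrete $F$ of Equation~\ref{eq:F_discrete_assumption} to recover $z_q$, and then apply the doubling-trick accounting from the proof of Theorem~\ref{thm:hyperband_infinite_main}. The bookkeeping you flag as the main obstacle is exactly where the paper's proof expends its effort: it resolves the choice of $\epsilon$ by an explicit case split on whether $5\bigl(F^{-1}(p_n)-\nu_*\bigr)$ exceeds $\Delta_2$, treating the two regimes of $\mathbf{H}(F,\gamma,n,\delta,\epsilon)$ separately before merging them into the single bound $c\log(k/\delta)\bigl(\Delta_{\lfloor qK\rfloor}^{-\alpha}+\tfrac{1}{qK}\sum_{i\geq\lfloor qK\rfloor}\Delta_i^{-\alpha}\bigr)$; the paper also closes with a short check that the empirical-estimate error $\gamma\bigl(B_{k,l}/\log(B_{k,l})\bigr)$ is dominated by $\Delta_{\lfloor qK\rfloor}$, which you should include as well.
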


Appealing to the stochastic setting of Corollary~\ref{cor:hyperband_stochastic_F} so that $\alpha=2$, we conclude that the sample complexity sufficient to identify an arm within the best $q$ proportion with probabiltiy $1-\delta$, up to log factors, scales like $\log(1/\delta) \log( q^{-1} ) (\Delta_{ \lceil q K \rceil }^{-\alpha}  + \frac{ 1}{q K} \sum_{i=\lceil q K \rceil}^K \Delta_i^{-\alpha} )$. 
One may interpret this result as an extension of the distribution-dependent pure-exploration results of \cite{bubeck2009pure}; but in our case, our bounds hold when the number of pulls is potentially much smaller than the number of arms $K$.
When $q=1/K$ this implies that the best arm is identified with about $\log(1/\delta) \log(K) \{\Delta_2^{-2} +\sum_{i=2}^K \Delta_i^{-2}\}$ which matches known upper bounds \cite{karnin2013almost,jamieson2014lil} and lower bounds \cite{kaufmann2015complexity} up to $\log$ factors. 
Thus, for the stochastic $K$-armed bandit problem \ouralg recovers many of the known sample complexity results up to $\log$ factors.

\subsection{Finite Horizon Setting ($\Rmax<\infty$)}
In this section we analyze the algorithm described in Section~\ref{sec:algorithm}, i.e.\ finite horizon \ouralg. We present similar theoretical guarantees as in Section~\ref{ssec:theory_infinite} for infinite horizon \ouralg, and fortunately much of the analysis will be recycled. We state the finite horizon version of the \succhalv and \ouralg algorithms in Figure~\ref{alg:hyperband_finite}.

\begin{figure}[ht!]
\centerline{
\fbox{\parbox[b]{.8\textwidth}{{\underline{\succhalv} (Finite horizon) } \\[2pt] \small
{\bf input}: Budget $B$, and $n$ arms where $\ell_{i,k}$ denotes the $k$th loss from the $i$th arm, maximum size $\Rmax$, $\eta \geq 2$ ($\eta = 3$ by default). \\[3pt]
{\bf Initialize}: $S_0 = [n]$, $s = \min \{ t \in \mathbb{N} : nR(t+1)\eta^{-t} \leq B, t \leq \log_\eta(\min\{R,n\})\}$. \\[3pt] 
\textbf{For} $k=0,1,\dots,s$ \\ [3pt]
\indent \hspace{.4cm} Set $n_k = \lfloor n \eta^{-k} \rfloor$, $r_k = \lfloor R \eta^{k-s} \rfloor$ \\[3pt]
\indent \hspace{.4cm} Pull each arm in $S_k$ for $r_k$ times.\\[3pt]
\indent \hspace{.4cm} Keep the best $\lfloor n\eta^{-(k+1)} \rfloor$ arms in terms of the $r_k$th observed loss as $S_{k+1}$.\\[3pt]
{\bf Output} : $\hat{\imath},\ell_{\hat{\imath},R}$ where $\hat{\imath}=\arg\min_{i \in S_{s+1}} \ell_{i,R}$
}}
}
\centering
\fbox{\parbox[b]{.8\textwidth}{{\underline{\ouralg} (Finite horizon)}  \\[2pt]  \small
\textbf{Input:} Budget $B$, maximum size $\Rmax$, $\eta \geq 2$ ($\eta = 3$ by default) \\[3pt]
\textbf{Initialize:} $s_{\max} = \lfloor \log(\Rmax) / \log(\eta) \rfloor$ \\[3pt]
\textbf{For} $k=1,2,\dots$\\[3pt]
\indent \hspace{.4cm} \textbf{For} $s=s_{\max},s_{\max}-1,\dots,0$\\[3pt]
\indent \hspace{.8cm} $B_{k,s}=2^k$, $n_{k,s} = \lceil \frac{2^k \eta^s}{\Rmax(s+1)}\rceil$ \\[3pt]
\indent \hspace{.8cm} $\hat{\imath}_{s}, \ell_{\hat{\imath}_s,R} \leftarrow$ \succhalv($B_{k,s}$,$n_{k,s}$,$R$,$\eta$) 
}}
\caption{The finite horizon \succhalv and \ouralg algorithms are inspired by their infinite horizon counterparts of Figure~\ref{alg:hyperband_infinite} to handle practical constraints. \ouralg calls \succhalv as a subroutine.}
\label{alg:hyperband_finite}
\end{figure} 

The finite horizon setting differs in two major ways. First, in each bracket at least one arm will be pulled $\Rmax$ times, but no arm will be pulled more than $\Rmax$ times. 
Second, the number of brackets, each representing \succhalv with a different tradeoff between $n$ and $B$, is fixed at $\log_\eta(\Rmax)+1$.
Hence, since we are sampling sequences randomly i.i.d., increasing $B$ over time would just multiply the number of arms in each bracket by a constant, affecting performance only by a small constant.

\begin{theorem} \label{thm:succ_halv_finite}
Fix $n$ arms. Let $\nu_i = \ell_{i,\Rmax}$ and assume $\nu_1 \leq \dots \leq \nu_n$. For any $\epsilon >0$ let 
\begin{align*}
z_{SH} &=  \eta (\log_\eta(\Rmax) + 1) \Big[n + \sum_{i=1}^n \min\big \{R, \gamma^{-1} \left( \max\left\{ \tfrac{\epsilon}{4} , \tfrac{\nu_{ i }- \nu_{1}}{2} \right\} \right) \big\}  \Big]
\end{align*} 
If the Successive Halving algorithm of Figure~\ref{alg:hyperband_finite} is run with any budget $B \geq z_{SH}$ then an arm $\hat{\imath}$ is returned that satisfies $\nu_{\hat{\imath}} - \nu_1 \leq \epsilon/2$. 
\end{theorem}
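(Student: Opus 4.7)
The plan is to adapt the proof of Theorem~\ref{thm:succ_halving} to the finite-horizon schedule. I would argue by induction on the round index $k=0,1,\dots,s$ that either arm $1$ survives round $k$, or else every surviving arm $j\in S_{k+1}$ already satisfies $\nu_j-\nu_1\leq\epsilon$. Because the output is $\widehat{i}=\arg\min_{i\in S_{s+1}}\ell_{i,R}$ and $\ell_{i,R}=\nu_i$ by definition of $\nu_i$ in the statement, this invariant at the final round immediately yields $\nu_{\widehat{i}}-\nu_1\leq\epsilon$.

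For the per-round step I would use the same argument as in the infinite-horizon case. Suppose $1\in S_k$ and invoke the envelope bound $|\ell_{i,r_k}-\nu_i|\leq\gamma(r_k)$. Any arm $j$ with empirical loss $\ell_{j,r_k}<\ell_{1,r_k}$ must then satisfy $\nu_j<\nu_1+2\gamma(r_k)$. Hence arm $1$ can fail to appear among the best $\lfloor n_k/\eta\rfloor$ in $S_k$ only if at least that many arms in $S_k$ have $\nu_j-\nu_1<2\gamma(r_k)$; but in that event the $\lfloor n_k/\eta\rfloor$ kept arms are themselves all within $2\gamma(r_k)$ of $\nu_1$, which is at most $\epsilon/2$ whenever $\gamma(r_k)\leq\epsilon/4$, so the second branch of the invariant holds for all subsequent rounds (they only refine the surviving set). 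Either way, round $k$ succeeds whenever $r_k\geq\gamma^{-1}\bigl(\max(\epsilon/4,(\nu_{\lfloor n_k/\eta\rfloor+1}-\nu_1)/2)\bigr)$, which is precisely the ingredient defining $z_{SH}$.

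Finally I would verify that $B\geq z_{SH}$ supplies enough resource at every round under the algorithm's schedule $n_k=\lfloor n\eta^{-k}\rfloor$, $r_k=\lfloor R\eta^{k-s}\rfloor$. The per-round cost is $n_k r_k\leq nR\eta^{-s}$ over $s+1\leq\log_\eta(R)+1$ rounds, so the total cost is at most $nR(s+1)\eta^{-s}$, which by the algorithm's choice of $s$ is at most $B$. Conversely, summing the per-round condition $r_k\geq\gamma^{-1}(\max(\epsilon/4,(\nu_{\lfloor n_k/\eta\rfloor+1}-\nu_1)/2))$ across $k=0,\dots,s$ (using $r_k\simeq R\eta^{k-s}$ to relate round $k$'s required resources to the $\nu_i$-gaps at positions $\lfloor n_k/\eta\rfloor+1$) produces exactly the sum $S_\epsilon:=\sum_{i=2}^n\gamma^{-1}(\max(\epsilon/4,(\nu_i-\nu_1)/2))$, plus an additive $n$ contribution from the floor losses in $n_k$ and $r_k$, plus an additive $R$ contribution from the commitment $r_s=R$ in the final round (a feature absent from the infinite-horizon version). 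The outer factor $\eta\log_\eta(R)$ comes from aggregating over the at-most $\log_\eta(R)+1$ rounds with the $\eta$ accounting for the $\lfloor\cdot\rfloor$'s and the amortization of $n_k r_k$ against $nR\eta^{-s}$.

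The main obstacle is carrying out this bookkeeping carefully enough to isolate $\max(R,S_\epsilon)$ rather than a sum $R+S_\epsilon$. I would expect a two-case split: when $S_\epsilon\geq R$ the cumulative distinguishability requirement dominates and the sum term alone controls the budget, while when $S_\epsilon<R$ the schedule is anyway forced by $r_s=R$ in the last round. This mirrors the $\max$ in the bound and is the key qualitative difference from Theorem~\ref{thm:succ_halving}. The remaining work — handling floors in $n_k$ and $r_k$, verifying that the chosen $s$ simultaneously satisfies $s\leq\log_\eta(\min(R,n))$ and the budget constraint, and confirming $r_k\geq\gamma^{-1}(\epsilon/4)$ once $B\geq z_{SH}$ — is a direct generalization of the infinite-horizon calculation.
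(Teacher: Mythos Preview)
Your inductive invariant and the per-round sufficient condition $r_k \geq \gamma^{-1}\bigl(\max\{\epsilon/4,(\nu_{n_{k+1}+1}-\nu_1)/2\}\bigr)$ are correct, and they match the paper's argument: the paper phrases the same thing as a contradiction via set containments (explicitly referring back to the proof of Theorem~\ref{thm:succ_halving}), but the content is identical.

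Where your plan goes astray is the budget accounting. You propose to ``sum the per-round condition across $k=0,\dots,s$'' to manufacture $S_\epsilon=\sum_{i=2}^n\gamma^{-1}(\cdot)$, but the schedule $r_k=\lfloor R\eta^{k-s}\rfloor$ is fixed by the algorithm and you must check that \emph{each} $r_k$ individually clears its threshold; a sum of thresholds being met says nothing about any single round. The paper handles this correctly via the relation
\[
r_k+1 \;\geq\; \widetilde r_k \;=\; R\eta^{k-s} \;>\; \frac{B}{(n_{k+1}+1)\,\eta\,\log_\eta(nR/B)}\,:
\]
failure at some round $k$ then forces $B<\eta\log_\eta(R)\,(n_{k+1}+1)\bigl(1+\gamma^{-1}(\cdot)\bigr)$, and taking the \emph{maximum} over $k$ (equivalently over the pivot index $i=n_{k+1}+1$), followed by the monotonicity bound $\max_i i\,\gamma^{-1}(\cdot_i)\leq \sum_i\gamma^{-1}(\cdot_i)$, produces the sum $S_\epsilon$. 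So the $\sum_{i=2}^n$ comes from a max-over-rounds converted to a sum by monotonicity of $\gamma^{-1}$, not from summing the rounds.

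Your explanation of the $\max\{R,S_\epsilon\}$ is also off. It does not emerge from a two-case split on $S_\epsilon\gtrless R$, nor from an ``$r_s=R$ commitment'' appearing as an additive term in a summed requirement. The $R$ is there so that $B\geq z_{SH}$ guarantees the algorithm's choice of $s$ is feasible in $\{0,\dots,\lfloor\log_\eta(\min\{R,n\})\rfloor\}$ with $r_0\geq1$ and $n_s\geq1$; the paper simply asserts this is ``straightforward to verify.'' Once $s$ is well defined, the contradiction argument above supplies the $S_\epsilon$ part, and the $\max$ is just a convenient way to package both requirements into a single expression.
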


Recall that $\gamma(\Rmax) = 0$ in this setting and by definition $\sup_{y \geq 0} \gamma^{-1}(y) \leq R$. 
Note that Lemma~\ref{lem:gamma_sum_bound_rand_arms} still applies in this setting and just like above we obtain the following corollary.

\begin{corollary}
\label{cor:succ_halv_rand_arms_finite}
Fix $\delta \in (0,1)$ and $\epsilon \geq 4(F^{-1}(\tfrac{\log(2/\delta)}{n}) - \nu_*)$. 
Let $\mathbf{H}(F,\gamma,n,\delta,\epsilon)$ be as defined in Lemma~\ref{lem:gamma_sum_bound_rand_arms} and $B = \eta \log_\eta(\Rmax) (n+\max\{R, \mathbf{H}(F,\gamma,n,\delta,\epsilon)\})$. 
If the \succhalv algorithm of Figure~\ref{alg:hyperband_finite} is run with the
specified $B$ and $n$ arm configurations drawn randomly according to $F$ then
an arm $\hat{\imath} \in [n]$ is returned such that with probability at least
$1-\delta$ we have 
$\nu_{\hat{\imath}} - \nu_* \leq \big(F^{-1}(\frac{\log(2/\delta)}{n}) - \nu_*\big) + \epsilon/2$.
In particular, if $B = 4 \lceil \log_2(n) \rceil  \mathbf{H}(F,\gamma,n,\delta)$ and $\epsilon = 4(F^{-1}(\tfrac{\log(2/\delta)}{n}) - \nu_*)$ then $\nu_{\hat{\imath}} - \nu_* \leq 3\big(F^{-1}(\frac{\log(2/\delta)}{n}) - \nu_*\big)$ with probability at least $1-\delta$.
\end{corollary}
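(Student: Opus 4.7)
The plan is to chain the deterministic guarantee of Theorem~\ref{thm:succ_halv_finite} with the random-arm control of Lemma~\ref{lem:gamma_sum_bound_rand_arms}, exactly mirroring the derivation of the infinite-horizon Corollary~\ref{cor:succ_halv_rand_arms_infinite}. The probabilistic content of the statement lives entirely in the lemma, while the algorithmic content lives entirely in the theorem; the task is to glue them together and then specialize to the stated $\epsilon$.

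First, I invoke Lemma~\ref{lem:gamma_sum_bound_rand_arms} with the given $\delta$ and $\epsilon$ (which satisfies the hypothesis $\epsilon \geq 4(F^{-1}(p_n)-\nu_*)$ where $p_n = \log(2/\delta)/n$). This provides a single event $\mathcal{E}$ of probability at least $1-\delta$ on which both $\nu_1 \leq F^{-1}(p_n)$ and
\[
\sum_{i=2}^n \gamma^{-1}\!\left(\max\!\left\{\tfrac{\epsilon}{4},\tfrac{\nu_i-\nu_1}{2}\right\}\right) \;\leq\; \mathbf{H}(F,\gamma,n,\delta,\epsilon).
\]
All subsequent analysis is deterministic and takes place on $\mathcal{E}$.

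Second, on $\mathcal{E}$ I apply Theorem~\ref{thm:succ_halv_finite}. Plugging the lemma's bound into the definition of $z_{SH}$ shows that the chosen budget
\[
B \;=\; \eta\log_\eta(\Rmax)\Bigl(n+\max\{\Rmax,\mathbf{H}(F,\gamma,n,\delta,\epsilon)\}\Bigr)
\]
satisfies $B \geq z_{SH}$, so Theorem~\ref{thm:succ_halv_finite} guarantees that \succhalv returns an arm $\widehat{i}$ with $\nu_{\widehat{i}} - \nu_1 \leq \epsilon$. Combining with the first bullet of the lemma via the triangle inequality,
\[
\nu_{\widehat{i}} - \nu_* \;=\; (\nu_{\widehat{i}}-\nu_1) + (\nu_1-\nu_*) \;\leq\; \epsilon + \bigl(F^{-1}(p_n) - \nu_*\bigr),
\]
establishing the main claim. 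The "in particular" specialization is then immediate: choosing $\epsilon = 4(F^{-1}(p_n)-\nu_*)$ sits at the boundary of the admissible range and yields the promised factor-$5$ bound.

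The only delicate point, and the thing I would check most carefully, is the matching of constants between $z_{SH}$ in Theorem~\ref{thm:succ_halv_finite} and the budget $B$ declared in the corollary, in particular the role of the additive $\max\{\Rmax,\cdot\}$ term, which the finite-horizon algorithm needs in order to guarantee that the final surviving arm receives $\Rmax$ pulls. Once the constants are verified to line up, no new probabilistic argument is required: the $1-\delta$ event is inherited verbatim from the lemma, and all randomness is absorbed into the single step of bounding the sum of inverse-$\gamma$ terms by $\mathbf{H}(F,\gamma,n,\delta,\epsilon)$.
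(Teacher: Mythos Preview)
Your proposal is correct and follows exactly the route the paper takes: the paper states the corollary with the remark that ``Lemma~\ref{lem:gamma_sum_bound_rand_arms} still applies in this setting and just like above we obtain the following corollary,'' i.e., it is the direct combination of Theorem~\ref{thm:succ_halv_finite} (to guarantee $\nu_{\widehat{i}}-\nu_1\leq\epsilon$ once $B\geq z_{SH}$) with Lemma~\ref{lem:gamma_sum_bound_rand_arms} (to control $\nu_1-\nu_*$ and to bound the sum in $z_{SH}$ by $\mathbf{H}$), exactly as you have written. Your reading of the mis-typeset budget as $\eta\log_\eta(\Rmax)\bigl(n+\max\{\Rmax,\mathbf{H}(F,\gamma,n,\delta,\epsilon)\}\bigr)$ is the intended one and lines up with the $z_{SH}$ of Theorem~\ref{thm:succ_halv_finite}.
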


As in Section~\ref{sec:param} we can apply the $\alpha,\beta$ parameterization for interpretability, with the added constraint that $\sup_{y \geq 0} \gamma^{-1}(y) \leq R$ so that $\gamma(j) \simeq \1_{j < \Rmax} \left( \frac{1}{j} \right)^{1/\alpha}$.
Note that the approximate sample complexity of \succhalv given in \eqref{eq:succ_halv_loose} is still valid for the finite horizon algorithm.

Fixing some $\Delta >0$, $\delta \in (0,1)$, 
 and applying the parameterization of~\eqref{eq:F_beta_assumption} we recognize that if $n \simeq \Delta^{-\beta} \log(1/\delta)$ and the sufficient sampling budgets (treating $\eta$ as an absolute constant) of the uniform allocation (Equation \ref{eq:unif_budget}) and \succhalv (\eqref{eq:succ_halv_loose}) satisfy
\begin{align*}
\text{Uniform allocation}  \quad\quad B &\simeq R \Delta^{-\beta} \log(1/\delta) \\
\text{\succhalv}  \quad\quad B 
 &\simeq \log(\Delta^{-1} \log(1/\delta)) \log(1/\delta) \left[ R + \Delta^{-\beta} \frac{1-(\alpha/\beta)R^{1-\beta/\alpha}}{1-\alpha/\beta}  \right]
 \end{align*}
then both also satisfy $\nu_{\hat{\imath}}- \nu_* \lesssim \Delta$ with probability at least $1-\delta$. 
Recalling that a larger $\alpha$ means slower convergence and that a larger $\beta$ means a greater difficulty of sampling a good limit,
note that when $\alpha/\beta < 1$ the budget of \succhalv behaves like $R+\Delta^{-\beta}\log(1/\delta)$ but as $\alpha/\beta \rightarrow \infty$ the budget asymptotes to $R \Delta^{-\beta} \log(1/\delta)$.

We can also apply the discrete-CDF parameterization of \eqref{eq:F_discrete_assumption}.
For any $q \in (0,1)$, if $n \simeq q^{-1} \log(1/\delta)$ and the measurement budgets of the uniform allocation (Equation \ref{eq:unif_budget}) and \succhalv (Equation \ref{eq:succ_halv_loose}) satisfy
\begin{align*}
\text{Uniform allocation:} \quad\quad\quad& B \simeq \log(1/\delta) \begin{cases}
 K \min\left\{ \Rmax, \displaystyle\max_{j=2,\dots,K} \Delta_j^{-\alpha} \right\} & \text{ if } q = 1/K \\[6pt]
  q^{-1} \min\{ \Rmax, \Delta_{ \lceil qK  \rceil}^{-\alpha} \} & \text{ if } q > 1/K 
 \end{cases}\\
\text{\succhalv:} \quad\quad\quad&  \\
B \simeq \log(q^{-1} \log(1/\delta)) \log(1/\delta) &\begin{cases}
  \min\{ \Rmax, \Delta_2^{-\alpha} \} + \displaystyle\sum_{j=2}^K \min\{ \Rmax, \Delta_j^{-\alpha} \} & \text{ if } q = 1/K \\
  \min\{ \Rmax,\Delta_{ \lceil qK  \rceil}^{-\alpha}\} + \tfrac{1}{qK} \displaystyle\sum_{j= \lceil qK  \rceil}^K \min\{ \Rmax, \Delta_j^{-\alpha}\} & \text{ if } q > 1/K 
  \end{cases}
\end{align*}
then an arm that is in the best $q$-fraction of arms is returned, i.e.\ $\hat{\imath}/K \approx  q $ and $\nu_{\hat{\imath}} - \nu_* \lesssim \Delta_{\lceil \max\{2,qK\} \rceil}$, with probability at least $1-\delta$.
Once again we observe a stark difference between uniform allocation and \succhalv, particularly when $\Delta_j^{-\alpha} \ll \Rmax$ for many values of $j \in \{1,\dots,n\}$.

Armed with Corollary~\ref{cor:succ_halv_rand_arms_finite}, all of the discussion of Section~\ref{sec:theory:guarantees_infinite} preceding Theorem~\ref{thm:hyperband_infinite_main} holds for the finite case ($\Rmax < \infty$) as well. 
Predictably analogous theorems also hold for the finite horizon setting, but their specific forms (with the $\text{polylog}$ factors) provide no additional insights beyond the sample complexities sufficient for \succhalv to succeed, given immediately above.

It is important to note that in the finite horizon setting, for all sufficiently large $B$ (e.g. $B>3R$) and all distributions $F$, the budget $B$ of \succhalv should scale {\em linearly} with $n \simeq \Delta^{-\beta} \log(1/\delta)$ as $\Delta \rightarrow 0$. 
Contrast this with the infinite horizon setting in which the ratio of $B$ to $n$ can become unbounded based on the values of $\alpha,\beta$ as $\Delta \rightarrow 0$.
One consequence of this observation is that in the finite horizon setting it suffices to set $B$ large enough to identify an $\Delta$-good arm with just constant probability, say $1/10$, and then repeat \succhalv $m$ times to boost this constant probability to probability $1-(\frac{9}{10})^{m}$.
While in this theoretical treatment of \ouralg we grow $B$ over time, in practice we recommend fixing $B$ as a multiple of $R$ as we have done in Section~\ref{sec:algorithm}.  The fixed budget version of finite horizon \ouralg is more suitable for practical application due to the constant time, instead of exponential time, between configurations trained to completion in each outer loop.


\section{Conclusion}\label{sec:extensions}
We conclude by discussing three potential extensions related to parallelizing \ouralg for distributed computing, adjusting for training methods with different convergence rates, and combining \ouralg with non-random sampling methods.  

 \textbf{Distributed implementations}. \ouralg has the potential to be parallelized since arms are independent and sampled randomly.  The most straightforward parallelization scheme is to distribute individual brackets of \succhalv to different machines.  This can be done asynchronously and as machines free up, new brackets can be launched with a different set of arms.  One can also parallelize a single bracket so that each round of \succhalv runs faster.  One drawback of this method is that if $\Rmax$ can be computed on one machine, the number of tasks decreases exponentially as arms are whittled down so a more sophisticated job priority queue must be managed. Devising parallel generalizations of \ouralg that efficiently leverage massive distributed clusters while minimizing overhead costs is an interesting avenue for future work.
  
\textbf{Adjusting for different convergence rates}. A second open challenge involves generalizing the ideas behind \ouralg to settings where configurations have drastically differing convergence rates.  Configurations can have different convergence rates 
if they have hyperparameters that impact convergence (e.g., learning rate decay for SGD or neural networks with differing numbers of layers or hidden units), and/or if they correspond to different model families (e.g., deep networks versus decision trees).
The core issue arises when configurations with drastically slower convergence rates ultimately result in better models. 
To address these issues, we should be able to adjust the resources allocated to each configuration so that a fair comparison can be made at the time of elimination.

\textbf{Incorporating non-random sampling}.
Finally, \ouralg can benefit from different sampling schemes aside from simple random search.  Quasi-random methods like Sobol or latin hypercube which were studied in \citet{Bergstra2012} may improve the performance of \ouralg by giving better coverage of the search space.  Alternatively, meta-learning can be used to define intelligent priors informed by previous experimentation~\citep{Feurer2015}. Finally, as mentioned in Section~\ref{sec:related}, exploring ways to combine \ouralg with adaptive configuration selection strategies is a very promising future direction.

\acks{KJ is supported by ONR awards N00014-15-1-2620 and N00014-13-1-0129. AT is supported in part by a Google Faculty Award and an AWS in Education Research Grant award. }

\clearpage
\appendix
\section{Additional Experimental Results}\label{add_results}
Additional details for experiments presented in Section~\ref{sec:algorithm} and \ref{sec:experiments} are provided below.
\subsection{LeNet Experiment}
The search space for the LeNet example discussed in Section~\ref{ssec:lenet} is shown in Table~\ref{tab:lenet}.
\begin{table}[h!]
\begin{center}
\begin{tabular}{| l | l | l | l |}
 \hline
Hyperparameter & Scale & Min & Max \\ 
\hline
Learning Rate & log & 1e-3 & 1e-1 \\
\hline
Batch size & log & 1e1 & 1e3 \\
\hline
Layer-2 Num Kernels (k2) & linear & 10 & 60 \\
\hline
Layer-1 Num Kernels (k1) & linear & 5 & k2\\
\hline
\end{tabular}
\caption{Hyperparameter space for the LeNet application of Section~\ref{ssec:lenet}. Note that the number of kernels in Layer-1 is upper bounded by the number of kernels in Layer-2.}
\label{tab:lenet}
\end{center}
\end{table}

\subsection{Experiments Using Alex Krizhevsky's CNN Architecture}
For the experiments discussed in Section~\ref{ssec:cnn}, the exact architecture used is the $18\%$ model provided on cuda-convnet for CIFAR-10.\footnote{The model specification is available at \url{http://code.google.com/p/cuda-convnet/}.}  

\textbf{Search Space:} The search space used for the experiments is shown in Table \ref{tab:cnn_search_space}.  The learning rate reductions hyperparameter indicates how many times the learning rate was reduced by a factor of 10 over the maximum iteration window.  For example, on CIFAR-10, which has a maximum iteration of 30,000, a learning rate reduction of 2 corresponds to reducing the learning every 10,000 iterations, for a total of 2 reductions over the 30,000 iteration window.  All hyperparameters, with the exception of the learning rate decay reduction, overlap with those in \citet{Snoek2012}.  Two hyperparameters in \citet{Snoek2012} were excluded from our experiments: (1) the width of the response normalization layer was excluded due to limitations of the Caffe framework and (2) the number of epochs was excluded because it is incompatible with dynamic resource allocation. 

\begin{table}[h]
\begin{center}
\begin{tabular}{lcccc}
Hyperparameter  & Scale & Min & Max\\
\hline
\emph{Learning Parameters} \\
\hspace{0.25cm}Initial Learning Rate  & log & $5*10^{-5}$ & 5\\
\hspace{0.25cm} Conv1 $L_2$ Penalty  & log & $5*10^{-5}$ & 5\\
\hspace{0.25cm} Conv2 $L_2$ Penalty  & log & $5*10^{-5}$ & 5\\
\hspace{0.25cm} Conv3 $L_2$ Penalty  & log & $5*10^{-5}$ & 5\\
\hspace{0.25cm} FC4 $L_2$ Penalty  & log & $5*10^{-3}$ & 500\\
\hspace{0.25cm}Learning Rate Reductions & integer & 0 & 3\\
\emph{Local Response Normalization}\\
\hspace{0.25cm}Scale  & log & $5*10^{-6}$ & 5 \\
\hspace{0.25cm}Power  & linear & 0.01 & 3
\end{tabular}
\caption{Hyperparameters and associated ranges for the three-layer convolutional network.}\label{tab:cnn_search_space}
\end{center}
\end{table}

\textbf{Data Splits:}  For CIFAR-10, the training (40,000 instances) and validation (10,000 instances) sets were sampled from data batches 1-5 with balanced classes.  The original test set (10,000 instances) was used for testing.  For MRBI, the training (10,000 instances) and validation (2,000 instances) sets were sampled from the original training set with balanced classes.  The original test set (50,000 instances) was used for testing.  Lastly, for SVHN, the train, validation, and test splits were created using the same procedure as that in \citet{svhnsplit}.  

\textbf{Comparison with Early-Stopping:}  \citet{earlystopping2015} proposed an early-stopping method for neural networks and combined it with SMAC to speed up hyperparameter optimization.  Their method stops training a configuration if the probability of the configuration beating the current best is below a specified threshold.  This probability is estimated by extrapolating learning curves fit to the intermediate validation error losses of a configuration.  If a configuration is terminated early, the predicted terminal value from the estimated learning curves is used as the validation error passed to the hyperparameter optimization algorithm.  Hence, if the learning curve fit is poor, it could impact the performance of the configuration selection algorithm.  While this approach is heuristic in nature, it could work well in practice so we compare \ouralg to SMAC with early termination (labeled SMAC (early) in Figure \ref{fig:cnn_bars}).  We used the conservative termination criterion with default parameters and recorded the validation loss every 400 iterations and evaluated the termination criterion 3 times within the training period (every 8k iterations for CIFAR-10 and MRBI and every 16k iterations for SVHN).\footnote{We used the code provided at https://github.com/automl/pylearningcurvepredictor.}  Comparing the performance by the number of total iterations as mulitple of $\Rmax$ is conservative because it does not  account for the time spent fitting the learning curve in order to check the termination criterion.  

\begin{figure}
\centering
\subfigure[CIFAR-10]
{\includegraphics[width=7cm,trim=0 0 40 0,page=1]{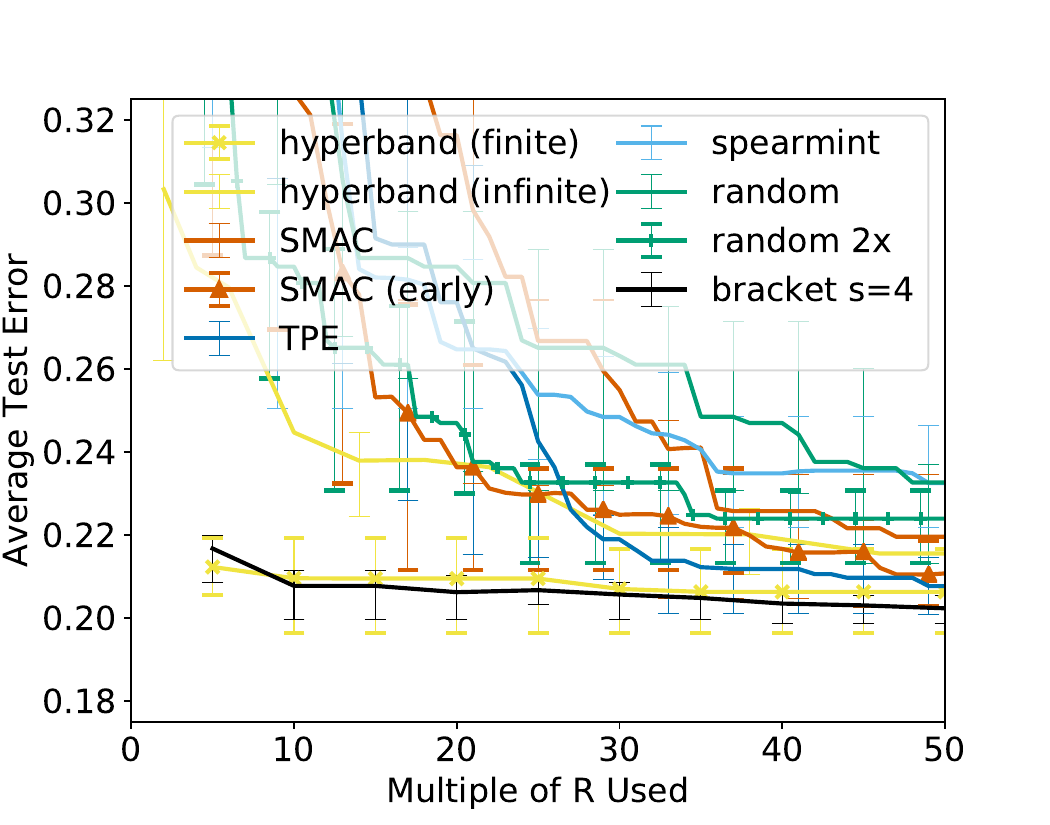}}
\subfigure[MRBI]
{\includegraphics[width=7cm,trim=0 0 40 0,page=2]{error_avg_inf_bars.pdf}}
\subfigure[SVHN]
{\includegraphics[width=7cm,trim=0 0 40 0,page=3]{error_avg_inf_bars.pdf}}
\caption{Average test error across 10 trials is shown in all plots.  Error bars indicate the top and bottom quartiles of the test error corresponding to the model with the best validation error}
\label{fig:cnn_bars}
\end{figure} 
\subsection{117 Data Sets Experiment}
For the experiments discussed in Section~\ref{sssec:117data}, we followed \citet{Feurer2015} and imposed a 3GB memory limit, a 6-minute timeout
for each hyperparameter configuration and a one-hour time window to evaluate
each searcher on each data set. Moreover, we evaluated the performance of each searcher
by aggregating results across all data sets and
reporting the average rank of each method.  
Specifically, the hour training window is divided up into 30 second intervals and, at each time point, the model with the best validation error at that time is used in the calculation of the average error across all trials for each (data set-searcher) pair.  Then, the performance of each searcher is ranked by data set and averaged across all data sets.  All experiments were performed on Google Cloud
Compute n1-standard-1 instances in us-central1-f region with 1 CPU and 3.75GB of memory.

\textbf{Data Splits:}  \citet{Feurer2015} split each data set into 2/3 training and 1/3 test set,
whereas we introduce a validation set to avoid overfitting to the test data. We
also used 2/3 of the data for training, but split the rest of the data into two
equally sized validation and test sets. We reported results on both the
validation and test data. Moreover, we performed 20 trials of each (data set-searcher) pair,
and as in \citet{Feurer2015} we
kept the same data splits across trials, while using a different random seed for each searcher in each trial.

\textbf{Shortcomings of the Experimental Setup:} 
The benchmark contains a large variety of training set sizes and feature dimensions\footnote{Training set size ranges from 670 to 73,962 observations, and number of features ranges from 1 to 10,935.} resulting in random search being able to test 600 configurations on some data sets but just dozens on others. \ouralg  was designed under the implicit assumption that computation scaled at least linearly with the data set size. For very small data sets that are trained in seconds, the initialization overheads dominate the computation and subsampling provides
no computational benefit. In addition, many of
the classifiers and preprocessing methods under consideration return memory
errors as they require storage quadratic in the number of features (e.g.,
covariance matrix) or the number of observations (e.g., kernel methods). These errors
 usually happen immediately (thus wasting little time); however, they often
occur on the full data set and not on subsampled data sets.
A searcher like \ouralg that uses a subsampled data set could spend significant time training on a subsample only to error out when attempting to train it on the full data set. 

\subsection{Kernel Classification Experiments}
Table~\ref{tab:kernel_lsqr} shows the hyperparameters and associated ranges considered in the kernel least squares classification experiment discussed in Section~\ref{ssec:kernel_lsqr}.  
\begin{table}[h!]
\begin{center}
\begin{tabular}{| l | l |l|}
 \hline
Hyperparameter & Type & Values\\ 
\hline
preprocessor & Categorical & min/max, standardize, normalize\\
\hline
kernel & Categorical & rbf, polynomial, sigmoid \\
\hline
C  & Continuous & log $[10^{-3},10^5]$\\
\hline
gamma & Continuous & log $[10^{-5},10]$\\
\hline
degree & if kernel=poly & integer [2, 5]\\
\hline
coef0 & if kernel=poly, sigmoid & uniform [-1.0, 1.0]\\
\hline
\end{tabular}
\caption{Hyperparameter space for kernel regularized least squares classification problem discussed in Section~\ref{ssec:kernel_lsqr}.}
\label{tab:kernel_lsqr}
\end{center}
\end{table}

The cost term C is divided by the number of samples so that the tradeoff between the squared error and the $L_2$ penalty would remain constant as the resource increased (squared error is summed across observations and not averaged).  The regularization term $\lambda$ is equal to the inverse of the scaled cost term C.  Additionally, the average test error with the top and bottom quartiles across 10 trials are show in Figure~\ref{fig:kernel_lsqr_bars}.

\begin{figure}
\centering
  \includegraphics[width=7cm,page=1,trim=10 10 10 10]{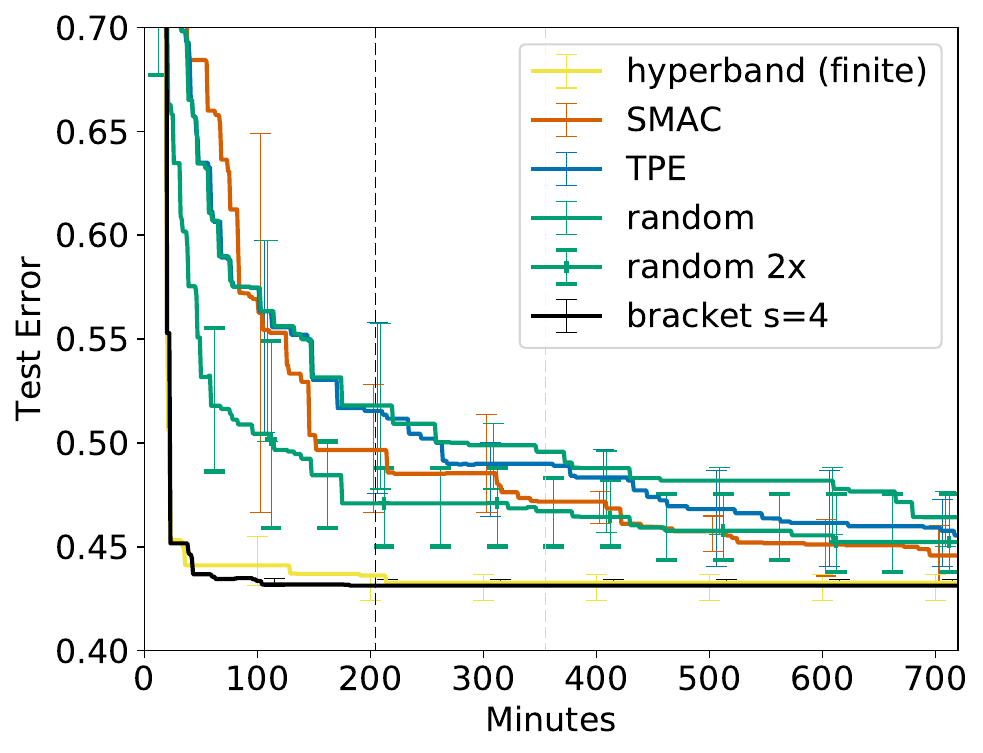}
\caption{Average test error of the best kernel regularized least square classification model found by each searcher on CIFAR-10.  The color coded dashed lines indicate when the last trial of a given searcher finished.  Error bars correspond to the top and bottom quartiles of the test error across 10 trials.} \label{fig:kernel_lsqr_bars}
\end{figure}
Table~\ref{tab:random_features} shows the hyperparameters and associated ranges considered in the random features kernel approximation classification experiment discussed in Section~\ref{ssec:random_features}.
The regularization term $\lambda$ is divided by the number of features so that the tradeoff between the squared error and the $L_2$ penalty would remain constant as the resource increased.  Additionally, the average test error with the top and bottom quartiles across 10 trials are show in Figure~\ref{fig:kernel_approx_bars}.
\begin{table}[h!]
\begin{center}
\begin{tabular}{| l | l |l|}
 \hline
Hyperparameter & Type & Values\\ 
\hline
preprocessor & Categorical & none, min/max, standardize, normalize\\
\hline
$\lambda$  & Continuous & log $[10^{-3},10^{5}]$\\
\hline
gamma & Continuous & log $[10^{-5},10]$\\
\hline
\end{tabular}
\caption{Hyperparameter space for random feature kernel approximation classification problem discussed in Section~\ref{ssec:random_features}.}
\label{tab:random_features}
\end{center}
\end{table}

\begin{figure}
\centering
\includegraphics[width=7cm,page=1,trim=10 10 10 10]{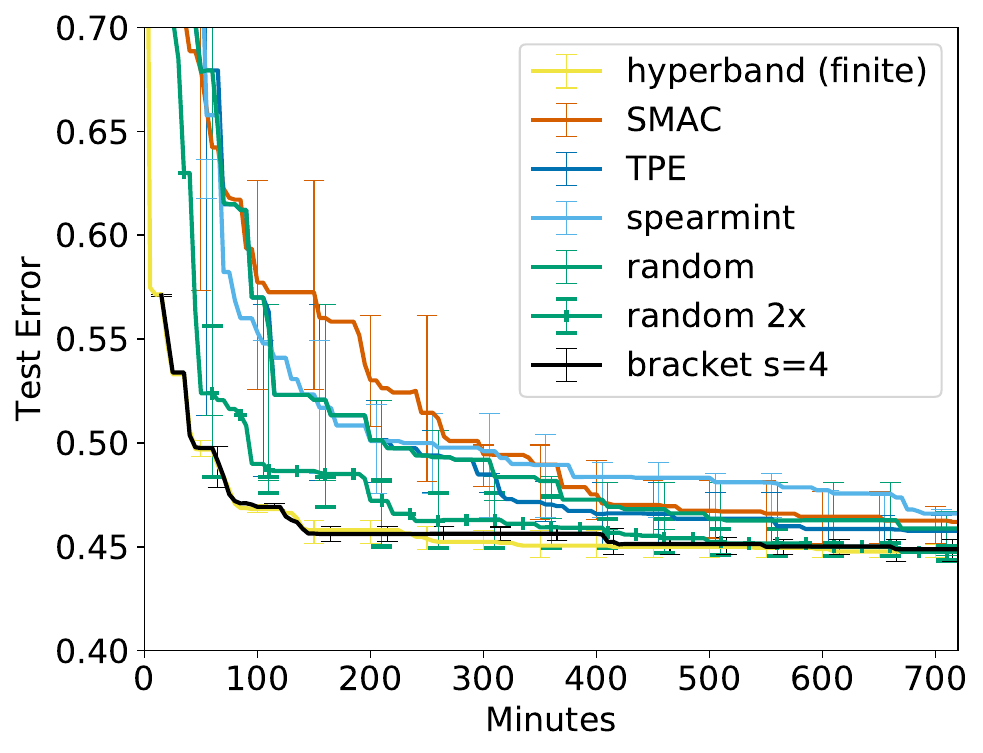}
\caption{Average test error of the best random features model found by each searcher on CIFAR-10.  The test error for \ouralg and bracket $s=4$ are calculated in every evaluation instead of at the end of a bracket.  Error bars correspond to the top and bottom quartiles of the test error across 10 trials.} \label{fig:kernel_approx_bars}
\end{figure}


\section{Proofs}
\label{sec:theory_proofs}
In this section, we provide proofs for the theorems presented in Section~\ref{sec:theory}.
\subsection{Proof of Theorem~\ref{thm:succ_halving}}
\begin{proof}
First, we verify that the algorithm  never takes a total number of samples
that exceeds the budget $B$:
\begin{align*}
\sum_{k=0}^{\lceil \log_2(n) \rceil -1} |S_k|  \left\lfloor \tfrac{ B }{|S_k| \lceil \log(n) \rceil } \right\rfloor \leq \sum_{k=0}^{\lceil \log_2(n) \rceil -1} \tfrac{ B }{\lceil \log(n) \rceil }  \leq B \, .
\end{align*}

For notational ease, let  $\ell_{i,j} := \ell_j(X_i)$. 
Again, for each $i \in [n] := \{1,\dots,n\}$ we assume the limit $\lim_{k \rightarrow \infty} \ell_{i,k}$ exists and is equal to $\nu_i$.
As a reminder, $\gamma : \mathbb{N} \rightarrow \R$  is defined as the pointwise smallest, monotonically decreasing function satisfying
\begin{align}
\max_{i} | \ell_{i,j} - \nu_i | \leq \gamma(j) \ ,  \ \ \forall j \in \mathbb{N}.
\end{align}
Note $\gamma$ is guaranteed to exist by the existence of $\nu_i$ and bounds the deviation from the limit value as the sequence of iterates $j$ increases. 

Without loss of generality, order the terminal losses
so that $\nu_1\leq \nu_2\leq\dots\leq\nu_n$. 
Assume that $B\geq z_{SH}$. 
Then we have for each round $k$
\begin{align*}
    r_k &\geq \frac{B}{|S_k| \lceil \log_2(n) \rceil}-1  \\
    &\geq \frac{2}{|S_k|}\max_{i=2,\dots,n} i\bigg(1+\gamma^{-1}\big(\max\big\{\frac{\epsilon}{4},\frac{\nu_i-\nu_1}{2}\big\}\big)\bigg) -1 \\
    &\geq \frac{2}{|S_{k}|} \ (\lfloor |S_{k}|/2 \rfloor+1) \bigg(1+\gamma^{-1}\big(\max\big\{\frac{\epsilon}{4},\frac{\nu_{\lfloor |S_{k}|/2 \rfloor+1}-\nu_1}{2}\big\}\big)\bigg) -1 \\
    &\geq \bigg(1+\gamma^{-1}\big(\max\big\{\frac{\epsilon}{4},\frac{\nu_{\lfloor |S_{k}|/2 \rfloor+1}-\nu_1}{2}\big\}\big)\bigg) - 1\\
    &= \gamma^{-1}\big(\max\big\{\frac{\epsilon}{4},\frac{\nu_{\lfloor |S_{k}|/2 \rfloor+1}-\nu_1}{2}\big\}\big),
\end{align*}
where the fourth line follows from $\lfloor |S_{k}|/2 \rfloor \geq  |S_{k}|/2 -1$.

First we show that $\ell_{i,t}-\ell_{1,t} > 0$ for all $t \geq \tau_i := \gamma^{-1}\big(\frac{\nu_i-\nu_1}{2}\big)$.
Given the definition of $\gamma$, we have for all $i \in [n]$ that $|\ell_{i,t} - \nu_i| \leq \gamma(t) \leq \frac{\nu_i-\nu_1}{2}$ where the last inequality holds for $t \geq \tau_i$. 
Thus, for $t \geq \tau_i$ we have
\begin{align*}
    \ell_{i,t} - \ell_{1,t} &= \ell_{i,t} - \nu_i +\nu_i -\nu_1 + \nu_1 - \ell_{1,t} \\
    & = \ell_{i,t} - \nu_i - (\ell_{1,t} - \nu_1) + \nu_i - \nu_1 \\
    & \geq -2 \gamma(t) +\nu_i - \nu_1 \\
    & \geq -2\frac{\nu_i-\nu_1}{2} +\nu_i - \nu_1\\
    &= 0.
\end{align*}
Under this scenario, we will eliminate arm $i$ before arm $1$ since on each round the arms are sorted by their empirical losses and the top half are discarded.
Note that by the assumption the $\nu_i$ limits are non-decreasing in $i$ so that the $\tau_i$ values are non-increasing in $i$.

Fix a round $k$ and assume $1 \in S_k$ (note, $1\in S_0$).
The above calculation shows that  
\begin{align}
t \geq \tau_i \implies \ell_{i,t} \geq \ell_{1,t}. \label{eqn:suff_emp_losses}
\end{align} 
Consequently,
\begin{align*}
\{1 \in S_k, \ \ 1 \notin S_{k+1} \} \iff& \left\{ \sum_{i \in S_k} \1\{ \ell_{i,r_k} < \ell_{1,r_k} \} \geq \lfloor |S_{k}|/2 \rfloor \right\} \\
\implies& \left\{ \sum_{i \in S_k} \1\{ r_k < \tau_i \} \geq \lfloor |S_{k}|/2 \rfloor \right\} \\
\implies& \left\{ \sum_{i = 2}^{\lfloor |S_{k}|/2 \rfloor+1} \1\{ r_k < \tau_i \} \geq \lfloor |S_{k}|/2 \rfloor \right\} \\
\iff& \left\{ r_k < \tau_{\lfloor |S_{k}|/2 \rfloor+1} \right\}.
\end{align*}
where the first line follows by the definition of the algorithm, the second by Equation~\ref{eqn:suff_emp_losses}, and the third by $\tau_i$ being non-increasing (for all $i < j$ we have $\tau_i \geq \tau_j$ and consequently, $\1\{ r_k < \tau_i \} \geq \1\{ r_k < \tau_j \}$ so the \textit{first} indicators of the sum not including $1$ would be on before any other $i$'s in $S_k \subset [n]$ sprinkled throughout $[n]$).
This implies
\begin{align}
\{ 1 \in S_k, \ \ r_k \geq \tau_{\lfloor |S_{k}|/2 \rfloor+1} \} \implies \{ 1 \in S_{k+1} \}. \label{eqn:recursive_keep}
\end{align}

Recalling that $r_k \geq \gamma^{-1}\big(\max\big\{\frac{\epsilon}{4},\frac{\nu_{\lfloor |S_{k}|/2 \rfloor+1}-\nu_1}{2}\big\}\big)$ and $\tau_{\lfloor |S_{k}|/2 \rfloor+1} =  \gamma^{-1}\big(\frac{\nu_{\lfloor |S_{k}|/2 \rfloor+1}-\nu_1}{2}\big)$, we examine the following three exhaustive cases:
\begin{itemize}
    \item \textbf{Case 1:} $\frac{\nu_{\lfloor |S_{k}|/2 \rfloor+1}-\nu_1}{2} \geq \frac{\epsilon}{4}$ and $1 \in S_k$
    
In this case, $r_k \geq \gamma^{-1}\big( \frac{\nu_{\lfloor |S_{k}|/2 \rfloor+1}-\nu_1}{2}\big) = \tau_{\lfloor |S_{k}|/2 \rfloor+1}$.
By Equation~\ref{eqn:recursive_keep} we have that $1 \in S_{k+1}$ since $1 \in S_k$.

\item \textbf{Case 2:} $\frac{\nu_{\lfloor |S_{k}|/2 \rfloor+1}-\nu_1}{2} < \frac{\epsilon}{4}$ and $1 \in S_k$

In this case $r_k \geq \gamma^{-1}\big(\frac{\epsilon}{4}\big)$ but $\gamma^{-1}\big(\frac{\epsilon}{4}\big) < \tau_{\lfloor |S_{k}|/2 \rfloor+1}$. 
Equation~\ref{eqn:recursive_keep} suggests that it may be possible for $1 \in S_k$ but $1 \notin S_{k+1}$.
On the good event that $1 \in S_{k+1}$, the algorithm continues and on the next round either case 1 or case 2 could be true. 
So assume $1 \notin S_{k+1}$.
Here we show that $\{1 \in S_k, \ \ 1 \notin S_{k+1}\} \implies \max_{i \in S_{k+1}} \nu_i \leq \nu_1 +\epsilon/2$.
Because $1 \in S_0$, this guarantees that \succhalv either exits with arm $\widehat{i} = 1$ or some arm $\widehat{i}$ satisfying $\nu_{\widehat{i}} \leq \nu_1 + \epsilon/2$.

Let $p = \min \{i \in [n] : \frac{\nu_i -\nu_1}{2} \geq \frac{\epsilon}{4}\}$.  Note that $p>\lfloor |S_{k}|/2 \rfloor+1$ by the criterion of the case and 
\[r_k \geq \gamma^{-1}\left(\frac{\epsilon}{4}\right) \geq \gamma^{-1}\left(\frac{\nu_i-\nu_1}{2}\right) = \tau_i, \quad \forall i \geq p.\]
Thus, by Equation~\ref{eqn:suff_emp_losses} ($t \geq \tau_i \implies \ell_{i,t} \geq \ell_{1,t}$) we have that arms $i\geq p$ would always have $\ell_{i,r_k} \geq \ell_{1,r_k}$ and be eliminated before or at the same time as arm $1$, presuming $1 \in S_k$. 
In conclusion, if arm $1$ is eliminated so that $1 \in S_k$ but $1 \notin S_{k+1}$ then $\max_{i \in S_{k+1}} \nu_i \leq \max_{i < p} \nu_i < \nu_1 +\epsilon/2$ by the definition of $p$.

\item \textbf{Case 3:} $1 \notin S_k$

Since $1 \in S_0$, there exists some $r <k$ such that $1 \in S_r$ and $1 \notin S_{r+1}$. 
For this $r$, only case 2 is possible since case 1 would proliferate $1 \in S_{r+1}$. 
However, under case 2, if $1 \notin S_{r+1}$ then $\max_{i \in S_{r+1}} \nu_i \leq \nu_1 +\epsilon/2$.

\end{itemize}

Because $1 \in S_0$, we either have that $1$ remains in $S_k$ (possibly alternating between cases 1 and 2) for all $k$ until the algorithm exits with the best arm $1$, or there exists some $k$ such that case 3 is true and the algorithm exits with an arm $\widehat{i}$ such that $\nu_{\widehat{i}} \leq \nu_1 + \epsilon/2$.
The proof is complete by noting that 
\begin{align*}|\ell_{\widehat{i},\lfloor \tfrac{ B/2 }{ \lceil \log_2(n) \rceil} \rfloor} - \nu_1| \leq |\ell_{\widehat{i},\lfloor \tfrac{ B/2 }{ \lceil \log_2(n) \rceil} \rfloor} - \nu_{\widehat{i}}| + |\nu_{\widehat{i}} - \nu_1| \leq \epsilon/4 + \epsilon/2 \leq \epsilon
\end{align*}
by the triangle inequality and because $B \geq 2 \lceil \log_2(n) \rceil \gamma^{-1}(\epsilon/4)$ by assumption.

The second, looser, but perhaps more interpretable form of $z_{SH}$ follows from the fact that $\gamma^{-1}(x)$ is non-increasing in $x$ so that
\begin{align*}
 \max_{i=2,\dots,n}  i \, \gamma^{-1} \left(    \max\left\{ \tfrac{\epsilon}{4} , \tfrac{\nu_{i}- \nu_{1}}{2} \right\} \right)
&\leq \sum_{i=1,\dots,n}  \gamma^{-1} \left(   \max\left\{ \tfrac{\epsilon}{4} , \tfrac{\nu_{i}- \nu_{1}}{2} \right\} \right) .
\end{align*}
\end{proof}

\subsection{Proof of Lemma~\ref{lem:gamma_sum_bound_rand_arms}}
\begin{proof}
Let $p_n = \frac{\log(2/\delta)}{n}$, $M = \gamma^{-1}\left( \tfrac{\epsilon}{16} \right)$, and $\mu =  \E[ \min\{ M, \gamma^{-1}\left( \tfrac{\nu_{i}- \nu_{*}}{4} \right) \}]$.
Define the events
\begin{align*}
\xi_1 &= \{  \nu_1 \leq F^{-1}(p_n) \} \\
\xi_2 &= \left\{  \sum_{i=1}^n \min\{ M, \gamma^{-1}\left( \tfrac{\nu_{i}- \nu_{*}}{4} \right) \} \leq n \mu + \sqrt{2 n \mu M \log(2/\delta)} + \tfrac{2}{3} M \log(2/\delta) \right\}
\end{align*}
Note that $\P(\xi_1^c) = \P( \min_{i=1,\dots,n} \nu_i > F^{-1}(p_n) ) = (1-p_n)^n \leq \exp(-n p_n) \leq \frac{\delta}{2}$.
Moreover, $\P(\xi_2^c) \leq \frac{\delta}{2}$ by Bernstein's inequality since
\begin{align*}
\E\left[ \min\{ M, \gamma^{-1}\left( \tfrac{\nu_{i}- \nu_{*}}{4} \right) \}^2 \right] \leq \E\left[ M \min\{ M, \gamma^{-1}\left( \tfrac{\nu_{i}- \nu_{*}}{4} \right) \} \right] = M \mu.
\end{align*}
Thus, $\P(\xi_1 \cap \xi_2) \geq 1-\delta$ so in what follows assume these events hold.

First we show that if $\nu_* \leq \nu_1 \leq F^{-1}(p_n)$, which we will refer to as equation $(*)$, then $\max\left\{ \tfrac{\epsilon}{4} , \tfrac{\nu_{i}- \nu_{1}}{2} \right\} \geq \max\left\{ \tfrac{\epsilon}{4}, \tfrac{\nu_{i}- \nu_{*}}{4} \right\}$. \\
\textbf{Case 1}: $\tfrac{\epsilon}{4} \leq \tfrac{\nu_{i}- \nu_{1}}{2}$ and $\epsilon \geq 4(F^{-1}(p_n) - \nu_*)$.
\begin{align*}
 \tfrac{\nu_{i}- \nu_{1}}{2} \stackrel{(*)}{\geq}  \tfrac{\nu_{i}-\nu_* + \nu_* - F^{-1}(p_n) }{2} = \tfrac{\nu_{i}-\nu_*}{4} + \tfrac{\nu_{i}-\nu_*}{4} - \tfrac{F^{-1}(p_n) -\nu_*}{2} \stackrel{(*)}{\geq}  \tfrac{\nu_{i}-\nu_*}{4} + \tfrac{\nu_{i}-\nu_1}{4} - \tfrac{F^{-1}(p_n) -\nu_*}{2} \stackrel{{\scriptscriptstyle \text{Case 1}}}{\geq}  \tfrac{\nu_{i}-\nu_*}{4}.
 \end{align*}
\textbf{Case 2}: $\tfrac{\epsilon}{4} > \tfrac{\nu_{i}- \nu_{1}}{2}$  and $\epsilon \geq 4(F^{-1}(p_n) - \nu_*)$.
\begin{align*}
\tfrac{\nu_i-\nu_*}{4} = \tfrac{\nu_i - \nu_1}{4} + \tfrac{\nu_1 - \nu_*}{4} \stackrel{{\scriptscriptstyle \text{Case 2}}}{<} \tfrac{\epsilon}{8} + \tfrac{\nu_1 - \nu_*}{4} \stackrel{(*)}{\leq} \tfrac{\epsilon}{8} + \tfrac{F^{-1}(p_n) - \nu_*}{4}  \stackrel{{\scriptscriptstyle \text{Case 2}}}{<} \tfrac{\epsilon}{4}
\end{align*}
which shows the desired result.

Consequently, for any $\epsilon \geq 4(F^{-1}(p_n) - \nu_*)$ we have
\begin{align*}
\sum_{i=1}^n &\gamma^{-1} \left( \max\left\{ \tfrac{\epsilon}{4} , \tfrac{\nu_{i}- \nu_{1}}{2} \right\} \right) \leq \sum_{i=1}^n \gamma^{-1}\left( \max\left\{ \tfrac{\epsilon}{4}, \tfrac{\nu_{i}- \nu_{*}}{4} \right\} \right) \\
&\leq\sum_{i=1}^n \gamma^{-1}\left( \max\left\{ \tfrac{\epsilon}{16}, \tfrac{\nu_{i}- \nu_{*}}{4} \right\} \right) \\
&= \sum_{i=1}^n \min\{ M, \gamma^{-1}\left( \tfrac{\nu_{i}- \nu_{*}}{4} \right) \} \\
&\leq n \mu + \sqrt{2 n \mu M \log(1/\delta)} + \tfrac{2}{3} M \log(1/\delta) \\
&\leq \left( \sqrt{n \mu} + \sqrt{\tfrac{2}{3} M \log(2/\delta)} \right)^2 \leq 2n \mu + \tfrac{4}{3} M \log(2/\delta).
\end{align*}
A direct computation yields
\begin{align*}
\mu &=  \E[ \min\{ M, \gamma^{-1}\left( \tfrac{\nu_{i}- \nu_{*}}{4} \right) \}] \\
&= \E[ \gamma^{-1}\left( \max\left\{ \tfrac{\epsilon}{16}, \tfrac{\nu_{i}- \nu_{*}}{4} \right\} \right) ] \\
&= \gamma^{-1}\left( \tfrac{\epsilon}{16}\right) F( \nu_*+\epsilon /4) + \int_{\nu_*+\epsilon/4}^\infty \gamma^{-1}(\tfrac{t - \nu_{*}}{4}) dF(t) 
\end{align*}
so that
\begin{align*}
\sum_{i=1}^n &\gamma^{-1} \left( \max\left\{ \tfrac{\epsilon}{4} , \tfrac{\nu_{i}- \nu_{1}}{2} \right\} \right)  \leq 2n \mu + \tfrac{4}{3} M \log(2/\delta) \\
&= 2n  \int_{ \nu_*+\epsilon/4 }^\infty \gamma^{-1}(\tfrac{t - \nu_{*}}{4}) dF(t)  + \left(\tfrac{4}{3} \log(2/\delta) + 2n F(\nu_*+\epsilon/4) \right) \gamma^{-1}\left( \tfrac{\epsilon}{16} \right) 
\end{align*}
which completes the proof.
\end{proof}

\subsection{Proof of Proposition~\ref{prop:uniform_sampling_bound}}
We break the proposition up into upper and lower bounds and prove them seperately.

\subsection{Uniform Allocation}
\label{sec:lowerbound_proof}

\begin{proposition} 
Suppose we draw $n$ random configurations from $F$, train each with a budget of
$j$,\footnote{Here $j$ can be bounded (finite horizon) or unbounded (infinite horizon).} and let $\hat{\imath} = \arg\min_{i
=1,\dots,n} \ell_{j}(X_i)$. Let $\nu_i = \ell_*(X_i)$ and without loss
of generality assume $\nu_1 \leq \ldots \leq \nu_n$. If
\begin{align}
B \geq n \gamma^{-1}\left( \tfrac{1}{2} (F^{-1}(\tfrac{\log(1/\delta)}{n})  - \nu_*) \right) 
\end{align}
then with probability at least $1-\delta$ we have $\nu_{\hat{\imath}} - \nu_* \leq 2 \left( F^{-1}\left( \tfrac{ \log(1/\delta)}{n} \right) - \nu_* \right)$. 
\end{proposition}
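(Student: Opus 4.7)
The plan is to reduce the claim to two independent high-probability events: one controlling how close the best of the $n$ random samples $\nu_1$ is to $\nu_*$, and one controlling the uniform-over-arms deviation of the observed losses $\ell_j(X_i)$ from their limits $\nu_i$ via the envelope function $\gamma$. Writing $\Delta := F^{-1}(\log(1/\delta)/n) - \nu_*$ and $j := \lfloor B/n \rfloor$, the goal is to show $\nu_{\widehat i} \le \nu_* + 2\Delta$.

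First I would handle the random-sampling step. Since $\nu_1,\dots,\nu_n$ are i.i.d.\ with CDF $F$, we have
\[
\Pr\bigl(\nu_1 > \nu_* + \Delta\bigr) \;=\; \bigl(1 - F(\nu_* + \Delta)\bigr)^n \;\le\; \bigl(1 - \tfrac{\log(1/\delta)}{n}\bigr)^n \;\le\; e^{-\log(1/\delta)} \;=\; \delta,
\]
by definition of $F^{-1}$ (using the standard one-sided inequality $F(F^{-1}(p)) \ge p$). Hence, with probability at least $1-\delta$, $\nu_1 \le \nu_* + \Delta$. All remaining steps are deterministic on this event.

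Next I would use the envelope. By Assumption~\ref{eqn:NIAB_assumption_limit} and the definition of $\gamma$, for every arm $i$ we have $|\ell_j(X_i) - \nu_i| \le \gamma(j)$. Since $\widehat i$ minimizes $\ell_j(X_i)$ over $i$, in particular $\ell_j(X_{\widehat i}) \le \ell_j(X_1)$. Chaining the envelope inequality on both sides gives
\[
\nu_{\widehat i} \;\le\; \ell_j(X_{\widehat i}) + \gamma(j) \;\le\; \ell_j(X_1) + \gamma(j) \;\le\; \nu_1 + 2\gamma(j).
\]

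Finally I would plug in the budget hypothesis. The condition $B \ge n\,\gamma^{-1}(\Delta/2)$ gives $j \ge \gamma^{-1}(\Delta/2)$, and by monotonicity of $\gamma$ this implies $\gamma(j) \le \Delta/2$. Combining with the bound on $\nu_1$ from the first step,
\[
\nu_{\widehat i} - \nu_* \;\le\; (\nu_1 - \nu_*) + 2\gamma(j) \;\le\; \Delta + 2\cdot\tfrac{\Delta}{2} \;=\; 2\Delta,
\]
which is exactly the claimed bound. There is no real obstacle here: the argument is a clean ``sampling + envelope'' decomposition, and the only subtle point is to be careful that the definition of $F^{-1}$ as $\min\{x: F(x)=y\}$ still yields the inequality $F(\nu_* + \Delta) \ge \log(1/\delta)/n$ needed in the sampling step; this is immediate since $F$ is right-continuous. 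The factor of $\tfrac{1}{2}$ in the budget condition is what produces the factor of $2$ in the conclusion, reflecting the two envelope terms (one for $X_{\widehat i}$ and one for $X_1$).
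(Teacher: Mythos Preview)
Your proof is correct and follows essentially the same ``sampling + envelope'' decomposition as the paper: bound $\nu_1 - \nu_*$ via the tail of the order statistic, then chain the envelope inequality $|\ell_j(X_i) - \nu_i| \le \gamma(j)$ through the comparison $\ell_j(X_{\widehat i}) \le \ell_j(X_1)$ to pick up the factor $2\gamma(j)$. The only cosmetic difference is that the paper writes the argument in terms of $i_* = \arg\min_i \ell_*(X_i)$ rather than the relabeled arm $1$, which is the same object under the WLOG ordering.
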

\begin{proof}
Note that if we draw $n$ random configurations
from $F$ and $i_* = \arg\min_{i=1,\dots,n} \ell_*(X_i)$ then
\begin{align*}
\P\left( \ell_*(X_{i_*}) -\nu_* \leq \epsilon \right) &= \P\Big(
   \bigcup_{i=1}^n \{  \ell_*(X_i) -\nu_* \leq \epsilon \} \Big) \\
&= 1 - (1 - F(\nu_* + \epsilon))^n \geq 1- e^{-n  F(\nu_* + \epsilon)}
\,,
\end{align*}
which is equivalent to saying that with probability at least $1 -
\delta$, $\ell_*(X_{i_*}) -\nu_* \leq F^{-1}(\log(1/\delta)/n) -
\nu_*$.
Furthermore, if each configuration is trained for $j$ iterations then
with probability at least $1-\delta$
\begin{align*}
\ell_{*}(X_{\hat{\imath}}) - \nu_*  \leq \ell_{j}(X_{\hat{\imath}}) - \nu_* + \gamma(j) \leq \ell_{j}(X_{i_*}) - \nu_* + \gamma(j)  \\
\leq \ell_{*}(X_{i_*}) - \nu_* + 2\gamma(j)  \leq F^{-1}\left( \tfrac{ \log(1/\delta) }{n} \right) - \nu_* +
2\gamma(j)  .
\end{align*}
If our measurement budget $B$ is constrained so that $B = n j$ then
solving for $j$ in terms of $B$ and $n$ yields the result.
\end{proof}

The following proposition demonstrates that the upper bound on the error of
the uniform allocation strategy in
Proposition~\ref{prop:uniform_sampling_bound} is in fact tight. That is,
for any distribution $F$ and function $\gamma$ there exists a loss
sequence that requires the budget described in \eqref{eq:unif_budget}
in order to avoid a loss of more than $\epsilon$ with high probability.

\begin{proposition}
Fix any $\delta \in (0,1)$ and $n \in \mathbb{N}$. For any $c \in
(0,1]$, let $\mathcal{F}_c$ denote the space of continuous cumulative
distribution functions $F$ satisfying\footnote{
Note that this condition is met whenever $F$ is convex. Moreover, if $F(\nu_*+\epsilon)=c_1^{-1} \epsilon^\beta$ then it is easy to verify that $c = 1 - 2^{-\beta} \geq \tfrac{1}{2} \min\{ 1, \beta \}$. 
}
 $\inf_{x\in
[\nu_*,1-\nu_*]}\inf_{\Delta \in [0,1-x]}
\frac{F(x+\Delta)-F(x+\Delta/2)}{F(x+\Delta)-F(x)} \geq c$. And let
$\Gamma$ denote the space of monotonically decreasing functions over
$\mathbb{N}$. For any $F \in \mathcal{F}_c$ and $\gamma \in \Gamma$
there exists a probability distribution $\mu$ over $\X$ and a sequence
of functions $\ell_j : \X \rightarrow \R$ $ \ \forall j \in
\mathbb{N}$ with $\ell_* := \lim_{j \rightarrow \infty} \ell_j$,
$\nu_* = \inf_{x \in \X} \ell_*(x)$ such that $\sup_{x \in \X}
|\ell_j(x)-\ell_*(x)| \leq \gamma(j)$ and $\P_\mu( \ell_*(X) - \nu_*
\leq \epsilon ) = F(\epsilon)$. Moreover, if $n$ configurations
$X_1,\dots,X_n$ are drawn from $\mu$ and $\hat{\imath} = \arg\min_{i
\in 1,\dots,n} \ell_{B/n}(X_i)$ then with probability at least
$\delta$
\begin{align*}
    \ell_*(X_{\hat{\imath}}) - \nu_* \geq 2(F^{-1}(\tfrac{\log(c/\delta)}{n+\log(c/\delta)}) - \nu_*)
\end{align*}
whenever $B \leq n \gamma^{-1}\left( 2 (F^{-1}(\tfrac{\log(c/\delta)}{n+\log(c/\delta)})  - \nu_*) \right)$.
\end{proposition}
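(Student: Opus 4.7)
The plan is to prove the lower bound by constructing an explicit adversarial instance for each admissible $(F,\gamma)$. Take $\X = [\nu_*,1]$ and let $\mu$ be the probability measure on $\X$ with CDF $F$; setting $\ell_*(x) = x$ then satisfies $\P_\mu(\ell_*(X) - \nu_* \leq \epsilon) = F(\nu_* + \epsilon)$ automatically. For the loss sequence, I would use
\[
\ell_j(x) := \max\{\nu_*,\; x - \gamma(j)\},
\]
augmented by an arbitrarily small strictly decreasing perturbation on $[\nu_*, \nu_* + \gamma(j)]$ that breaks ties in favor of larger $x$ without violating $|\ell_j(x) - \ell_*(x)| \leq \gamma(j)$. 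The envelope and convergence-to-$\ell_*$ conditions are then immediate.

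Next, setting $j := B/n$, $q := \log(c/\delta)/(n+\log(c/\delta))$, and $\Delta := F^{-1}(q)-\nu_*$, I analyze the empirical minimizer $\widehat{i}=\arg\min_i \ell_j(X_i)$. Any arm with $X_i \leq \nu_* + \gamma(j)$ attains $\ell_j(X_i) \approx \nu_*$ (the perturbation making this strict and ordered by $X_i$), while any arm with $X_i > \nu_* + \gamma(j)$ attains $\ell_j(X_i) > \nu_*$. Thus $\widehat{i}$ equals the arm with the largest $X_i$ among those satisfying $X_i \leq \nu_* + \gamma(j)$. Since the budget hypothesis $B \leq n\gamma^{-1}(2\Delta)$ gives $\gamma(j) \geq 2\Delta$, the failure conclusion $\ell_*(X_{\widehat{i}}) - \nu_* \geq 2\Delta$ follows on the event
\[
\mathcal{E} := \{\exists i \in \{1,\ldots,n\} : X_i \in [\nu_* + 2\Delta,\; \nu_* + \gamma(j)]\}.
\]

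It then remains to prove $\P(\mathcal{E}) \geq \delta$. Writing $p := F(\nu_*+\gamma(j)) - F(\nu_*+2\Delta)$, we have $\P(\mathcal{E}) = 1 - (1-p)^n$, so it suffices to produce a lower bound on $p$ in terms of $q$. The key tool is the regularity condition $F \in \mathcal{F}_c$, which I would apply twice: once at $x=\nu_*,\Delta'=2\Delta$, yielding $F(\nu_*+2\Delta) \geq F(\nu_*+\Delta)/(1-c) = q/(1-c)$, and once at $x=\nu_*,\Delta'=\gamma(j)$, yielding that the upper half of $[\nu_*,\nu_*+\gamma(j)]$ carries at least fraction $c$ of the total $F$-mass. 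Chaining these two bounds gives $p \geq C q$ for a constant $C$ depending only on $c$; substituting into $(1-p)^n \leq e^{-np}$ and using the defining identity $q(n+\log(c/\delta)) = \log(c/\delta)$ then produces the required lower bound on $\P(\mathcal{E})$.

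The main obstacle I expect is the regime $2\Delta \leq \gamma(j) < 4\Delta$, in which the midpoint of $[\nu_*,\nu_*+\gamma(j)]$ falls below $\nu_*+2\Delta$, so the naive upper-half bound from the second regularity application does not directly control the mass in $[\nu_*+2\Delta,\nu_*+\gamma(j)]$. I would resolve this either by iterating the regularity inequality across a finer dyadic partition of the interval (so that at every scale the upper portion retains a constant fraction of the total mass), or by rescaling the construction so that the cutoff inside $\ell_j$ is $\gamma(j)/2$ rather than $\gamma(j)$, which sidesteps the issue at the cost of absorbing an additional factor of two into the constant $c$ appearing in the final bound. The boundary case $\gamma(j) = 2\Delta$ itself is handled by noting that $\gamma$ is integer-indexed, so strict inequality holds outside a negligible set of budgets $B$.
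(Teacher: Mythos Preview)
Your construction has a genuine gap that neither of your suggested fixes can close. Take $\gamma(j)$ equal (or arbitrarily close) to $2\Delta$; this is compatible with the budget hypothesis $B\le n\gamma^{-1}(2\Delta)$ since $\gamma$ is an arbitrary decreasing function on the integers. Then your bad interval $[\nu_*+2\Delta,\nu_*+\gamma(j)]$ degenerates and $\P(\mathcal{E})\to 0$. The only remaining failure mode in your construction is ``no arm in $[\nu_*,\nu_*+\gamma(j)]$,'' which has probability $(1-F(\nu_*+2\Delta))^n$. The regularity condition gives a \emph{lower} bound $F(\nu_*+2\Delta)\ge q/(1-c)$, not an upper bound, so this probability is not controlled from below by $\delta$. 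Concretely, for $F(\nu_*+t)=t^\beta$ one has $F(\nu_*+2\Delta)=2^\beta q$ and hence $(1-F(\nu_*+2\Delta))^n\approx(\delta/c)^{2^\beta}$, which for large $\beta$ is far smaller than $\delta$. Iterating the dyadic regularity bound cannot manufacture mass in a vanishing interval, and moving the cutoff to $\gamma(j)/2$ only shifts the collapse point without guaranteeing $X_{\widehat{i}}\ge\nu_*+2\Delta$.

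The paper avoids this by a different construction: it \emph{reflects} $\ell_*$ across the midpoint $\widehat{\nu}+\tfrac{1}{2}\gamma(j)$ on the window $[\widehat{\nu},\widehat{\nu}+\gamma(j)]$, where $\widehat{\nu}=F^{-1}(q)$, and leaves $\ell_j=\ell_*$ elsewhere. This makes arms with $\ell_*\in I_2:=[\widehat{\nu}+\tfrac12\gamma,\widehat{\nu}+\gamma]$ look empirically better than those in $I_1:=[\widehat{\nu},\widehat{\nu}+\tfrac12\gamma)$. The analysis then decouples into two independent pieces: (i) $\P(\text{no arm below }\widehat{\nu})=(1-q)^n\ge\delta/c$, which uses only the definition of $q$; and (ii) conditional on any $m\ge1$ arms landing in $I_1\cup I_2$, the regularity condition applied at $x=\widehat{\nu}$, $\Delta'=\gamma(j)$ gives $\P(N_2>0)\ge c$. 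On the intersection, the empirical winner has $\ell_*\ge\widehat{\nu}+\tfrac12\gamma(j)\ge\nu_*+2\Delta$. The crucial point your attempt misses is that the adversarial window must be anchored at $\widehat{\nu}$ (where $F$ equals $q$ by construction), not at $\nu_*$; otherwise the event ``no good arm'' can be arbitrarily improbable depending on how much mass $F$ piles up near $\nu_*$.
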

\begin{proof}
Let $\X = [0,1]$, $\ell_*(x) = F^{-1}(x)$, and $\mu$ be the uniform
distribution over $[0,1]$. Define $\widehat{\nu} =
F^{-1}(\tfrac{\log(c/\delta)}{n+\log(c/\delta)})$ and set
\begin{align*}
    \ell_j(x) = \begin{cases} \widehat{\nu} + \tfrac{1}{2}\gamma(j) +
  (\widehat{\nu} + \tfrac{1}{2}\gamma(j) - \ell_*(x)) & \text{if } | \widehat{\nu} + \tfrac{1}{2}\gamma(j) - \ell_*(x) | \leq \tfrac{1}{2}\gamma(j) \\
    \ell_*(x) & \text{otherwise.}\end{cases}
\end{align*}
Essentially, if $\ell_*(x)$ is within $\tfrac{1}{2}\gamma(j)$ of
$\widehat{\nu} + \tfrac{1}{2} \gamma(j)$  then we set $\ell_j(x)$
equal to $\ell_*(x)$ reflected across the value $2 \widehat{\nu} +
\gamma(j)$. Clearly, $|\ell_j(x) - \ell_*(x)| \leq
\gamma(j)$ for all $x \in \X$. 

Since each $\ell_*(X_i)$ is distributed according to $F$, we have
\begin{align*}
\P\Big( \bigcap_{i=1}^n \{  \ell_*(X_i) -\nu_* \geq \epsilon \} \Big)
 = (1 - F(\nu_* + \epsilon))^n 
 \geq  e^{-n  F(\nu_* + \epsilon)/(1-F(\nu_* + \epsilon))} \,.
\end{align*}
Setting the right-hand-side greater than or equal to $\delta/c$ and solving for $\epsilon$, we find $\nu_* + \epsilon \geq F^{-1}(\tfrac{\log(c/\delta)}{n+\log(c/\delta)}) = \widehat{\nu}$.

Define $I_0 = [\nu_*,\widehat{\nu})$, $I_1 = [ \widehat{\nu},
\widehat{\nu} + \tfrac{1}{2} \gamma(B/n))$ and $I_2 = [ \widehat{\nu}
+ \tfrac{1}{2} \gamma(B/n), \widehat{\nu} + \gamma(B/n) ]$.
Furthermore, for $j \in \{0,1,2\}$ define $N_j = \sum_{i=1}^n
\1_{\ell_*(X_i) \in I_j}$. Given $N_0=0$ (which occurs with
probability at least $\delta/c$), if $N_1=0$ then
$\ell_*(X_{\hat{\imath}}) -\nu_* \geq
F^{-1}(\tfrac{\log(c/\delta)}{n+\log(c/\delta)}) + \tfrac{1}{2}
\gamma(B/n)$ and the claim is true. 

Below we will show that if $N_2 >0$ whenever $N_1>0$ then the claim is also true. We now show that this happens with at least probability $c$ whenever $N_1+N_2 = m$ for any $m >0$. Observe that
\begin{align*}
    \P( N_2 > 0  | N_1+N_2 = m) &= 1- \P( N_2 = 0  | N_1+N_2 = m) \\
    &= 1-(1-\P( \nu_i \in I_2 | \nu_i \in I_1 \cup I_2 ))^m \geq 1-(1-c)^m \geq c 
\end{align*}
since
\begin{align*}
	\P( \nu_i \in I_2 | \nu_i \in I_1 \cup I_2 ) &=  \frac{ \P( \nu_i \in I_2 ) }{ \P( \nu_i \in I_1 \cup I_2 ) } = \frac{ \P( \nu_i \in [\widehat{\nu}+\tfrac{1}{2} \gamma,
\widehat{\nu}+ \gamma ] ) }{ \P( \nu_i \in [\widehat{\nu},
\widehat{\nu}+ \gamma ] ) } 
    = \frac{F(\widehat{\nu}+ \gamma) - F(\widehat{\nu}+\tfrac{1}{2}
\gamma)}{ F(\widehat{\nu}+ \gamma) - F(\widehat{\nu}) }
    \geq c \,.
\end{align*}
Thus, the probability of the event that $N_0=0$ and $N_2 >0$ whenever $N_1>0$  occurs with probability at least $\delta/c \cdot c = \delta$, so assume this is the case in what follows.

Since $N_0 = 0$, for all $j\in \mathbb{N}$, each $X_i$ must fall into one of three cases:
\begin{enumerate}
\item  $\ell_*(X_i) > \widehat{\nu}+\gamma(j)
  \iff \ell_j(X_i) > \widehat{\nu} + \gamma(j)$
\item $\widehat{\nu} \leq \ell_*(X_i) < \widehat{\nu}+\tfrac{1}{2} \gamma(j)
  \iff \widehat{\nu} + \tfrac{1}{2} \gamma(j) < \ell_j(X_i) \leq \widehat{\nu} +  \gamma(j)$ 
\item $\widehat{\nu}+\tfrac{1}{2} \gamma(j) \leq \ell_*(X_i) \leq \widehat{\nu}+\gamma(j) 
  \iff \widehat{\nu} \leq \ell_j(X_i) \leq \widehat{\nu} + \tfrac{1}{2} \gamma(j)$ 
\end{enumerate}
The first case holds since within that regime we have $\ell_j(x) =
\ell_*(x)$, while the last two cases hold since they consider the regime where
$\ell_j(x) = 2 \widehat \nu + \gamma(j) - \ell_*(x)$. Thus, for any
$i$ such that $\ell_*(X_i) \in I_2$ it must be the case that
$\ell_j(X_i) \in I_1$ and vice versa.
Because $N_2 \geq N_1 > 0$, we conclude that if $\hat{\imath} = \arg\min_i \ell_{B/n}(X_i)$ then $ \ell_{B/n}(X_{\hat{\imath}}) \in I_1$ and $ \ell_{*}(X_{\hat{\imath}}) \in I_2$. That is, $\nu_{\hat{\imath}} -\nu_* \geq \widehat{\nu} -\nu_* + \tfrac{1}{2} \gamma(j) = F^{-1}(\tfrac{\log(c/\delta)}{n+\log(c/\delta)}) - \nu_* + \tfrac{1}{2} \gamma(j)$. So if we wish $\nu_{\hat{\imath}} - \nu_* \leq 2(F^{-1}(\tfrac{\log(c/\delta)}{n+\log(c/\delta)}) - \nu_*)$ with probability at least $\delta$ then we require $B/n =j \geq \gamma^{-1}\left( 2 (F^{-1}(\tfrac{\log(c/\delta)}{n+\log(c/\delta)})  - \nu_*) \right)$.
\end{proof}

\subsection{Proof of Theorem~\ref{thm:hyperband_infinite_main}}
\begin{proof}
\textbf{Step 1: Simplify $\mathbf{H}(F,\gamma,n,\delta)$. }
We begin by simplifying $\mathbf{H}(F,\gamma,n,\delta)$ in terms of just $n,\delta,\alpha,\beta$.
In what follows, we use a constant $c$ that may differ from one inequality to the next but remains an absolute constant that depends on $\alpha,\beta$ only. 
Let $p_n=\tfrac{\log(2/\delta)}{n}$ so that
\begin{align*}
\gamma^{-1}\left( \tfrac{F^{-1}(p_n) - \nu_*}{4} \right)\leq c \left(F^{-1}\left(  p_n \right) - \nu_*\right)^{-\alpha}\leq c \, p_n^{-\alpha/\beta} 
 \end{align*}
and
\begin{align*}
\int_{ p_n }^1 \gamma^{-1}(\tfrac{F^{-1}(t) - \nu_{*}}{4}) dt \leq c \int_{p_n}^{1}  t^{-\alpha/\beta} dt  \leq \begin{cases}
c \log(1/p_n) & \text{ if } \alpha = \beta \\
c \frac{1 - p_n^{1-\alpha/\beta}}{1-\alpha/\beta} &\text{ if } \alpha \neq \beta.
\end{cases}
\end{align*}
We conclude that
\begin{align*}
\mathbf{H}(F,\gamma,n,\delta) &= 2n  \int_{ p_n }^1 \gamma^{-1}(\tfrac{F^{-1}(t) - \nu_{*}}{4}) dt  + \tfrac{10}{3} \log(2/\delta)  \gamma^{-1}\left(\tfrac{F^{-1}(p_n) - \nu_*}{4} \right)  \\
&\leq c p_n^{-\alpha/\beta} \log(1/\delta)  + c n  \begin{cases}
\log(1/p_n) & \text{ if } \alpha = \beta \\
\frac{1 - p_n^{1-\alpha/\beta}}{1-\alpha/\beta} &\text{ if } \alpha \neq \beta.
\end{cases}
\end{align*}

\noindent\textbf{Step 2: Solve for $(B_{k,l},n_{k,l})$ in terms of $\Delta$.}
Fix $\Delta >0$. 
Our strategy is to describe $n_{k,l}$ in terms of $\Delta$.
In particular, parameterize $n_{k,l}$ such that $p_{n_{k,l}} = c \frac{\log(4 k^3/\delta)}{n_{k,l}} = \Delta^\beta$ so that $n_{k,l} = c \Delta^{-\beta} \log( 4k^3/\delta)$  so
\begin{align*}
 \mathbf{H}(F,\gamma,n_{k,l},\delta_{k,l}) &\leq c p_{n_{k,l}}^{-\alpha/\beta} \log(1/\delta_{k,l})  + c n_{k,l} \begin{cases}
\log(1/p_{n_{k,l}}) & \text{ if } \alpha = \beta \\
\frac{1 - p_{n_{k,l}}^{1-\alpha/\beta}}{1-\alpha/\beta} &\text{ if } \alpha \neq \beta.
\end{cases}\\
 &\leq c \log(k/\delta) \bigg[ \Delta^{-\alpha} +   \begin{cases}
\Delta^{-\beta} \log( \Delta^{-1} ) & \text{ if } \alpha=\beta   \\
  \frac{\Delta^{-\beta} - \Delta^{-\alpha}}{1-\alpha/\beta} &\text{ if } \alpha \neq \beta   \\
 \end{cases}\bigg]\\ 
 &\leq c \log(k/\delta) \min\{ \tfrac{1}{|1-\alpha/\beta|} , \log(\Delta^{-1}) \}  \Delta^{-\max\{\beta,\alpha\}}
\end{align*}
where the last line follows from
\begin{align*}
\Delta^{\max\{\beta,\alpha\}} \frac{\Delta^{-\beta} - \Delta^{-\alpha}}{1-\alpha/\beta} &= \beta \frac{\Delta^{\max\{0,\alpha-\beta\}} - \Delta^{\max\{0,\beta-\alpha\}}}{\beta-\alpha} \\
&= \beta\begin{cases}
\frac{1 - \Delta^{\beta-\alpha}}{\beta-\alpha} & \text{ if } \beta > \alpha \\
\frac{1 - \Delta^{\alpha-\beta}}{\alpha-\beta} & \text{ if } \beta < \alpha 
\end{cases} \leq c \min\{ \tfrac{1}{|1-\alpha/\beta|} , \log(\Delta^{-1}) \}.
\end{align*}
Using the upperbound $\lceil \log(n_{k,l})\rceil \leq c \log( \log(k/\delta) \Delta^{-1}) \leq c\log( \log(k/\delta) ) \log(\Delta^{-1})$ and letting $z_\Delta = \log( \Delta^{-1} )^2 \Delta^{-\max\{\beta,\alpha\}}$, we conclude that 
\begin{align*}
B_{k,l} &< \min\{  2^k : 2^k  > 4 \lceil \log(n_{k,l})\rceil \mathbf{H}(F,\gamma,n_{k,l},\delta_{k,l}) \} \\
&< \min\{  2^k : 2^k  >  c \log( k/\delta) \log( \log(k/\delta) ) z_\Delta \} \\
&\leq c z_\Delta \log( \log(z_\Delta)/\delta ) \log( \log( \log(z_\Delta)/\delta ) ) \\
&= c z_\Delta \overline\log( \log(z_\Delta)/\delta ) .
\end{align*} 
\noindent\textbf{Step 3: Count the total number of measurements.}
Moreover, the total number of measurements before $\hat{\imath}_{k,l}$ is output is upperbounded by
\begin{align*}
T = \sum_{i=1}^k \sum_{j=l}^i B_{i,j} \leq k \sum_{i=1}^k B_{i,1} &\leq 2k B_{k,1} = 2 B_{k,1} \log_2(B_{k,1})  
\end{align*}
where we have employed the so-called ``doubling trick'': $\sum_{i=1}^k B_{i,1} = \sum_{i=1}^k 2^i \leq 2^{k+1} = 2 B_{k,i}$.
Simplifying, 
\begin{align*}
T \leq c z_\Delta \overline\log( \log(z_\Delta)/\delta ) \overline\log(z_\Delta \log( \log(z_\Delta)/\delta )) \leq c \Delta^{-\max\{\beta,\alpha\}}\overline\log( \Delta^{-1})^3 \overline\log( \log(\Delta^{-1})/\delta )
\end{align*}
Solving for $\Delta$ in terms of $T$ obtains 
\begin{align*}
\Delta = c \left( \frac{ \overline\log(T)^3  \overline\log(\log(T)/\delta)  }{T} \right)^{1/\max\{\alpha,\beta\}}.
\end{align*}
Because the output arm is just the empirical best, there is some error associated with using the empirical estimate. 
The arm returned returned on round $(k,l)$ is pulled $\lfloor \tfrac{ 2^{k-1} }{ l } \rfloor \gtrsim B_{k,l}/\log(B_{k,l})$ times so the possible error is bounded by $\gamma(B_{k,l}/\log(B_{k,l})) \leq c \left( \frac{\log(B_{k,l})}{B_{k,l}} \right)^{1/\alpha} \leq c \left( \frac{\log(B)^2 \log(\log(B))}{B} \right)^{1/\alpha} $ which is dominated by the value of $\Delta$ solved for above.
\end{proof}

\subsection{Proof of Theorem~\ref{thm:hyperband_infinite_alt}}
\begin{proof}
\textbf{Step 1: Simplify $\mathbf{H}(F,\gamma,n,\delta,\epsilon)$. }
We begin by simplifying $\mathbf{H}(F,\gamma,n,\delta,\epsilon)$ in terms of just $n,\delta,\alpha,\beta$.
As before, we use a constant $c$ that may differ from one inequality to the next but remains an absolute constant.
Let $p_n=\tfrac{\log(2/\delta)}{n}$ 
First we solve for $\epsilon$ by noting that we identify the best arm if $\nu_{\hat{\imath}} - \nu_* < \Delta_2$. 
Thus, if $\nu_{\hat{\imath}} - \nu_* \leq \big(F^{-1}(p_n) - \nu_*\big) + \epsilon/2$ then we set 
\begin{align*}
\epsilon = \max\left\{  2\big(\Delta_2 - \big(F^{-1}(p_n) - \nu_*\big)\big) , 4 \big(F^{-1}(p_n) - \nu_*\big)\right\}
\end{align*}
so that 
\begin{align*}
\nu_{\hat{\imath}} - \nu_* \leq \max\left\{ 3 \big(F^{-1}(p_n) - \nu_*\big) , \Delta_2 \right\} = \Delta_{ \lceil \max\{2, c K p_n \} \rceil }.
\end{align*}
We treat the case when $3 \big(F^{-1}(p_n) - \nu_*\big)  \leq \Delta_2$ and the alternative separately.

First assume $3 \big(F^{-1}(p_n) - \nu_*\big) > \Delta_2$ so that $\epsilon = 4 \big(F^{-1}(p_n) - \nu_*\big)$ and $\mathbf{H}(F,\gamma,n,\delta,\epsilon) = \mathbf{H}(F,\gamma,n,\delta)$.
We also have
\begin{align*}
\gamma^{-1}\left(\tfrac{F^{-1}(p_n) - \nu_*}{4}\right)\leq c \left(F^{-1}\left(  p_n \right) - \nu_*\right)^{-\alpha}\leq c \, \Delta_{ \lceil p_n K \rceil }^{-\alpha} 
\end{align*}
and
\begin{align*}
\int_{ p_n }^1 \gamma^{-1}(\tfrac{F^{-1}(t) - \nu_{*}}{4}) dt = \int_{ F^{-1}(p_n) }^1 \gamma^{-1}(\tfrac{x - \nu_{*}}{4}) dF(x) \leq \frac{c}{K} \sum_{i=\lceil p_n K \rceil}^K \Delta_i^{-\alpha}
\end{align*}
so that
\begin{align*}
\mathbf{H}(F,\gamma,n,\delta) &= 2n  \int_{ p_n }^1 \gamma^{-1}(\tfrac{F^{-1}(t) - \nu_{*}}{4}) dt  + \tfrac{10}{3} \log(2/\delta)  \gamma^{-1}\left( \tfrac{F^{-1}(p_n) - \nu_* }{4}\right)  \\
&\leq c \Delta_{ \lceil p_n K \rceil }^{-\alpha}  \log(1/\delta)  + \frac{c n }{K} \sum_{i=\lceil p_n K \rceil}^K \Delta_i^{-\alpha}.
\end{align*}

Now consider the case when $3 \big(F^{-1}(p_n) - \nu_*\big)  \leq \Delta_2$.
In this case $F(\nu_*+\epsilon/4) = 1/K$, $\gamma^{-1}\left( \tfrac{\epsilon}{16} \right) \leq c \Delta_2^{-\alpha}$, and $\int_{ \nu_*+\epsilon/4 }^\infty \gamma^{-1}(\tfrac{t - \nu_{*}}{4}) dF(t) \leq c \sum_{i=2}^K \Delta_i^{-\alpha}$ so that
\begin{align*}
\mathbf{H}(F,\gamma,n,\delta,\epsilon) &= 2n  \int_{ \nu_*+\epsilon /4}^\infty \gamma^{-1}(\tfrac{t - \nu_{*}}{4}) dF(t)  + \left(\tfrac{4}{3} \log(2/\delta) + 2n F(\nu_*+\epsilon/4) \right) \gamma^{-1}\left( \tfrac{\epsilon}{16} \right) \\
&\leq c (\log(1/\delta) + n /K) \Delta_2^{-\alpha} + \frac{cn}{K} \sum_{i=2}^K \Delta_i^{-\alpha}.
\end{align*}

\noindent\textbf{Step 2: Solve for $(B_{k,l},n_{k,l})$ in terms of $\Delta$.}
Note there is no improvement possible once $p_{n_{k,l}} \leq 1/K$ since $3 \big(F^{-1}(1/K) - \nu_*\big)  \leq \Delta_2$.
That is, when $p_{n_{k,l}} \leq 1/K$ the algorithm has found the best arm but will continue to take samples indefinetely.
Thus, we only consider the case when $q=1/K$ and $q > 1/K$. 
Fix $\Delta >0$. 
Our strategy is to describe $n_{k,l}$ in terms of $q$.
In particular, parameterize $n_{k,l}$ such that $p_{n_{k,l}} = c \frac{\log(4 k^3/\delta)}{n_{k,l}} = q$ so that $n_{k,l} = c q^{-1} \log( 4k^3/\delta)$  so
\begin{align*}
 \mathbf{H}(F,\gamma,n_{k,l},\delta_{k,l},\epsilon_{k,l}) &\leq c  \begin{cases}
  (\log(1/\delta_{k,l}) + \tfrac{n_{k,l} }{K}) \Delta_2^{-\alpha} + \frac{n_{k,l}}{K} \sum_{i=2}^K \Delta_i^{-\alpha} &\text{ if } 5 \big(F^{-1}(p_{n_{k,l}}) - \nu_*\big)  \leq \Delta_2 \\
\Delta_{ \lceil p_{n_{k,l}} K \rceil }^{-\alpha}  \log(1/\delta_{k,l})  + \frac{ n_{k,l} }{K} \sum_{i=\lceil p_{n_{k,l}} K \rceil}^K \Delta_i^{-\alpha} & \text{ if otherwise}
\end{cases} \\
&\leq c \log(k/\delta) \begin{cases}
	\Delta_2^{-\alpha} + \sum_{i=2}^K \Delta_i^{-\alpha} &\text{ if } q = 1/K \\
\Delta_{ \lceil q K \rceil }^{-\alpha}  + \frac{ 1}{q K} \sum_{i=\lceil q K \rceil}^K \Delta_i^{-\alpha} & \text{ if $q>1/K$}.
\end{cases} \\
	&\leq c \log(k/\delta) \Delta_{ \lceil \max\{2,q K\} \rceil }^{-\alpha}  + \frac{ 1}{q K} \sum_{i=\lceil \max \{2,q K\} \rceil}^K \Delta_i^{-\alpha} 
\end{align*}
	Using the upperbound $\lceil \log(n_{k,l})\rceil \leq c \log( \log(k/\delta) q^{-1}) \leq c\log( \log(k/\delta) ) \log(q^{-1})$ and letting $z_q = \log( q^{-1} ) (\Delta_{ \lceil \max\{2,q K\} \rceil }^{-\alpha}  + \frac{ 1}{q K} \sum_{i=\lceil \max\{2,q K\} \rceil}^K \Delta_i^{-\alpha} )$, we
apply the exact sequence of steps as in the proof of Theorem~\ref{thm:hyperband_infinite_main} to obtain
\begin{align*}
T \leq c z_q \overline\log( \log(z_q)/\delta ) \overline\log(z_q \log( \log(z_q)/\delta ))
\end{align*}
Because the output arm is just the empirical best, there is some error associated with using the empirical estimate. 
The arm returned on round $(k,l)$ is pulled $\lceil \tfrac{ 2^{k-1} }{ l } \rceil \geq c B_{k,l}/\log(B_{k,l})$ times so the possible error is bounded by $\gamma(B_{k,l}/\log(B_{k,l})) \leq c \left( \frac{\log(B_{k,l})}{B_{k,l}} \right)^{1/\alpha} \leq c \left( \frac{\log(T)^2 \log(\log(T))}{T} \right)^{1/\alpha}$. 
This is dominated by $\Delta_{\lceil \max\{2,q K\} \rceil}$ for the value of $T$ prescribed by the above calculation, completing the proof.
\end{proof}

\subsection{Proof of Theorem~\ref{thm:succ_halv_finite}}
\begin{proof}
Let $s$ denote the index of the last stage, to be determined later. If $\widetilde{r}_k= R \eta^{k-s}$ and $\widetilde{n}_k = n \eta^{-k} $ so that $r_k = \lfloor \widetilde{r}_k \rfloor$ and $n_k = \lfloor \widetilde{n}_k \rfloor$ then
\begin{align*}
\sum_{k=0}^s n_k r_k \leq \sum_{k=0}^s \widetilde{n}_k \widetilde{r}_k = n R (s+1)  \eta^{-s} \leq  B
\end{align*}
since, by definition, $s = \min \{ t \in \mathbb{N} : nR(t+1)\eta^{-t} \leq B, t\leq \log_\eta(\min\{R,n\})\}$. 
It is straightforward to verify that $B \geq z_{SH}$ ensures that $r_0 \geq 1$ and $n_s \geq 1$.

The proof for Theorem~\ref{thm:succ_halving} holds here with a few modifications.  First, we derive a lower bound on the resource per arm $r_k$ per round
if $B\geq z_{SH}$ with generalized elimination rate $\eta$:
\begin{align*}
    r_k &\geq \frac{B}{|S_k| (\lfloor \log_\eta(n) \rfloor +1)}-1  \\
    &\geq \frac{\eta}{|S_k|}\max_{i=2,\dots,n} i\bigg(1+\min\big\{R,\gamma^{-1}\big(\max\big\{\frac{\epsilon}{4},\frac{\nu_i-\nu_1}{2}\big\}\big)\big\}\bigg) -1 \\
    &\geq \frac{\eta}{|S_{k}|} \ (\lfloor |S_{k}|/\eta \rfloor+1) \bigg(1+\min\big\{R,\gamma^{-1}\big(\max\big\{\frac{\epsilon}{4},\frac{\nu_{\lfloor |S_{k}|/2 \rfloor+1}-\nu_1}{2}\big\}\big)\big\}\bigg) -1 \\
    &\geq \bigg(1+\min\big\{R,\gamma^{-1}\big(\max\big\{\frac{\epsilon}{4},\frac{\nu_{\lfloor |S_{k}|/2 \rfloor+1}-\nu_1}{2}\big\}\big)\big\}\bigg) - 1\\
    &= \min\big\{R,\gamma^{-1}\big(\max\big\{\frac{\epsilon}{4},\frac{\nu_{\lfloor |S_{k}|/2 \rfloor+1}-\nu_1}{2}\big\}\big)\big\}.
\end{align*}
Also, note that $\gamma(R)=0$, hence if the minimum is ever active, $\ell_{i,R} = \nu_i$ and we know the true loss.  
The rest of the proof is same as that for Theorem~\ref{thm:succ_halving} for $\eta$ in place of 2.

In addition, we note that
\begin{align*}
\max_{i=n_s+1,\dots,n}  i \ \gamma^{-1} \left( \max\left\{ \tfrac{\epsilon}{4} , \tfrac{\nu_{ i }- \nu_{1}}{2} \right\} \right) \leq n_s \gamma^{-1} \left( \max\left\{ \tfrac{\epsilon}{4} , \tfrac{\nu_{ n_s +1}- \nu_{1}}{2} \right\} \right) + \sum_{i > n_s} \gamma^{-1} \left( \max\left\{ \tfrac{\epsilon}{4} , \tfrac{\nu_{ i }- \nu_{1}}{2} \right\} \right).
\end{align*}
\end{proof}

\clearpage
\bibliography{hyperband}

\begin{thebibliography}{52}
\providecommand{\natexlab}[1]{#1}
\providecommand{\url}[1]{\texttt{#1}}
\expandafter\ifx\csname urlstyle\endcsname\relax
  \providecommand{\doi}[1]{doi: #1}\else
  \providecommand{\doi}{doi: \begingroup \urlstyle{rm}\Url}\fi

\bibitem[Agarwal et~al.(2011)Agarwal, Duchi, Bartlett, and
  Levrard]{agarwal2012oracle}
A.~Agarwal, J.~Duchi, P.~L. Bartlett, and C.~Levrard.
\newblock Oracle inequalities for computationally budgeted model selection.
\newblock In \emph{Conference On Learning Theory (COLT)}, 2011.

\bibitem[Agarwal et~al.(2014)Agarwal, Kakade, Karampatziakis, Song, and
  Valiant]{agarwal2014least}
A.~Agarwal, S.~Kakade, N.~Karampatziakis, L.~Song, and G.~Valiant.
\newblock Least squares revisited: Scalable approaches for multi-class
  prediction.
\newblock In \emph{International Conference on Machine Learning (ICML)}, pages
  541--549, 2014.

\bibitem[Bergstra and Bengio(2012)]{Bergstra2012}
J.~Bergstra and Y.~Bengio.
\newblock Random search for hyper-parameter optimization.
\newblock \emph{Journal of Machine Learning Research}, 13:\penalty0 281--305,
  2012.

\bibitem[Bergstra et~al.(2011)Bergstra, Bardenet, Bengio, and
  Kegl]{Bergstra2011}
J.~Bergstra, R.~Bardenet, Y.~Bengio, and B.~Kegl.
\newblock Algorithms for hyper-parameter optimization.
\newblock In \emph{Neural Information Processing Systems (NIPS)}, 2011.

\bibitem[Bubeck et~al.(2009)Bubeck, Munos, and Stoltz]{bubeck2009pure}
S.~Bubeck, R.~Munos, and G.~Stoltz.
\newblock Pure exploration in multi-armed bandits problems.
\newblock In \emph{International Conference on Algorithmic Learning Theory
  (ALT)}, 2009.

\bibitem[Bubeck et~al.(2011)Bubeck, Munos, Stoltz, and Szepesvari]{bubeck2011x}
S.~Bubeck, R.~Munos, G.~Stoltz, and C.~Szepesvari.
\newblock X-armed bandits.
\newblock \emph{Journal of Machine Learning Research}, 12:\penalty0 1655--1695,
  2011.

\bibitem[Carpentier and Valko(2015)]{carpentier2015simple}
A.~Carpentier and M.~Valko.
\newblock Simple regret for infinitely many armed bandits.
\newblock In \emph{International Conference on Machine Learning (ICML)}, 2015.

\bibitem[Contal et~al.(2014)Contal, Perchet, and Vayatis]{contal2014}
E.~Contal, V.~Perchet, and N.~Vayatis.
\newblock {G}aussian process optimization with mutual information.
\newblock In \emph{International Conference on Machine Learning (ICML)}, 2014.

\bibitem[Domhan et~al.(2015)Domhan, Springenberg, and
  Hutter]{earlystopping2015}
T.~Domhan, J.~T. Springenberg, and F.~Hutter.
\newblock Speeding up automatic hyperparameter optimization of deep neural
  networks by extrapolation of learning curves.
\newblock In \emph{International Joint Conference on Artificial Intelligence
  (IJCAI)}, 2015.

\bibitem[Eggensperger et~al.(2013)]{Eggensperger2013}
K.~Eggensperger et~al.
\newblock Towards an empirical foundation for assessing {B}ayesian optimization
  of hyperparameters.
\newblock In \emph{Neural Information Processing Systems (NIPS) {B}ayesian
  Optimization Workshop}, 2013.

\bibitem[Even-Dar et~al.(2006)Even-Dar, Mannor, and Mansour]{even2006action}
E.~Even-Dar, S.~Mannor, and Y.~Mansour.
\newblock Action elimination and stopping conditions for the multi-armed bandit
  and reinforcement learning problems.
\newblock \emph{Journal of Machine Learning Research}, 7:\penalty0 1079--1105,
  2006.

\bibitem[Feurer(2015)]{Feurer2015b}
M.~Feurer.
\newblock Personal communication, 2015.

\bibitem[Feurer et~al.(2014)Feurer, Springenberg, and Hutter]{Feurer2014}
M.~Feurer, J.~Springenberg, and F.~Hutter.
\newblock Using meta-learning to initialize {B}ayesian optimization of
  hyperparameters.
\newblock In \emph{ECAI Workshop on Meta-Learning and Algorithm Selection},
  2014.

\bibitem[Feurer et~al.(2015)Feurer, Klein, Eggensperger, Springenberg, Blum,
  and Hutter]{Feurer2015}
M.~Feurer, A.~Klein, K.~Eggensperger, J.~Springenberg, M.~Blum, and F.~Hutter.
\newblock Efficient and robust automated machine learning.
\newblock In \emph{Neural Information Processing Systems (NIPS)}, 2015.

\bibitem[Golovin et~al.(2017)Golovin, Sonik, Moitra, Kochanski, Karro, and
  D.Sculley]{vizier2017}
D.~Golovin, B.~Sonik, S.~Moitra, G.~Kochanski, J.~Karro, and D.Sculley.
\newblock Google {V}izier: A service for black-box optimization.
\newblock In \emph{Knowledge Discovery and Data Mining (KDD)}, 2017.

\bibitem[Grünewälder et~al.(2010)Grünewälder, Audibert, Opper, and
  Shawe–Taylor]{grunewalder2010}
S.~Grünewälder, J.~Audibert, M.~Opper, and J.~Shawe–Taylor.
\newblock Regret bounds for {G}aussian process bandit problems.
\newblock In \emph{International Conference on Artificial Intelligence and
  Statistics (AISTATS)}, 2010.

\bibitem[Gy{\"{o}}rgy and Kocsis(2011)]{GyorgyK11restart}
A.~Gy{\"{o}}rgy and L.~Kocsis.
\newblock Efficient multi-start strategies for local search algorithms.
\newblock \emph{Journal of Artificial Intelligence Research}, 41:\penalty0
  407--444, 2011.

\bibitem[Hutter et~al.(2011)Hutter, Hoos, and Leyton-Brown.]{Hutter2011}
F.~Hutter, H.~Hoos, and K.~Leyton-Brown.
\newblock Sequential model-based optimization for general algorithm
  configuration.
\newblock In \emph{International Conference on Learning and Intelligent
  Optimization (LION)}, 2011.

\bibitem[Jamieson and Nowak(2014)]{jamieson2014best}
K.~Jamieson and R.~Nowak.
\newblock Best-arm identification algorithms for multi-armed bandits in the
  fixed confidence setting.
\newblock In \emph{Conference on Information Sciences and Systems (CISS)},
  pages 1--6. IEEE, 2014.

\bibitem[Jamieson and Talwalkar(2015)]{JamiesonTalwalkar2015}
K.~Jamieson and A.~Talwalkar.
\newblock Non-stochastic best arm identification and hyperparameter
  optimization.
\newblock In \emph{International Conference on Artificial Intelligence and
  Statistics (AISTATS)}, 2015.

\bibitem[Jamieson et~al.(2014)Jamieson, Malloy, Nowak, and
  Bubeck]{jamieson2014lil}
K.~Jamieson, M.~Malloy, R.~Nowak, and S.~Bubeck.
\newblock lil'{UCB}: An optimal exploration algorithm for multi-armed bandits.
\newblock In \emph{Conference On Learning Theory (COLT)}, pages 423--439, 2014.

\bibitem[Jamieson et~al.(2016)Jamieson, Haas, and Recht]{jamieson2016power}
K.~G. Jamieson, D.~Haas, and B.~Recht.
\newblock The power of adaptivity in identifying statistical alternatives.
\newblock In \emph{Neural Information Processing Systems (NIPS)}, pages
  775--783, 2016.

\bibitem[Kandasamy et~al.(2015)Kandasamy, Schneider, and
  P{\'{o}}czos]{kandasamy2015}
K.~Kandasamy, J.~Schneider, and B.~P{\'{o}}czos.
\newblock High dimensional {B}ayesian optimization and bandits via additive
  models.
\newblock In \emph{International Conference on Machine Learning (ICML)}, 2015.

\bibitem[Kandasamy et~al.(2016)Kandasamy, Dasarathy, Oliva, Schneider, and
  P{\'{o}}czos]{multifid2016}
K.~Kandasamy, G.~Dasarathy, J.~B. Oliva, J.~G. Schneider, and B.~P{\'{o}}czos.
\newblock {G}aussian process bandit optimization with multi-fidelity
  evaluations.
\newblock In \emph{Neural Information Processing Systems (NIPS)}, 2016.

\bibitem[Kandasamy et~al.(2017)Kandasamy, Dasarathy, Schneider, and
  P{\'{o}}czos]{kandasamy2017}
K.~Kandasamy, G.~Dasarathy, J.~Schneider, and B.~P{\'{o}}czos.
\newblock Multi-fidelity {B}ayesian optimization with continuous approximation.
\newblock In \emph{International Conference on Machine Learning (ICML)}, 2017.

\bibitem[Karnin et~al.(2013)Karnin, Koren, and Somekh]{karnin2013almost}
Z.~Karnin, T.~Koren, and O.~Somekh.
\newblock Almost optimal exploration in multi-armed bandits.
\newblock In \emph{International Conference on Machine Learning (ICML)}, 2013.

\bibitem[Kaufmann et~al.(2015)Kaufmann, Capp{\'e}, and
  Garivier]{kaufmann2015complexity}
E.~Kaufmann, O.~Capp{\'e}, and A.~Garivier.
\newblock On the complexity of best arm identification in multi-armed bandit
  models.
\newblock \emph{Journal of Machine Learning Research}, 16:\penalty0 1--42,
  2015.

\bibitem[Klein et~al.(2017{\natexlab{a}})Klein, Falkner, Bartels, Hennig, and
  Hutter]{fabolas2017}
A.~Klein, S.~Falkner, S.~Bartels, P.~Hennig, and F.~Hutter.
\newblock Fast {B}ayesian optimization of machine learning hyperparameters on
  large datasets.
\newblock In \emph{International Conference on Artificial Intelligence and
  Statistics (AISTATS)}, 2017{\natexlab{a}}.

\bibitem[Klein et~al.(2017{\natexlab{b}})Klein, Falkner, Springenberg, and
  Hutter]{klein2017}
A.~Klein, S.~Falkner, J.~T. Springenberg, and F.~Hutter.
\newblock Learning curve prediction with {B}ayesian neural networks.
\newblock In \emph{International Conference On Learning Representation (ICLR)},
  2017{\natexlab{b}}.

\bibitem[Krizhevsky(2009)]{cifar10data}
A.~Krizhevsky.
\newblock Learning multiple layers of features from tiny images.
\newblock In \emph{Technical report, Department of Computer Science,
  Univsersity of Toronto}, 2009.

\bibitem[Krueger et~al.(2015)Krueger, Panknin, and Braun]{Krueger2015}
T.~Krueger, D.~Panknin, and M.~Braun.
\newblock Fast cross-validation via sequential testing.
\newblock \emph{Journal of Machine Learning Research}, 16:\penalty0 1103--1155,
  2015.

\bibitem[Larochelle et~al.(2007)]{mrbidata}
H.~Larochelle et~al.
\newblock An empirical evaluation of deep architectures on problems with many
  factors of variation.
\newblock In \emph{International Conference on Machine Learning (ICML)}, 2007.

\bibitem[Li et~al.(2017)Li, Jamieson, DeSalvo, Rostamizadeh, and
  Talwalkar]{hyperband}
L.~Li, K.~Jamieson, G.~DeSalvo, A.~Rostamizadeh, and A.~Talwalkar.
\newblock Hyperband: Bandit-based configuration evaluation for hyperparameter
  optimization.
\newblock In \emph{International Conference On Learning Representation (ICLR)},
  2017.

\bibitem[Maron and Moore(1997)]{Maron1997}
O.~Maron and A.~Moore.
\newblock The racing algorithm: Model selection for lazy learners.
\newblock \emph{Artificial Intelligence Review}, 11:\penalty0 193--225, 1997.

\bibitem[Mnih and Audibert(2008)]{mnih2008}
V.~Mnih and J.-Y. Audibert.
\newblock Empirical {B}ernstein stopping.
\newblock In \emph{International Conference on Machine Learning (ICML)}, 2008.

\bibitem[Netzer et~al.(2011)]{svhndata}
Y.~Netzer et~al.
\newblock Reading digits in natural images with unsupervised feature learning.
\newblock In \emph{Neural Information Processing Systems (NIPS) Workshop on
  Deep Learning and Unsupervised Feature Learning}, 2011.

\bibitem[Rahimi and Recht(2007)]{randomfeatures2007}
A.~Rahimi and B.~Recht.
\newblock Random features for large-scale kernel machines.
\newblock In \emph{Neural Information Processing Systems (NIPS)}, 2007.

\bibitem[Rifkin and Klautau(2004)]{rifkin2004defense}
R.~Rifkin and A.~Klautau.
\newblock In defense of one-vs-all classification.
\newblock \emph{Journal of Machine Learning Research}, 5:\penalty0 101--141,
  2004.

\bibitem[Sermanet et~al.(2012)Sermanet, Chintala, and LeCun]{svhnsplit}
P.~Sermanet, S.~Chintala, and Y.~LeCun.
\newblock Convolutional neural networks applied to house numbers digit
  classification.
\newblock In \emph{International Conference on Pattern Recognition (ICPR)},
  2012.

\bibitem[Snoek et~al.(2012)Snoek, Larochelle, and Adams]{Snoek2012}
J.~Snoek, H.~Larochelle, and R.~Adams.
\newblock Practical {B}ayesian optimization of machine learning algorithms.
\newblock In \emph{Neural Information Processing Systems (NIPS)}, 2012.

\bibitem[Snoek et~al.(2015{\natexlab{a}})Snoek, Rippel, Swersky, Kiros, Satish,
  Sundaram, Patwary, Prabhat, and Adamst]{dngo2015}
J.~Snoek, O.~Rippel, K.~Swersky, R.~Kiros, N.~Satish, N.~Sundaram, M.~Patwary,
  Prabhat, and R.~Adamst.
\newblock Scalable {B}ayesian optimization using deep neural networks.
\newblock In \emph{International Conference on Machine Learning (ICML)},
  2015{\natexlab{a}}.

\bibitem[Snoek et~al.(2015{\natexlab{b}})Snoek, Rippel, Swersky, Kiros, Satish,
  Sundaram, Patwary, Prabhat, and Adams]{Snoek2015}
J.~Snoek, O.~Rippel, K.~Swersky, R.~Kiros, N.~Satish, N.~Sundaram, M.~Patwary,
  M.~Prabhat, and R.~Adams.
\newblock {B}ayesian optimization using deep neural networks.
\newblock In \emph{International Conference on Machine Learning (ICML)},
  2015{\natexlab{b}}.

\bibitem[Sparks et~al.(2015)Sparks, Talwalkar, Haas, Franklin, Jordan, and
  Kraska]{Sparks2015}
E.~Sparks, A.~Talwalkar, D.~Haas, M.~J. Franklin, M.~I. Jordan, and T.~Kraska.
\newblock Automating model search for large scale machine learning,.
\newblock In \emph{ACM Symposium on Cloud Computing (SOCC)}, 2015.

\bibitem[Springenberg et~al.(2016)Springenberg, Klein, Falkner, and
  Hutter]{bohamiann2016}
J.~Springenberg, A.~Klein, S.~Falkner, and F.~Hutter.
\newblock {B}ayesian optimization with robust {B}ayesian neural networks.
\newblock In \emph{Neural Information Processing Systems (NIPS)}, 2016.

\bibitem[Srinivas et~al.(2010)Srinivas, Krause, Seeger, and
  Kakade]{srinivas2010}
N.~Srinivas, A.~Krause, M.~Seeger, and S.~M. Kakade.
\newblock {G}aussian process optimization in the bandit setting: No regret and
  experimental design.
\newblock In \emph{International Conference on Machine Learning (ICML)}, 2010.

\bibitem[Swersky et~al.(2013)Swersky, Snoek, and Adams]{Swersky2013Multi}
K.~Swersky, J.~Snoek, and R.~Adams.
\newblock Multi-task {B}ayesian optimization.
\newblock In \emph{Neural Information Processing Systems (NIPS)}, 2013.

\bibitem[Swersky et~al.(2014)Swersky, Snoek, and Adams]{swersky2014freeze}
K.~Swersky, J.~Snoek, and R.~P. Adams.
\newblock Freeze-thaw {B}ayesian optimization.
\newblock \emph{arXiv preprint arXiv:1406.3896}, 2014.

\bibitem[Thornton et~al.(2013)]{Thornton2013}
C.~Thornton et~al.
\newblock Auto-weka: Combined selection and hyperparameter optimization of
  classification algorithms.
\newblock In \emph{Knowledge Discovery and Data Mining (KDD)}, 2013.

\bibitem[van~der Vaart and van Zanten(2011)]{vander2011}
A.~van~der Vaart and H.~van Zanten.
\newblock Information rates of nonparametric {G}aussian process methods.
\newblock \emph{Journal of Machine Learning Research}, 12:\penalty0 2095--2119,
  2011.

\bibitem[Wang et~al.(2013)Wang, Zoghi, Hutter, Matheson, and
  de~Freitas]{wang2013}
Z.~Wang, M.~Zoghi, F.~Hutter, D.~Matheson, and N.~de~Freitas.
\newblock {B}ayesian optimization in high dimensions via random embeddings.
\newblock In \emph{International Joint Conference on Artificial Intelligence
  (IJCAI)}, 2013.

\bibitem[Wang et~al.(2016)Wang, Zhou, and Jegelka]{wang2016}
Z.~Wang, B.~Zhou, and S.~Jegelka.
\newblock Optimization as estimation with {G}aussian processes in bandit
  settings.
\newblock In \emph{International Conference on Artificial Intelligence and
  Statistics (AISTATS)}, 2016.

\bibitem[Wilson et~al.(2015)Wilson, Dann, and Nickisch]{wilson2015}
A.~G. Wilson, C.~Dann, and H.~Nickisch.
\newblock Thoughts on massively scalable {G}aussian processes.
\newblock \emph{arXiv:1511.01870}, 2015.

\end{thebibliography}
\end{document}